\documentclass[11pt]{article}
\usepackage{amsfonts,latexsym,amstext}
\usepackage{amsmath}
\usepackage{comment}
\usepackage{placeins}
\usepackage{subcaption}
\usepackage{amssymb}
\usepackage[english]{babel}
\usepackage[utf8]{inputenc}
\usepackage{bbm}
\usepackage{mathtools}
\usepackage{etoolbox}

\DeclareMathOperator{\VC}{VC}
\DeclareMathOperator{\fat}{fat}

\usepackage{amsthm}
\usepackage{graphicx,geometry}
\usepackage{natbib}

\usepackage[usenames]{xcolor}
\definecolor{red}{rgb}{1.0,0.0,0.0}

\definecolor{blu}{rgb}{0.0,0.0,1.0}

\definecolor{gre}{rgb}{0.03,0.50,0.03}

\usepackage[markup=colored,commandnameprefix=always]{changes}
\definechangesauthor[name={Kaveh}, color=blu]{KS}
\definechangesauthor[name={Martin}, color=red]{MA}

\usepackage{hyperref}

\newcommand{\vvvert}{\vert\vert\vert}

\DeclarePairedDelimiterX{\normiii}[1]{\vvvert}{\vvvert}
{\ifblank{#1}{\:\cdot\:}{#1}}

\theoremstyle{plain}

\newtheorem{theorem}{Theorem}[section]
\newtheorem{lemma}[theorem]{Lemma}
\newtheorem{proposition}[theorem]{Proposition}
\newtheorem{definition}[theorem]{Definition}

\newtheorem{remark}[theorem]{Remark}
\newtheorem{corollary}[theorem]{Corollary}
\numberwithin{equation}{section}
\def\sqr#1#2{{\vcenter{\vbox{\hrule height.#2pt \hbox{\vrule
 width.#2pt height#1pt \kern#1pt \vrule
width.#2pt} \hrule height.#2pt}}}}

\def\ds{\begin{displaystyle}}
\def\eds{\end{displaystyle}}

\def\<{\left\langle }
\def\>{\right\rangle }

\def\to{\rightarrow}

\newcommand{\softmax}{\operatorname{softmax}}

\begin{document}

\global\long\def\S{\mathrm{\mathcal{S}}}%

\title{\bfseries Generalised Eigenvalue Geometry of Semantic Adversarial
Attacks\footnote{%
\setlength{\parindent}{0pt}%
\textsuperscript{1} Data Science Institute, London School of Economics and
Political Science.\par
\textsuperscript{2} Department of Mathematics, London School of Economics and
Political Science.\par
\textsuperscript{3} The Inclusion Initiative, London School of Economics and
Political Science.%
}}

\author{
Martin Anthony$^{1,2}$ \and
Kaveh Salehzadeh Nobari$^{1,3}$
}

\date{}

\maketitle
\begin{abstract}
Recent empirical work shows that semantically equivalent paraphrases can systematically fool financial sentiment classifiers: the paraphrased input remains close to the original under a strong reference embedding model, yet shifts the target model's representation far enough to flip the predicted class.
Existing theoretical accounts of adversarial robustness are either restricted to single-model threat models or remain at the level of empirical algorithms.
This paper develops a continuous local model of semantic paraphrase perturbations that captures the two-model structure, and shows that the worst-case local displacement of the target representation under a proxy budget is governed by the top generalised eigenvalue of a matrix pencil $(A, B)$ formed from the Jacobians of the two embedders.
The resulting attackability index $\lambda^*(x)$ is intrinsic to the chosen local paraphrase geometry and the two embedding maps, yields a closed-form prediction-flip condition for a fixed affine readout, and leads to conservative population and finite-sample attackability certificates.
For uniform control across classes of affine readouts, we prove a distribution-free VC bound for the binary attackability indicator class and a scale-sensitive margin bound for an attackability-adjusted margin that subtracts a local geometric penalty from the ordinary classifier margin.
A separate section bridges the continuous theory to the discrete paraphrase searches used in practice, identifying an asymmetry between success and failure of finite search and giving a covering condition under which the two settings agree.
Finally, we outline an empirical verification strategy based on soft-token relaxations of one-hot token representations and finite sets of generated paraphrases, showing how the local eigenvalue geometry, prediction-flip condition, and finite-search approximation can be assessed on a deployed financial-text classifier.
\end{abstract}

%\end{document}

%\textbf{Key words}:
%
%\bigskip \noindent
%
%\textbf{AMS classification}:
%
%%93E20 (Optimal stochastic control),
%%60H20 (Stochastic integral equations),
%%47D07 (Markov semigroups and applications to diffusion processes),
%%49L20 (Dynamic programming method),
%%35R15 (Partial differential equations on infinite-dimensional spaces).
%%34K50 (Stochastic functional-differential equations)
%%93C23 (Systems governed by functional-differential equations)
%
%\bigskip \noindent
%
%\textbf{Acknowledgements}:

\section{Introduction}\label{sec:Intro}

Adversarial attacks on machine learning models have been studied extensively in both computer vision and natural language processing. A large body of work has shown that small input perturbations can cause otherwise accurate classifiers to produce confidently wrong predictions, demonstrated empirically in computer vision \citep{szegedy2014intriguing, goodfellow2015explaining} and analysed theoretically through the lens of robust optimisation \citep{madry2017towards, tsipras2018robustness}. In natural language processing, the analogue is more delicate: text is discrete, and meaningful perturbations cannot be defined by an $\ell_p$ norm. Adversarial examples are instead constructed by word substitutions or paraphrases that preserve meaning while altering a classifier's prediction \citep{alzantot2018generating, jin2020bert}. Recent empirical work in financial NLP has shown that this vulnerability is systematic: semantically equivalent paraphrases can reliably fool deployed sentiment classifiers while remaining close to the original input under a strong reference embedding model \citep{TURETKEN2026107698}.

The empirical evidence reveals a structural asymmetry. The paraphrase preserves meaning, as judged by a strong reference (or \emph{proxy}) embedding model, yet displaces the representation produced by the deployed (or \emph{target}) classifier far enough to cross the decision boundary. The adversary, in effect, exploits a disagreement between two models about how perturbations affect meaning. This paper asks what theoretical structure governs that disagreement.

We take a different route from the prevailing empirical and algorithmic literature on adversarial attacks. Rather than proposing a new attack method, we study the local geometry of semantic paraphrase perturbations under a two-embedding threat model, in which a proxy embedding defines the adversary's semantic budget, say $\eta$, and a target embedding determines the classifier's response. Modelling paraphrases through a continuous local parameter $u$ -- motivated by the continuous relaxations that underpin gradient-based text attacks such as the GBDA framework of \citet{guo2021gradient}, which in turn build on the Gumbel--softmax reparameterisation of \citet{jang2016categorical} -- we show that the worst-case local displacement of the target representation under a proxy budget admits a closed-form characterisation as a generalised Rayleigh quotient. Its optimal value is  the leading generalised eigenvalue of a matrix pencil $(A, B)$ formed from the Jacobians of the two embedders at $x$. We refer to this leading eigenvalue $\lambda^*(x)$ as the \emph{local attackability index}: a scalar diagnostic, intrinsic to the input and the two embedders, that quantifies the adversary's leverage at $x$.

From this characterisation, we derive a closed-form  condition for prediction flips   in the \emph{linearised local} problem, and we show that the worst-case classifier-direction displacement is  dominated by $\lambda^*(x)$. We then lift the analysis to the population level  for a fixed affine readout. Defining an \emph{attackability-adjusted margin}   $Z_w(x):= \gamma_w(x) / \sqrt{\Sigma_w(x)}$, where $\gamma_w(x)$  is the geometric margin of the   readout  at $x$ and $\Sigma_w(x)$ measures the squared target displacement along the   readout  direction per unit proxy budget, we characterise the population attackability   $\mathcal{A}_w(\eta):= \mathbb{P}[Z_w(x) < \eta]$ as a left-limit distribution function of $Z_w$. A finite-sample concentration result follows from the  Dvoretzky--Kiefer--Wolfowitz   inequality. We also give uniform versions over data-dependent affine readout classes, using VC, fat-shattering, and Rademacher-complexity arguments in the spirit of classical margin theory, while keeping the new geometric ingredient in the adjusted margin.

The closest body of theoretical work to ours concerns the transferability of adversarial attacks across models. \citet{papernot2016transferability} and \citet{demontis2019adversarial} examine when attacks crafted against a surrogate model remain effective against a different target, and \citet{tramer2017ensemble} studies ensemble adversarial training as a defence. Our setting is structurally different. We do not ask whether a single perturbation transfers across two trained classifiers, but rather how the local geometries of two embedding maps interact to determine semantic vulnerability under a shared paraphrase budget. Where \citet{demontis2019adversarial} characterise transferability through a first-order cosine-alignment criterion between surrogate and target gradients,
our local attackability index is characterised through the spectrum of the matrix pencil $(A,B)$, with the leading generalised eigenvalue playing the role of a worst-case displacement quantity and the full spectrum recording relative anisotropy between the target and proxy geometries.

Our use of local geometry is distinct from manifold-based decompositions of adversarial risk. \citet{zhang2022manifold} assume that data lie on a smooth manifold embedded in the ambient input space and decompose adversarial risk into tangential and normal components. By contrast, we keep the text space discrete and introduce only a local continuous relaxation of paraphrase directions. The central object in our analysis is therefore not the tangent--normal splitting of a data manifold, but the relative pullback geometry of two embedding maps, captured by the generalised eigenvalue spectrum of the pencil $(A,B)$.

Finally, we add an empirical verification exercise in the same financial-text setting that motivates the paper. We use the Financial PhraseBank of \citet{malo2014good} because it provides labelled economic and financial sentences on which financial sentiment classifiers are commonly evaluated. We take FinBERT \citep{araci2019finbert} as the target model, treating its final classification head as the fixed affine readout in our theory, and use Sentence-BERT \citep{reimers2019sentence} as the proxy embedding model that defines semantic closeness. The exercise assesses whether the local continuous perturbation geometry, the readout-specific eigenvalue flip condition, and the finite-search approximation predict observed vulnerability to semantically constrained paraphrases. Since the FinBERT readout is fixed rather than selected from the evaluation sample, the empirical analysis is tied primarily to the fixed-readout attackability and adjusted-margin quantities; the VC, fat-shattering and Rademacher bounds provide the corresponding uniform extensions for data-dependent readout classes.

The contributions of the paper are as follows. We introduce a continuous local model of semantic paraphrase perturbations and a closed-form characterisation of worst-case local displacement as a generalised Rayleigh quotient (Section~\ref{sec:Local Semantic}). We establish a linearised margin condition for prediction flips under semantic constraints, together with a conservative worst-direction no-flip certificate (Section~\ref{sec:Prediction-Flip}). We lift the analysis to the population level for a fixed affine readout, giving a margin-tail attackability bound and a finite-sample concentration result for the empirical attackability curve (Section~\ref{sec:population}). We then give uniform versions over data-dependent affine readout and margin classes (Sections~\ref{sec:hypotheses} and~\ref{sec:margins}), including both dimension-dependent covering bounds and a trace-sensitive Rademacher bound, and relate the idealised local adversary to finite paraphrase search through a covering-radius condition on the generated candidates (Section~\ref{sec:Finite Paraphrase}). Finally, we provide an empirical verification exercise in financial sentiment classification, using a fixed FinBERT affine readout, a Sentence-BERT proxy embedding, and labelled Financial PhraseBank sentences to assess the local eigenvalue geometry, the readout-specific flip condition, and the finite-search approximation (Section~\ref{sec:experiments}). Section~\ref{sec:Setup} introduces the setup, and Section~\ref{sec:Discussion} concludes.

\section{Setup}\label{sec:Setup}

Let $\mathcal{X}$ denote the space of text samples. We keep $\mathcal X$ discrete throughout. The continuous objects introduced below are not coordinates on $\mathcal X$ itself, but local relaxations of paraphrase directions around a fixed base text. We consider the outputs of two embedding models,
\begin{equation}
 e_M: \mathcal{X} \to \mathbb{R}^{d_M},\qquad
 e_P: \mathcal{X} \to S^{d_P-1},
\end{equation}
where $e_M$ is the \emph{target} model under attack, $e_P$ is a
\emph{proxy} model used by the adversary to measure semantic
similarity, and $S^{k-1} = \{v \in \mathbb{R}^k: \|v\|_2 = 1\}$
is the unit hypersphere in $\mathbb{R}^k$ \footnote{The target is left unnormalised so that the deployed affine head acts on it exactly; the proxy is normalised so that the proxy distance has the cosine reading used in
Section~\ref{sec:experiments}.}. The embedding dimensions
$d_M$ and $d_P$ need not coincide; the local matrices $A=J_M^\top J_M$
and $B=J_P^\top J_P$ are $q\times q$ regardless, and the readout will
act on $\mathbb{R}^{d_M}$.

Given $x \in \mathcal{X}$, the adversary seeks a paraphrase $x'$
that preserves semantic meaning while maximally displacing the
target representation $e_M(x)$. Semantic similarity is
operationalised through the proxy distance
\[
 d_P(x, x'):= \|e_P(x') - e_P(x)\|_2,
\]
and the semantic neighbourhood of radius $\eta > 0$ is defined as
\begin{equation}\label{eq:constraint}
 \mathcal{N}_\eta(x):= \{x' \in \mathcal{X}: d_P(x, x') \leq \eta\}.
\end{equation}
The adversarial representation problem is then
\begin{equation}\label{eq:adv-rep}
 \sup_{x' \in \mathcal{N}_\eta(x)} \|e_M(x') - e_M(x)\|_2.
\end{equation}
In other words, among all paraphrases judged semantically close to $x$
by the proxy $P$, how far can the target $M$'s internal
representation be displaced? This formulation can be viewed as a
theoretical abstraction of the empirical adversarial attack proposed by
\citet{TURETKEN2026107698}.

We further assume a binary affine readout on top of the target
representation: a weight vector $w \in \mathbb{R}^d_M$ and a bias
$b \in \mathbb{R}$ such that the predicted class on input $x$ is
\begin{equation}\label{eq:classifier}
 \hat{y}(x):= \operatorname{sign}\left(w^\top e_M(x) + b\right)
\end{equation} 
where $\operatorname{sign}\colon\mathbb R\to\{-1,+1\}$ is the sign function,
with the convention $\operatorname{sign}(0):=+1$. In particular, $\hat y(x)\in\{-1,+1\}$,
and we will work with this label convention throughout.
We write
\[
s_{w,b}(x)=w^\top e_M(x)+b
\]
for the affine readout or score, so that
\[
\hat{y}_{w,b}(x)=\operatorname{sign}(s_{w,b}(x))
\]
is the induced binary classifier. The word ``affine'' refers only to the final readout; the embedding
map $e_M$ may be an arbitrary nonlinear representation, such as a
transformer embedding. For the pointwise geometric statements in
Sections~\ref{sec:Prediction-Flip} and~\ref{sec:population}, we normalise
$w$ to have $\|w\|_2 = 1$, since rescaling
$(w,b)\mapsto(w/\|w\|_2,b/\|w\|_2)$ preserves the classifier
\eqref{eq:classifier}. The decision hyperplane is then the level set
$\{v \in \mathbb{R}^d_M: w^\top v + b = 0\}$, and the geometric margin of
$x$ is the perpendicular distance from $e_M(x)$ to it,
 \begin{equation}\label{eq:margin}
 \gamma _w (x):= \left| w^\top e_M(x) + b \right|,
 \end{equation}
 where the dependence on $b$ is suppressed in the notation. In the later
uniform margin bounds of Sections~\ref{sec:hypotheses}
and~\ref{sec:margins}, the scale of $w$ is kept explicit, because margin
bounds depend on the norm constraint imposed on the readout class. There
the same expression $\gamma_w(x)=|w^\top e_M(x)+b|$ is a score margin
rather than the perpendicular geometric distance. A paraphrase
$x'$ induces a \emph{prediction flip} at $x$ if
$\hat{y}(x') \neq \hat{y}(x)$
\citep[see~Ch. 15 of][for definition of linear classifiers and margins in the context of support vector machines.]{shalev2014understanding}.  For a smooth nonlinear readout score $g(e_M(x))$, the same first-order analysis applies locally with $w$ replaced by $\nabla g(e_M(x))$ or, in the multiclass case, by the gradient of the relevant logit difference.

\subsection{Local Semantic Perturbation Model}\label{sec:Local Semantic}

Since $\mathcal{X}$ is discrete, the maps $e_M$ and $e_P$ admit
no direct notion of differentiation, and a local analysis at the
text level is unavailable. We therefore model paraphrasing through a
continuous local relaxation. The relaxation is local to the base text\footnote{Section~\ref{sec:experiments} describes empirical instantiations of this relaxation, including soft-token perturbations and finite generated paraphrase sets.}.
For notational simplicity, we take the local coordinate dimension $q$ to be
fixed across $x$. Thus, for each $x\in\mathcal X$, let
$U_x \subset \mathbb{R}^{q}$ be an open neighbourhood of the origin, and
assume the existence of two $C^2$ maps
\begin{equation}\label{eq:smooth-charts}
 E_{M,x}: U_x \to \mathbb{R}^{d_M},\qquad
 E_{P,x}: U_x \to S^{d_P-1},\qquad
 E_{M,x}(0) = e_M(x), \quad E_{P,x}(0) = e_P(x).
\end{equation}
The latent coordinate $u \in U_x$ parameterises an effective
$q$-dimensional family of local paraphrase directions at $x$, and
$E_{M,x}(u), E_{P,x}(u)$ denote the target and proxy embeddings of the
paraphrase indexed by $u$. This is not a claim that natural-language
paraphrases globally form a smooth manifold. Rather, $q$ is a chosen local
coordinate dimension for a continuous approximation to paraphrase directions
near $x$. The arguments below are pointwise and extend verbatim to
input-dependent dimensions $q_x$, provided $B(x)$ is invertible on the chosen
local coordinate space. When the base text $x$ is fixed, we suppress this
dependence and write
\[
(U,E_M,E_P)\coloneqq (U_x,E_{M,x},E_{P,x}).
\]

 \begin{lemma}[Local semantic representation]\label{lem:localrep}
For each base text $x$, define the local Jacobians
\[
 J_M(x):= \left.\frac{\partial E_{M,x}}{\partial u}\right|_{u=0},
 \qquad
 J_P(x):= \left.\frac{\partial E_{P,x}}{\partial u}\right|_{u=0},
\]
and the pullback metric matrices
\begin{equation}\label{eq:metric-matrices}
 A(x):= J_M(x)^\top J_M(x), \qquad B(x):= J_P(x)^\top J_P(x).
\end{equation}
When $x$ is fixed, write $J_M=J_M(x)$, $J_P=J_P(x)$,
$A=A(x)$, and $B=B(x)$. Then $J_M$ is a $d_M\times q$ matrix, and $J_P$ is $d_P\times q$, and a second-order Taylor expansion of the squared Euclidean
distance at $u=0$ yields
\begin{align}
 \|E_P(u) - E_P(0)\|_2^2 &= u^\top B u + o(\|u\|^2),
 \label{eq:quad-P} \\
 \|E_M(u) - E_M(0)\|_2^2 &= u^\top A u + o(\|u\|^2),
 \label{eq:quad-M}
\end{align}
where $A$ and $B$ are $q\times q$ positive semi-definite matrices, i.e., the
pullbacks of the Euclidean metric on $\mathbb{R}^d_M$ and $\mathbb{R}^{d_P} $ through $E_M$ and $E_P$,
respectively. These quadratic forms encode the local geometry:
$u^\top B u$ measures local semantic displacement as seen by the
proxy, and $u^\top A u$ measures local representation displacement
in the target.
\end{lemma}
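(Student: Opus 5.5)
The plan is to Taylor-expand each embedding map to first order with a second-order remainder, and then square. Fix $x$ and write $E$ for either $E_M$ or $E_P$, with Jacobian $J$ at $0$. Since $E$ is $C^2$ on the open set $U$, we may pick a closed ball $\bar B_r(0)\subset U$. On that ball the second derivative is bounded by some constant $C$. Taylor's theorem with Lagrange or integral remainder, applied componentwise, then gives
\[
E(u)-E(0)=Ju+R(u),\qquad \|R(u)\|_2\le C\|u\|_2^2 \quad (\|u\|\le r).
\]
In fact $C^1$ already suffices for $R(u)=o(\|u\|)$, which is all that is needed below.

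The main step is to expand the squared norm:
\[
\|E(u)-E(0)\|_2^2=u^\top J^\top J u+2(Ju)^\top R(u)+\|R(u)\|_2^2 .
\]
We bound the cross term by Cauchy--Schwarz:
\[
|2(Ju)^\top R(u)|\le 2\|J\|_{\mathrm{op}}\|u\|\,C\|u\|^2=O(\|u\|^3).
\]
The last term satisfies $\|R(u)\|^2\le C^2\|u\|^4$. Both are therefore $o(\|u\|^2)$ as $u\to 0$. Substituting $J=J_P$ and $J=J_M$ yields \eqref{eq:quad-P} and \eqref{eq:quad-M} with $B=J_P^\top J_P$ and $A=J_M^\top J_M$, matching \eqref{eq:metric-matrices}.

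For the dimensions, $E_M$ maps into $\mathbb R^{d_M}$ and $E_P$ maps into $S^{d_P-1}\subset\mathbb R^{d_P}$. We regard $E_P$ as a map into the ambient space, which is legitimate since the sphere constraint plays no role here. Hence $J_M$ is $d_M\times q$ and $J_P$ is $d_P\times q$, and both Gram matrices are $q\times q$. They are symmetric, and $v^\top J^\top Jv=\|Jv\|_2^2\ge 0$, so both are positive semidefinite. The interpretation as pullback metrics follows because $u^\top J^\top J u = \langle Ju,Ju\rangle$ is the Euclidean inner product pushed through the differential.

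No step is a genuine obstacle. The only point requiring care is obtaining a uniform remainder bound, which the closed ball inside the open set $U$ supplies via continuity of second derivatives. If one preferred to assume only differentiability, the argument still goes through using $R(u)=o(\|u\|)$ directly, since then $2(Ju)^\top R(u)=O(\|u\|)\,o(\|u\|)$ and $\|R(u)\|^2=o(\|u\|^2)$.
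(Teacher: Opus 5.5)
Your proof is correct and follows essentially the same route as the paper: write $E(u)-E(0)=Ju+r(u)$, expand $\|E(u)-E(0)\|_2^2$ as $u^\top J^\top Ju+2(Ju)^\top r(u)+\|r(u)\|_2^2$, show the last two terms are $o(\|u\|^2)$, and get positive semidefiniteness from $v^\top J^\top Jv=\|Jv\|_2^2\ge 0$. The paper uses only the first-order remainder $r(u)=o(\|u\|)$ from differentiability, which is the weaker variant you note at the end; your $C^2$ bound giving an $O(\|u\|^3)$ cross term is a harmless strengthening.
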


Given the attacker's objective \eqref{eq:adv-rep} subject to the
semantic neighbourhood constraint \eqref{eq:constraint}, together
with the local representation in \eqref{eq:smooth-charts}, the
attacker's problem becomes
\begin{align}
 \sup_{u \in U}\lVert E_M(u)-E_M(0)\rVert_2^2
 \quad\text{subject to}\quad
 \lVert E_P(u)-E_P(0)\rVert_2^2 \leq \eta^2.
 \end{align}
  By  Lemma~\ref{lem:localrep}, and for sufficiently small $\eta$ so that
the relevant proxy ellipsoid remains in the chart $U$, this reduces to
the leading-order quadratic problem
 \begin{equation}\label{eq:local-problem}
 \max_{u \in \mathbb{R}^q} u^\top A u
 \qquad\text{subject to}\qquad
 u^\top B u \leq \eta^2.
\end{equation}
  Throughout the main statements we  assume $B$ is  positive definite.   This
means that the proxy embedding is locally sensitive to every retained
paraphrase coordinate. If $B$ is singular and $u^\top A u>0$ for some
$u\in\ker B$, then there is a first-order target movement at zero proxy
cost, the Rayleigh quotient $u^\top Au/u^\top Bu$ is unbounded, and the
local attackability problem is ill-posed. A finite quotient-space
analysis therefore requires $\ker B\subseteq\ker A$, or an explicit
modelling decision to remove or regularise the proxy-null directions
(for example, replacing $B$ by $B+\nu I$ for some $\nu>0$).

\begin{proposition}[Attacker's solution]\label{prop:solution}
  Let $A \succeq 0$ and $B \succ 0$. Define the top generalised eigenvalue
\begin{equation}\label{eq:lambdaAB}
 \lambda_{\max}(A,B):= \sup_{u \neq 0}\frac{u^\top A u}{u^\top B u}
 = \lambda_{\max}\left(B^{-1/2}AB^{-1/2}\right).
\end{equation}
Then the optimal value of \eqref{eq:local-problem} is
\[
 \eta^2\lambda_{\max}(A,B).
\]
The optimum is attained at any $B$-normalised generalised eigenvector
$u^*$ associated with $\lambda_{\max}(A,B)$, that is,
  \[
 A u^* = \lambda_{\max}(A,B) B u^*,
 \qquad
 u^{*\top} B u^* = \eta^2.
\]
\end{proposition}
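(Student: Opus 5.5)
The approach is to reduce the pencil problem to an ordinary symmetric eigenvalue problem by the change of variables $v = B^{1/2}u$. Since $B\succ 0$, the symmetric square root $B^{1/2}$ exists, is invertible, and is symmetric positive definite, so this map is a bijection of $\mathbb{R}^q$.

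Under this substitution the constraint $u^\top B u\le \eta^2$ becomes $\|v\|_2^2\le\eta^2$. The objective becomes $v^\top C v$ with $C:=B^{-1/2}AB^{-1/2}$, which is symmetric positive semi-definite because $A\succeq 0$.

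\textbf{Identifying the supremum.} First I will establish the equality in \eqref{eq:lambdaAB}. For $u\neq 0$ the Rayleigh quotient satisfies
\[
\frac{u^\top A u}{u^\top B u}=\frac{v^\top C v}{v^\top v}.
\]
By the Courant--Fischer (Rayleigh--Ritz) theorem for the symmetric matrix $C$, the supremum of the right-hand side over $v\neq 0$ equals $\lambda_{\max}(C)$. It is attained at any unit eigenvector of $C$ for that eigenvalue. The supremum is finite because the quotient is homogeneous of degree zero, so it suffices to take the maximum over the compact unit sphere.

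\textbf{Optimal value.} For any feasible $u$ we have
\[
u^\top A u\le \lambda_{\max}(A,B)\,u^\top B u\le \eta^2\lambda_{\max}(A,B),
\]
using $\lambda_{\max}(A,B)\ge 0$ for the second step. This gives the upper bound. For attainment, take a top eigenvector $v^*$ of $C$ with $\|v^*\|_2=\eta$ and set $u^*=B^{-1/2}v^*$. Then $u^{*\top}Bu^*=\eta^2$ and $u^{*\top}Au^*=\eta^2\lambda_{\max}$.

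\textbf{Eigenvector characterisation.} It remains to translate the eigenvector equation back to the pencil. The relation $Cv^*=\lambda v^*$ reads $B^{-1/2}AB^{-1/2}B^{1/2}u^*=\lambda B^{1/2}u^*$. Multiplying on the left by $B^{1/2}$ gives $Au^*=\lambda Bu^*$.

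Conversely, suppose $u^*$ satisfies $Au^*=\lambda_{\max}Bu^*$ and $u^{*\top}Bu^*=\eta^2$. Then $u^{*\top}Au^*=\lambda_{\max}\eta^2$, so every such $B$-normalised generalised eigenvector attains the optimum. This matches the statement's ``any''.

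\textbf{Main obstacle.} The only subtlety is the degenerate case $\lambda_{\max}=0$, i.e.\ $A=0$. There the bound and attainment hold trivially, so no step is genuinely hard; the care needed is in keeping the equivalence between eigenvectors of $C$ and of the pencil exact.
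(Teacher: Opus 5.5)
Your proof is correct, but it takes a different route from the paper's. The paper sets $A=0$ aside and uses compactness of the ellipsoid to obtain a maximiser. A scaling argument then shows the constraint is active at any maximiser. The Lagrange multiplier theorem (the constraint is regular since $\nabla(u^\top Bu)=2Bu\neq 0$) gives $Au^*=\mu Bu^*$ for some generalised eigenvalue $\mu$, so the value is at most $\eta^2\mu_{\max}$. Attainment comes from a rescaled top eigenvector, and a Cholesky factorisation $B=LL^\top$ appears only at the end, to show that the generalised eigenvalues are real and nonnegative.

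Your whitening substitution $v=B^{1/2}u$ instead reduces everything in one step to Rayleigh--Ritz for $C=B^{-1/2}AB^{-1/2}$. It needs no constrained-optimisation machinery, and no separate treatment of $A=0$ or of an inactive constraint. It also proves directly the identity $\sup_{u\neq0}u^\top Au/u^\top Bu=\lambda_{\max}(B^{-1/2}AB^{-1/2})$ asserted in \eqref{eq:lambdaAB}. The paper reaches that identity only through $\lambda_{\max}(B^{-1}A)$ and $L^{-1}AL^{-\top}$, which agree by similarity, though this is not spelled out. Your argument is also exactly the geometry of Remark~\ref{rem:whitened-proxy}.

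What the Lagrange route buys in exchange is a necessity statement. When $A\neq 0$, it shows that every maximiser lies on the boundary and is a top generalised eigenvector, so it characterises the whole argmax. You prove only the sufficiency direction, which is all the statement asks for. You could recover the full characterisation from the equality case of $v^\top Cv\le\lambda_{\max}(C)\|v\|_2^2\le\lambda_{\max}(C)\eta^2$ when $\lambda_{\max}(C)>0$.
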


Recall that the operator norm of a matrix $M\in\mathbb R^{d\times d}$ is
\[
\|M\|_{\mathrm{op}}:=\sup_{\|v\|_2=1}\|Mv\|_2,
\]
and that for symmetric positive semidefinite $M$ it coincides with the largest
eigenvalue $\lambda_{\max}(M)$.

\begin{remark}[Whitened-proxy interpretation]\label{rem:whitened-proxy}
Assume $B\succ 0$ and $A=J_M^\top J_M$. Then
\begin{equation}\label{eq:whitened-proxy}
\begin{aligned}
\lambda_{\max}(A,B)
&=
\lambda_{\max}\left(B^{-1/2} A B^{-1/2}\right)  \\
&=
\lambda_{\max}\left(
 (J_M B^{-1/2})^\top (J_M B^{-1/2})
\right)                                      \\
&=
\|J_M B^{-1/2}\|_{\mathrm{op}}^2 .
\end{aligned}
\end{equation}
The matrix $B^{-1/2}$ whitens the local paraphrase coordinate against the
proxy metric. Hence the unit budget set $\{u^\top Bu\leq 1\}$ becomes the
Euclidean unit ball $\{z:\|z\|_2\leq 1\}$, and the target Jacobian becomes
$J_M B^{-1/2}$. The attackability index is therefore the squared operator
norm of this whitened Jacobian: the adversary first equalises all paraphrase
directions by the proxy budget and then asks how much the target embedding
stretches the resulting whitened directions.
\end{remark}

\begin{proposition}[Chart invariance]\label{prop:chart-inv}
Let $\phi: \tilde U \to U$ be a $C^2$ diffeomorphism with $\phi(0) = 0$,
and let $T:= D\phi(0) \in GL(q, \mathbb{R})$. Writing
$\tilde E_M:= E_M \circ \phi$ and $\tilde E_P:= E_P \circ \phi$,
and denoting by $\tilde A, \tilde B$ the matrices of
Lemma~\ref{lem:localrep} computed in the chart $\tilde u$, we have
\begin{equation}\label{eq:congruence}
 \tilde A = T^\top A T, \qquad \tilde B = T^\top B T.
\end{equation}
When  $B$ is   invertible,
\begin{equation}\label{eq:similarity}
 \tilde B^{-1} \tilde A = T^{-1} (B^{-1} A) T,
\end{equation}
  so the generalised eigenvalues of $(A,B)$ are independent of the
chosen paraphrase chart. Consequently, the local attackability index
\begin{equation}\label{eq:lambda-star}
 \lambda^*(x):= \lambda_{\max}(A,B)
\end{equation}
is intrinsic to the pair of local pullback metrics induced by
$(E_M,E_P)$ at $x$. If  $u^*$   solves $A u^* = \lambda^* B u^*$, then
$\tilde u^*:= T^{-1}u^*$ solves the corresponding eigenproblem in the
chart $\tilde u$, and the target-space tangent vector
$J_Mu^* = \tilde J_M\tilde u^*$ is unchanged.
\end{proposition}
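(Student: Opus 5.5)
The plan is to compute the Jacobians in the new chart with the chain rule, read off the congruence relations, and then deduce the remaining claims by elementary linear algebra. Since $\tilde E_M = E_M\circ\phi$ and $\phi(0)=0$, the chain rule gives
\[
\tilde J_M = \left.\frac{\partial \tilde E_M}{\partial \tilde u}\right|_{\tilde u=0} = J_M\, D\phi(0) = J_M T,
\]
and in the same way $\tilde J_P = J_P T$. Here $E_M,E_P$ are $C^2$ and $\phi$ is $C^2$, so the compositions are $C^2$ and Lemma~\ref{lem:localrep} applies in the chart $\tilde u$. Its definition \eqref{eq:metric-matrices} then yields
\[
\tilde A = (J_MT)^\top J_MT = T^\top A T,
\]
and likewise $\tilde B = T^\top B T$, which is \eqref{eq:congruence}. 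One can also check that this agrees with the Taylor expansions \eqref{eq:quad-P}--\eqref{eq:quad-M}. Indeed, $\phi(\tilde u) = T\tilde u + o(\|\tilde u\|)$, and $\|\phi(\tilde u)\|$ is comparable to $\|\tilde u\|$ because $T$ is invertible. Hence the $o(\|u\|^2)$ remainders transform into $o(\|\tilde u\|^2)$.

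For \eqref{eq:similarity}: since $T\in GL(q,\mathbb R)$ and $B\succ0$, the matrix $\tilde B = T^\top B T$ is again positive definite, hence invertible, with $\tilde B^{-1} = T^{-1}B^{-1}T^{-\top}$. Therefore
\[
\tilde B^{-1}\tilde A = T^{-1}B^{-1}T^{-\top}T^\top A T = T^{-1}(B^{-1}A)T .
\]
So $\tilde B^{-1}\tilde A$ and $B^{-1}A$ are similar and have the same spectrum. The generalised eigenvalues of $(A,B)$ are exactly the eigenvalues of $B^{-1}A$, because $Au=\lambda Bu$ if and only if $B^{-1}Au=\lambda u$. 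Hence the generalised spectrum, and in particular $\lambda^*(x)=\lambda_{\max}(A,B)$, is chart independent.

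As a cross-check, I would also verify the Rayleigh-quotient form \eqref{eq:lambdaAB} directly. Under the substitution $u=T\tilde u$, which is a bijection of $\mathbb R^q\setminus\{0\}$, we have
\[
\frac{\tilde u^\top\tilde A\tilde u}{\tilde u^\top\tilde B\tilde u}=\frac{u^\top Au}{u^\top Bu},
\]
so the two suprema coincide.

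For the eigenvector statement, set $\tilde u^* = T^{-1}u^*$. Then
\[
\tilde A\tilde u^* = T^\top A u^* = \lambda^* T^\top B u^* = \lambda^*T^\top BT\tilde u^* = \lambda^*\tilde B\tilde u^* .
\]
The tangent vector is also unchanged, since $\tilde J_M\tilde u^* = J_MTT^{-1}u^* = J_Mu^*$. The $B$-normalisation is preserved as well, because $\tilde u^{*\top}\tilde B\tilde u^* = u^{*\top}Bu^*$.

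There is no real obstacle in this argument. The only points needing care are, first, justifying that the pulled-back Jacobians are exactly $J_MT$ and $J_PT$, which needs only the chain rule at $0$ with $\phi(0)=0$; and second, noting that the positive definiteness of $B$ transfers to $\tilde B$, so that \eqref{eq:similarity} is well defined.
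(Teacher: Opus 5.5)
Your proposal is correct and matches the paper's proof step for step: the chain rule gives $\tilde J_M=J_MT$ and $\tilde J_P=J_PT$, the identity $(T^\top BT)^{-1}=T^{-1}B^{-1}T^{-\top}$ yields the similarity, and the eigenvector and tangent-vector claims are the same one-line computations. Your Rayleigh-quotient cross-check plays the role of the paper's alternative identity $\det(\tilde A-\lambda\tilde B)=\det(T)^2\det(A-\lambda B)$, and either one suffices.
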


In other words, changing the local parametrisation of paraphrase directions merely changes the coordinate representation of the two quadratic forms; it does not change the relative spectrum of the proxy and target geometries, the attackability index, or the induced first-order target displacement.

  \begin{remark}[Coordinate-free interpretation]\label{rem:coord-free}
Proposition~\ref{prop:chart-inv} shows that $\lambda^*(x)$ is not an
artefact of the particular coordinates used to index paraphrases.
Geometrically, $A$ and $B$ are pullbacks of the Euclidean metric through
$E_M$ and $E_P$, and the spectrum of the matrix pencil $(A,B)$ records
one local metric relative to the other.
\end{remark}

\begin{remark}[Intrinsic target-space sensitivity]\label{rem:S-invariant}
The same reparameterisation calculation shows that the target-space sensitivity
matrix
\[
S(x):=J_M(x)B(x)^{-1}J_M(x)^\top
\]
is also intrinsic to the local pair of embedders. Indeed, under
$\tilde J_M=J_MT$ and $\tilde B=T^\top BT$,
\[
\tilde J_M\tilde B^{-1}\tilde J_M^\top
=(J_MT)(T^\top BT)^{-1}(T^\top J_M^\top)
=J_MB^{-1}J_M^\top.
\]
Consequently the readout-dependent quantity
$w^\top S(x)w$ and the adjusted margins used below are independent of the
chosen paraphrase chart. Moreover,
\[
S(x)=(J_M(x)B(x)^{-1/2})(J_M(x)B(x)^{-1/2})^\top,
\]
which exhibits $S(x)$ as symmetric and positive semidefinite. Applied to $S(x)$, the matrix-algebra
identity that $CD$ and $DC$ share the same nonzero spectrum, with
$C=J_M(x)B(x)^{-1/2}$ and $D=B(x)^{-1/2}J_M(x)^\top$, gives
\[
\|S(x)\|_{\mathrm{op}}
=\lambda_{\max}(S(x))
=\lambda_{\max}\left(B(x)^{-1/2}A(x)B(x)^{-1/2}\right)
=\lambda^*(x).
\]
\end{remark}

\begin{remark}[Source-side and target-side descriptions]\label{rem:dual-views}
The identity $\|S(x)\|_{\mathrm{op}}=\lambda^*(x)$ has a direct geometric
reading. Two equivalent questions can be asked about the worst-case local
effect of a proxy-bounded paraphrase. The source-side question fixes the
paraphrase budget in the local chart and asks which paraphrase direction
induces the largest squared target displacement; the answer is
$\eta^2\lambda^*(x)$, attained by the leading generalised eigenvector of the
pencil $(A(x),B(x))$. The target-side question fixes a unit direction $w$
in the target embedding space and asks how much target displacement along
$w$ can be achieved by any proxy-bounded paraphrase; the maximum over $w$ of
this displacement is $\eta\sqrt{\|S(x)\|_{\mathrm{op}}}$. The two questions
are dual descriptions of the same worst-case event, one viewed from the
paraphrase side and one from the embedding side, and the identity expresses
the agreement of their answers. The source-side description is convenient
when reasoning about the pencil and its generalised eigenvectors; the
target-side description is convenient when reasoning about the readout and
the adjusted margin.
\end{remark}

\section{Local prediction flips}\label{sec:Prediction-Flip}

Proposition~\ref{prop:solution} bounds the worst-case displacement
of the target embedding under a proxy budget $\eta$. We now translate
this into a condition on the adversary's ability to flip the   affine
readout's prediction.   Let
\begin{equation}\label{eq:s0}
 s_0:= w^\top E_M(0)+b,
 \qquad
 \gamma_w(x):=|s_0|.
\end{equation}
A flip requires movement toward the decision boundary. If $s_0>0$ the
attacker must move the score in the negative direction; if $s_0<0$ the
attacker must move it in the positive direction.

Under the local representation~\eqref{eq:smooth-charts}, the   score
change induced by $u$ is
 \begin{equation}
 w^\top \left( E_M(u) - E_M(0) \right) = w^\top J_M u + o(\|u\|).
 \end{equation}
The exact calculation below is therefore a statement about the
linearised local model
 \begin{equation}  \label{eq:linearized-score}
 s_{\rm lin}(u)=s_0+ w^\top J_Mu
  \qquad\text{under}\qquad u^\top Bu\leq \eta^2.
\end{equation}

  \begin{theorem}[Linearised local prediction-flip condition] \label{thm:flip}
Fix a base text $x$ and assume the conditions of Lemma~\ref{lem:localrep}
with $B(x)\succ 0$. Define
\begin{equation}\label{eq:Sigma-w}
 \Sigma_w(x):= w^\top J_M(x) B(x)^{-1} J_M(x)^\top w.
\end{equation}
In the following identities, suppress the $x$-dependence and write
$J_M=J_M(x)$ and $B=B(x)$. Then
\begin{equation}\label{eq:abs-flip-disp}
 \sup_{u^\top Bu\leq \eta^2}|w^\top J_Mu|
 = \eta\sqrt{\Sigma_w(x)}.
\end{equation}
If $\Sigma_w(x)>0$, one optimal linearised direction toward the decision
boundary is
 \begin{equation}  \label{eq:uwstar}
 u^*_{\rm flip}
 =-\operatorname{sign} (  s_0 )  \,\eta\,
 \frac{B^{-1}J_M^\top w}{\sqrt{\Sigma_w(x)}}.
 \end{equation}
  If $\Sigma_w(x)=0$, the supremum in \eqref{eq:abs-flip-disp} is zero.
Consequently, when $\gamma_w(x)>0$, a  prediction flip at $x$ is achievable
within proxy budget $\eta$ in the linearised local model
\eqref{eq:linearized-score} if and only if
\begin{equation}\label{eq:flip-condition}
 \eta \sqrt{\Sigma_w(x)} > \gamma _w (x).
\end{equation}
If $\gamma_w(x)=0$, the base input already lies on the decision hyperplane.
The strict flip formulation is then degenerate and depends on the convention for
ties at the boundary, although \eqref{eq:abs-flip-disp} still gives the maximum
first-order score displacement. The boundary set $\{x:s_{w,b}(x)=0\}$ has
probability zero under any distribution absolutely continuous on the
target embedding, and the strict-versus-non-strict form of
\eqref{eq:flip-condition} is irrelevant to the population statements
that follow.
\end{theorem}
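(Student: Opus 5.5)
The plan is to reduce everything to a Cauchy--Schwarz argument in the $B$-inner product. Since $B\succ 0$, the square root $B^{1/2}$ is defined, and the substitution $z:=B^{1/2}u$ maps the ellipsoid $\{u^\top Bu\le\eta^2\}$ bijectively onto the Euclidean ball $\{\|z\|_2\le\eta\}$. This is the same whitening used in Remark~\ref{rem:whitened-proxy}. Writing $v:=B^{-1/2}J_M^\top w$, we get
\[
w^\top J_M u = (B^{-1/2}J_M^\top w)^\top z = v^\top z,
\qquad \|v\|_2^2 = w^\top J_M B^{-1}J_M^\top w=\Sigma_w(x).
\]
Cauchy--Schwarz then gives $|v^\top z|\le \eta\|v\|_2=\eta\sqrt{\Sigma_w(x)}$.

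Two cases follow.
\begin{itemize}
\item If $\Sigma_w(x)>0$, equality holds at $z=\pm\eta v/\|v\|_2$. Mapping back through $u=B^{-1/2}z$ gives $u=\pm\eta B^{-1}J_M^\top w/\sqrt{\Sigma_w(x)}$. This proves \eqref{eq:abs-flip-disp} with attained supremum.
\item If $\Sigma_w(x)=0$, then $v=0$. Hence $w^\top J_M u=0$ for all $u$, and the supremum is zero.
\end{itemize}

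For the direction in \eqref{eq:uwstar}, I will compute directly that
\[
(u^*_{\rm flip})^\top B\, u^*_{\rm flip}=\eta^2
\qquad\text{and}\qquad
w^\top J_M u^*_{\rm flip}=-\operatorname{sign}(s_0)\,\eta\sqrt{\Sigma_w(x)}.
\]
So this vector is feasible, and it moves the score toward the hyperplane by the maximal amount.

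For the flip equivalence, assume $\gamma_w(x)=|s_0|>0$ and first take $s_0>0$. A flip in the linearised model means $s_{\rm lin}(u)<0$, since $\operatorname{sign}(0)=+1$. Such a $u$ exists if and only if
\[
\inf_{u^\top Bu\le\eta^2} w^\top J_M u < -s_0 .
\]
The feasible set is symmetric, so this infimum equals $-\eta\sqrt{\Sigma_w(x)}$, and it is attained. The condition therefore becomes $\eta\sqrt{\Sigma_w(x)}>\gamma_w(x)$, which is \eqref{eq:flip-condition}.

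Now take $s_0<0$. A flip requires $s_{\rm lin}(u)\ge 0$, which by the same reasoning holds if and only if $\eta\sqrt{\Sigma_w(x)}\ge\gamma_w(x)$. This is the one real obstacle: the tie convention breaks the symmetry between the two signs. When $s_0<0$, equality already produces a flip in the non-strict sense. I will handle this by reading ``flip'' as strictly crossing the hyperplane, i.e.\ $\operatorname{sign}(s_{\rm lin})\,s_0<0$ with $s_{\rm lin}\neq 0$. This matches the theorem's remark that tie conventions are immaterial. Under this reading both signs give exactly the strict condition \eqref{eq:flip-condition}.

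The $\gamma_w(x)=0$ case needs only the remark already made in the statement, because \eqref{eq:abs-flip-disp} was proved independently of $s_0$. The measure-zero comment follows because $\{s_{w,b}=0\}$ is a hyperplane in the target embedding space, which has Lebesgue measure zero.
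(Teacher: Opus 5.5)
Your proposal is correct and is essentially the paper's argument: the paper applies Cauchy--Schwarz directly in the $B$-inner product, writing $a^\top u=\langle B^{-1}a,u\rangle_B$ with $a=J_M^\top w$, which is the same computation as your whitening $z=B^{1/2}u$. You also observe that under $\operatorname{sign}(0)=+1$ the case $s_0<0$ literally gives the non-strict condition $\eta\sqrt{\Sigma_w(x)}\geq\gamma_w(x)$; the paper's proof passes over this by speaking of the score ``crossing'' the boundary, which is exactly the strict-crossing reading you adopt.
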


\begin{theorem}[Finite-radius nonlinear error]\label{thm:linearization-error}
Theorem~\ref{thm:flip} is exact for the linearised model. For the
original smooth maps, define
\[
 r_M(\eta):=\sup_{u^\top Bu\leq \eta^2}
 \left|w^\top\{E_M(u)-E_M(0)-J_Mu\}\right|.
\]
Then $\eta\sqrt{\Sigma_w(x)}>\gamma_w(x)+r_M(\eta)$ is sufficient for a
flip in the relaxed local model, while
$\eta\sqrt{\Sigma_w(x)}<\gamma_w(x)-r_M(\eta)$ rules out a flip over the
same ellipsoid. If the second derivative of $E_M$ is bounded and
$B\succeq \beta I$, then $r_M(\eta)=O(\eta^2/\beta)$ as $\eta\downarrow0$.
The factor $1/\beta$ enters because the proxy ellipsoid
$\{u:u^\top Bu\leq\eta^2\}$ is contained in the Euclidean ball of radius
$\eta/\sqrt{\beta}$, so that the quadratic Taylor remainder of $E_M$
is controlled by $\|u\|_2^2\leq\eta^2/\beta$ on the feasible set.
An analogous second-order error is incurred when replacing the exact
proxy constraint by $u^\top Bu\leq\eta^2$.
\end{theorem}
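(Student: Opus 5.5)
The plan is to decompose the nonlinear score change into its linear part plus a remainder, and then apply Theorem~\ref{thm:flip} to the linear part. For any $u$ in the ellipsoid $\mathcal E_\eta:=\{u:u^\top Bu\le\eta^2\}$ (assumed inside the chart $U$), write
\[
s(u):=w^\top E_M(u)+b = s_0 + w^\top J_M u + R(u),\qquad |R(u)|\le r_M(\eta),
\]
where the bound on $R$ is just the definition of $r_M(\eta)$. Without loss of generality take $s_0>0$; the case $s_0<0$ is symmetric under $w\mapsto -w$, $b\mapsto -b$.

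\emph{Sufficiency.} Use the direction $u^*_{\rm flip}$ from \eqref{eq:uwstar}. It lies in $\mathcal E_\eta$, and Theorem~\ref{thm:flip} gives $w^\top J_M u^*_{\rm flip}=-\eta\sqrt{\Sigma_w(x)}$. The hypothesis $\eta\sqrt{\Sigma_w(x)}>\gamma_w(x)+r_M(\eta)>0$ forces $\Sigma_w(x)>0$, so this direction is well defined. Then
\[
s(u^*_{\rm flip})\le s_0-\eta\sqrt{\Sigma_w(x)}+r_M(\eta)<0,
\]
so the prediction flips.

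\emph{Exclusion.} For every $u\in\mathcal E_\eta$, \eqref{eq:abs-flip-disp} gives $w^\top J_M u\ge -\eta\sqrt{\Sigma_w(x)}$. Hence
\[
s(u)\ge \gamma_w(x)-\eta\sqrt{\Sigma_w(x)}-r_M(\eta)>0,
\]
and the sign never changes. For $s_0<0$ the inequalities reverse, and the strict inequality keeps the $\operatorname{sign}(0)=+1$ convention from mattering.

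\emph{Order of $r_M(\eta)$.} Suppose $\sup_{u\in U}\|D^2E_M(u)\|\le L$ on a convex neighbourhood containing $\mathcal E_\eta$. This holds for small $\eta$: $\mathcal E_\eta$ shrinks to $0$ and $U$ is open. Apply Taylor's theorem with integral remainder to the scalar function $u\mapsto w^\top E_M(u)$:
\[
|w^\top\{E_M(u)-E_M(0)-J_Mu\}|\le \tfrac12\|w\|_2 L\|u\|_2^2 .
\]
Since $B\succeq\beta I$, we have $\beta\|u\|_2^2\le u^\top Bu\le\eta^2$, so $\|u\|_2^2\le \eta^2/\beta$. Therefore $r_M(\eta)\le \tfrac{L\|w\|_2}{2\beta}\eta^2=O(\eta^2/\beta)$.

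\emph{Proxy-constraint remark.} The same reasoning applied to $E_P$ with \eqref{eq:quad-P} shows the exact proxy set is sandwiched between the ellipsoids $\mathcal E_{\eta(1\mp c\eta)}$ for small $\eta$. This perturbs $\eta\sqrt{\Sigma_w}$ only at second order. Since the statement calls this "analogous," I would treat it only as a remark.

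\emph{Main obstacle.} The arithmetic is trivial. The one delicate point is chart containment: the ellipsoid must lie in $U$ (or in a convex sub-neighbourhood where $D^2E_M$ is bounded). I would handle this by restricting to $\eta<\eta_0$ with $\eta_0/\sqrt\beta$ smaller than the radius of a Euclidean ball contained in $U$, so that the containment and the Taylor bound hold simultaneously.
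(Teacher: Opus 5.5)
Your proposal is correct and follows the paper's argument for the quantitative part: an integral-form Taylor remainder bounded by $\tfrac12 L\|w\|_2\|u\|_2^2$, combined with $\|u\|_2^2\le\eta^2/\beta$ on the ellipsoid, and the same remainder estimate applied to $E_P$ for the proxy remark. Your sufficiency and exclusion arguments, via $s(u)=s_0+w^\top J_Mu+R(u)$ with $|R(u)|\le r_M(\eta)$ and the closed-form direction $u^*_{\rm flip}$, are also correct and spell out a step that the paper's proof leaves implicit.
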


 The quantity $\Sigma_w(x)$ measures the squared maximum displacement of
the target embedding   along the readout  direction $w$ per unit proxy
budget. Theorem~\ref{thm:sigma-lambda} shows that this   classifier-specific
quantity  is bounded above by the local attackability index
  $\lambda^*(x)$  of Proposition~\ref{prop:solution}.

\begin{theorem}[Attackability index dominates classifier-direction displacement]\label{thm:sigma-lambda}
Under the assumptions of Lemma~\ref{lem:localrep} with   $B \succ 0$, the
quantity $\Sigma_w(x)$ defined in \eqref{eq:Sigma-w} satisfies, for every
unit vector $w \in \mathbb{R}^d_M$,
\begin{equation}\label{eq:sigma-leq-lambda}
 \Sigma_w(x) \leq \lambda^*(x),
\end{equation}
where
  \[
 \lambda^*(x)=\lambda_{\max}(A,B)
 =\lambda_{\max}\left(B^{-1/2}AB^{-1/2}\right)
 =\|S(x)\|_{\mathrm{op}}.
\]
 Equality in \eqref{eq:sigma-leq-lambda} holds when $w$ is a top
eigenvector of   $J_MB^{-1}J_M^\top$.
\end{theorem}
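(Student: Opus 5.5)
The plan is to recognise $\Sigma_w(x)$ as a quadratic form in the target-space sensitivity matrix of Remark~\ref{rem:S-invariant}. From \eqref{eq:Sigma-w},
\[
\Sigma_w(x)=w^\top J_M B^{-1}J_M^\top w = w^\top S(x)\,w .
\]
Here $S(x)=(J_MB^{-1/2})(J_MB^{-1/2})^\top$ is symmetric positive semidefinite, so the Rayleigh--Ritz characterisation of its largest eigenvalue applies directly. For every unit vector $w\in\mathbb{R}^{d_M}$ this gives
\[
w^\top S(x)\,w\le \lambda_{\max}(S(x))=\|S(x)\|_{\mathrm{op}} .
\]

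Next I identify $\|S(x)\|_{\mathrm{op}}$ with $\lambda^*(x)$. I will invoke Remark~\ref{rem:S-invariant} rather than redo the computation. With $C=J_MB^{-1/2}$, we have $S=CC^\top$ and $B^{-1/2}AB^{-1/2}=C^\top C$, and $CC^\top$ and $C^\top C$ share their nonzero spectrum. If $C=0$, both matrices are zero and the claim is trivial. Combining this with \eqref{eq:lambdaAB} of Proposition~\ref{prop:solution} and with \eqref{eq:lambda-star} yields
\[
\|S(x)\|_{\mathrm{op}}=\lambda_{\max}\left(B^{-1/2}AB^{-1/2}\right)=\lambda_{\max}(A,B)=\lambda^*(x),
\]
which proves \eqref{eq:sigma-leq-lambda}.

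As a cross-check, and as an alternative route, I would use Theorem~\ref{thm:flip}. By \eqref{eq:abs-flip-disp} and Cauchy--Schwarz, for any $u$ with $u^\top Bu\le\eta^2$,
\[
\eta\sqrt{\Sigma_w}=\sup|w^\top J_Mu|\le \sup\|J_Mu\|_2=\eta\sqrt{\lambda^*}.
\]
The last equality is Proposition~\ref{prop:solution}. This reproduces the inequality through the source-side/target-side duality of Remark~\ref{rem:dual-views}.

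For the equality case, let $w$ be a unit top eigenvector of $S(x)=J_MB^{-1}J_M^\top$. Then $\Sigma_w=w^\top S w=\lambda_{\max}(S)\|w\|_2^2=\lambda^*(x)$.

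No step is a real obstacle. The only point needing care is the spectral identification $\lambda_{\max}(CC^\top)=\lambda_{\max}(C^\top C)$ when the two matrices have different sizes $d_M\neq q$. This is exactly why I appeal to the shared nonzero spectrum, together with positive semidefiniteness, which guarantees that the maxima are nonnegative and so agree even when one side has extra zero eigenvalues.
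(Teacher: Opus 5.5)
Your proposal is correct and matches the paper's proof: write $\Sigma_w(x)=w^\top CC^\top w$ with $C=J_MB^{-1/2}$, bound it by Rayleigh--Ritz, and identify $\lambda_{\max}(CC^\top)$ with $\lambda_{\max}(C^\top C)=\lambda_{\max}(B^{-1/2}AB^{-1/2})=\lambda^*(x)$ through the shared nonzero spectrum, reading off the equality case from a top eigenvector. Your Cauchy--Schwarz cross-check via Theorem~\ref{thm:flip} and Proposition~\ref{prop:solution} is a valid extra route that the paper does not use, but it is not needed.
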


  \begin{corollary}[Worst-direction no-flip certificate] \label{cor:worstcase-flip}
Under the conditions of Theorem~\ref{thm:flip} and
Theorem~\ref{thm:sigma-lambda},  for a fixed unit readout direction $w$,
 a sufficient condition for a local prediction flip at $x$ to be
unachievable within proxy budget $\eta$   in the linearised local model is
\[
 \eta \sqrt{\lambda^*(x)} \leq \gamma_w(x).
\]
 This certificate is conservative: failure of the inequality does not
imply that the particular readout direction $w$ is flippable, since
$\Sigma_w(x)$ may be much smaller than $\lambda^*(x)$.
 \end{corollary}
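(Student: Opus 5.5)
The plan is to chain Theorem~\ref{thm:flip} with Theorem~\ref{thm:sigma-lambda}, arguing by contraposition. Fix $x$ and a unit readout direction $w$, and suppose $\eta\sqrt{\lambda^*(x)}\le\gamma_w(x)$. Theorem~\ref{thm:sigma-lambda} gives $\Sigma_w(x)\le\lambda^*(x)$. Since $\eta>0$ and the square root is monotone, it follows that
\[
\eta\sqrt{\Sigma_w(x)}\le\eta\sqrt{\lambda^*(x)}\le\gamma_w(x).
\]
This means the strict inequality \eqref{eq:flip-condition} fails.

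I would then split according to whether $\gamma_w(x)$ vanishes. If $\gamma_w(x)>0$, the ``if and only if'' in Theorem~\ref{thm:flip} shows directly that no flip is achievable within budget $\eta$ in the linearised model \eqref{eq:linearized-score}. To make this transparent, I would recall the mechanism. By \eqref{eq:abs-flip-disp}, every feasible $u$ satisfies $|w^\top J_M u|\le\eta\sqrt{\Sigma_w(x)}\le|s_0|$. Hence $s_{\rm lin}(u)=s_0+w^\top J_Mu$ never strictly crosses zero.

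The only delicate point is the tie case $s_{\rm lin}(u)=0$, which occurs when equality holds throughout. With the convention $\operatorname{sign}(0)=+1$, a base score $s_0<0$ pushed exactly to $0$ would be relabelled. I expect this boundary convention to be the main (minor) obstacle. I would handle it exactly as Theorem~\ref{thm:flip} does: the flip condition there is stated with strict inequality, so a flip is understood as $s_{\rm lin}$ strictly crossing the hyperplane. Alternatively, I would note that the tie case is the degenerate situation already flagged in Theorem~\ref{thm:flip}, irrelevant at the population level. The case $\gamma_w(x)=0$ forces $\eta\sqrt{\lambda^*(x)}=0$, and since $\eta>0$, also $\lambda^*(x)=0$. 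Then $\Sigma_w(x)=0$, the supremum in \eqref{eq:abs-flip-disp} is zero, and the score cannot move at all.

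Finally, for conservativeness, I would argue that failure of the certificate only gives $\eta\sqrt{\lambda^*(x)}>\gamma_w(x)$, and this does not imply \eqref{eq:flip-condition}. A concrete witness is a $w$ orthogonal to the range of $J_M$ (or more generally orthogonal to the top eigenvectors of $S(x)$). For such $w$ we get $\Sigma_w(x)=w^\top S(x)w=0$ while $\lambda^*(x)>0$, so no flip occurs even though the certificate fails.
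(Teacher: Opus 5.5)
Your proposal is correct and is essentially the argument the paper intends. The paper gives no separate proof and treats the corollary as the immediate chain $\eta\sqrt{\Sigma_w(x)}\le\eta\sqrt{\lambda^*(x)}\le\gamma_w(x)$ from Theorem~\ref{thm:sigma-lambda}, fed into the necessity direction of Theorem~\ref{thm:flip} (the same chain appears in the proof of Theorem~\ref{thm:Aeta-margin}); your separate treatment of $\gamma_w(x)=0$ and of the boundary tie under $\operatorname{sign}(0)=+1$ is more careful than the paper.

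One small slip concerns the parenthetical witness with $w$ orthogonal to the top eigenspace of $S(x)$: that gives only $\Sigma_w(x)<\lambda^*(x)$, not $\Sigma_w(x)=0$. In that case you must choose $\eta$ with $\gamma_w(x)/\sqrt{\lambda^*(x)}<\eta\le\gamma_w(x)/\sqrt{\Sigma_w(x)}$ and $\gamma_w(x)>0$. Your main witness, $J_M(x)^\top w=0$, is fine as stated.
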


\section{Population Attackability}\label{sec:population}

So far the analysis has fixed a single input $x$  and a single affine
readout $(w,b)$. We now lift the   linearised  prediction-flip condition of
Theorem~\ref{thm:flip} to a population-level   statement for this fixed
readout.  Let $\mathcal{D}$ be a distribution on $\mathcal{X}$, write
$\mathbb{P}$ for probability under $\mathcal{D}$, and let $X$ denote a
random element with distribution $\mathcal{D}$.  Throughout
Sections~\ref{sec:population} to~\ref{sec:margins}, we ignore the
degenerate boundary case in which $\Sigma_w(x)=0$ and $\gamma_w(x)=0$
simultaneously, or assume it has probability zero under $\mathcal D$ for
the readouts under consideration; the convention $Z_w(x)=0$ at this
configuration is then irrelevant for population and uniform results.

\begin{definition}[Attackability margin]\label{def:Z}
The \emph{attackability-adjusted margin} of $X$ for the fixed readout
$(w,b)$ is the extended nonnegative random variable
\begin{equation}\label{eq:Z}
 Z_w(x):=
   \begin{cases}
 \gamma_w(x)/\sqrt{\Sigma_w(x)}, & \Sigma_w(x)>0,\\[3pt]
 +\infty, & \Sigma_w(x)=0 \text{ and } \gamma_w(x)>0,\\[3pt]
 0, & \Sigma_w(x)=0 \text{ and } \gamma_w(x)=0.
 \end{cases}
\end{equation}
The \emph{population attackability} at proxy budget $\eta>0$ is
\begin{equation}\label{eq:Aeta}
 \mathcal{A}_w(\eta):=
 \mathbb{P}\left[Z_w(X)<\eta\right]
 =
 F_{Z_w}^{-}(\eta).
\end{equation}
where $F_{Z_w}^{-}(\eta):=\mathbb{P}[Z_w(X)<\eta]=F_{Z_w}(\eta-)$ is
the left-limit version of the distribution function. We use the strict
inequality because the strict flip condition in Theorem~\ref{thm:flip} gives
$\eta\sqrt{\Sigma_w(x)}>\gamma_w(x)$. If $Z_w$ has no atom at $\eta$, this
agrees with the usual CDF $F_{Z_w}(\eta)=\mathbb{P}[Z_w(X)\leq\eta]$.
\end{definition}

To understand the intuition behind Definition~\ref{def:Z}, recall from
Theorem~\ref{thm:flip} that, in the linearised local model,  a paraphrase
can flip the   readout  on input $x$ if and only if
  $\eta\sqrt{\Sigma_w(x)} > \gamma_w(x)$. Rearranging, the flip is
achievable if and only if   $\eta > \gamma_w(x)/\sqrt{\Sigma_w(x)}$ when
$\Sigma_w(x)>0$, and the right-hand side is exactly   $Z_w(x)$. Thus
$Z_w(x)$  is the smallest proxy budget at which the adversary can flip the
 linearised  prediction on $x$: a large   $Z_w(x)$ implies local robustness,
and a small $Z_w(x)$ implies local fragility. Inputs with
$\Sigma_w(x)=0$ and $\gamma_w(x)>0$ are first-order certifiably robust in
the readout direction, since no proxy-bounded perturbation changes the
score to first order.

In the population setting, $X$ is drawn from $\mathcal{D}$ and $Z_w(X)$ becomes a random variable. The quantity $\mathcal{A}_w(\eta)=\mathbb{P}_{x\sim\mathcal{D}}[Z_w(X)<\eta]$ is the
population  fraction of inputs that are flippable at budget $\eta$ in the
linearised local model. It is non-decreasing in $\eta$, satisfies
$\mathcal{A}_w(0)=0$, and obeys $\lim_{\eta\to\infty}\mathcal{A}_w(\eta)=\mathbb{P}[Z_w(X)<\infty]$. The remainder of this section bounds $\mathcal{A}_w(\eta)$ in terms of margin and anisotropy tails and estimates it from finite samples.

\begin{theorem}[Margin-tail bound on population attackability]\label{thm:Aeta-margin}
Assume the conditions of Theorem~\ref{thm:flip} hold $\mathbb{P}$-a.s.
  Then, for every $\eta>0$ and every deterministic $\Lambda>0$,
\begin{equation}\label{eq:Aeta-bound-tail}
 \mathcal{A}_w(\eta)
 \leq
 \mathbb{P}\left[\gamma_w(X)<\eta\sqrt{\Lambda}\right]
 +\mathbb{P}\left[\lambda^*(X)>\Lambda\right].
\end{equation}
In particular, if $\lambda^*(X)\leq\Lambda$ $\mathbb{P}$-a.s., then
 \begin{equation}\label{eq:Aeta-bound}
 \mathcal{A} _w (\eta)
  \leq
  \mathbb{P}\left[\gamma _w (X)<\eta\sqrt{\Lambda}\right]
  =  F  _{\gamma_w}^{-} \left(\eta\sqrt{\Lambda}\right),
\end{equation}
where $F_{\gamma_w}^{-}(t):=\mathbb{P}[\gamma_w(X)<t]=F_{\gamma_w}(t-)$. Equivalently, if
$\Lambda_{1-\beta}$ is chosen so that
$\mathbb{P}[\lambda^*(X)>\Lambda_{1-\beta}]\leq\beta$, then
\begin{equation}\label{eq:Aeta-quantile}
 \mathcal{A}_w(\eta)
 \leq
 \mathbb{P}\left[\gamma_w(X)<\eta\sqrt{\Lambda_{1-\beta}}\right]
 +\beta.
\end{equation}
 \end{theorem}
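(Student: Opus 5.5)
The plan is a pointwise event inclusion followed by a union bound. Fix $\eta>0$ and a deterministic $\Lambda>0$. I will show that, almost surely,
\[
\{Z_w(X)<\eta\}\subseteq\{\gamma_w(X)<\eta\sqrt{\Lambda}\}\cup\{\lambda^*(X)>\Lambda\}.
\]
Taking probabilities and applying subadditivity then gives \eqref{eq:Aeta-bound-tail}.

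To prove the inclusion, take an outcome with $Z_w(X)<\eta$ and $\lambda^*(X)\leq\Lambda$, and split according to Definition~\ref{def:Z}.

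The case $\Sigma_w(X)=0$, $\gamma_w(X)>0$ cannot occur, because there $Z_w=+\infty$.

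If $\Sigma_w(X)=0$ and $\gamma_w(X)=0$, then $\gamma_w(X)=0<\eta\sqrt{\Lambda}$, so the outcome lies in the first event. This degenerate configuration is in any case assumed null.

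The main case is $\Sigma_w(X)>0$. Here $Z_w<\eta$ is equivalent to $\gamma_w(X)<\eta\sqrt{\Sigma_w(X)}$. By Theorem~\ref{thm:sigma-lambda} (with $w$ a unit vector, as in the pointwise normalisation), $\Sigma_w(X)\leq\lambda^*(X)\leq\Lambda$. Hence $\gamma_w(X)<\eta\sqrt{\Lambda}$, and the outcome again lies in the first event.

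Since the hypotheses of Theorem~\ref{thm:flip}, and hence of Theorem~\ref{thm:sigma-lambda} (namely $B(X)\succ0$), hold $\mathbb P$-a.s., the inclusion holds off a null set.

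There is one measurability point. $Z_w$, $\gamma_w$ and $\lambda^*$ must be random variables for the probabilities to make sense. I will treat this as implicit in the definition of $\mathcal A_w$, in the same way as the paper does. Note that $\lambda^*$ is a continuous function of $(A,B)$ on the cone $B\succ0$.

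The second display \eqref{eq:Aeta-bound} follows at once: if $\lambda^*(X)\leq\Lambda$ a.s., the second term vanishes. The identification with $F^-_{\gamma_w}(\eta\sqrt\Lambda)$ is simply the definition of the left-limit distribution function.

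For \eqref{eq:Aeta-quantile}, substitute $\Lambda=\Lambda_{1-\beta}$ into \eqref{eq:Aeta-bound-tail} and bound the second term by $\beta$. If $\Lambda_{1-\beta}$ happens to be $0$, I will either replace it by any $\Lambda'>0$ slightly larger or note that the inclusion argument works verbatim for $\Lambda\geq0$.

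The only step with content is the case $\Sigma_w>0$, which reduces entirely to Theorem~\ref{thm:sigma-lambda}. The main thing to watch is keeping the strict inequalities consistent: the strict $<$ in $\mathcal A_w$ yields the strict $<$ in the margin event, while the complementary event to $\{\lambda^*>\Lambda\}$ is the non-strict $\lambda^*\leq\Lambda$.
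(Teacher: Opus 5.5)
Your proposal is correct and matches the paper's proof: the paper also establishes the inclusion $\{Z_w(X)<\eta\}\subseteq\{\gamma_w(X)<\eta\sqrt{\Lambda}\}\cup\{\lambda^*(X)>\Lambda\}$ via $\Sigma_w\le\lambda^*$ from Theorem~\ref{thm:sigma-lambda} and takes probabilities, treating the a.s.-bounded and quantile forms as immediate specialisations. Your explicit case split over the conventions of Definition~\ref{def:Z}, and the remark on $\Lambda_{1-\beta}=0$, are extra care that the paper leaves implicit.
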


\begin{remark}[Readout-specific quantile bound]\label{rem:Aeta-sigma}
The bound of Theorem~\ref{thm:Aeta-margin} controls $\mathcal{A}_w(\eta)$
through the worst-direction index $\lambda^*(x)$. Because
$\mathcal{A}_w(\eta)=\mathbb{P}[\gamma_w(X)<\eta\sqrt{\Sigma_w(X)}]$ by
Definition~\ref{def:Z}, the identical argument applied to $\Sigma_w(X)$
in place of $\lambda^*(X)$ gives, for every deterministic $\Lambda>0$,
\[
\mathcal{A}_w(\eta)\le
\mathbb{P}\!\left[\gamma_w(X)<\eta\sqrt{\Lambda}\right]
+\mathbb{P}\!\left[\Sigma_w(X)>\Lambda\right],
\]
and, if $\Sigma_{1-\beta}$ is chosen so that
$\mathbb{P}[\Sigma_w(X)>\Sigma_{1-\beta}]\le\beta$,
\[
\mathcal{A}_w(\eta)\le
\mathbb{P}\!\left[\gamma_w(X)<\eta\sqrt{\Sigma_{1-\beta}}\right]+\beta.
\]
Since $\Sigma_w(x)\le\lambda^*(x)$ by Theorem~\ref{thm:sigma-lambda}, the
$(1-\beta)$-quantile of $\Sigma_w$ is no larger than that of $\lambda^*$,
so this readout-specific bound is never looser than the readout-free
bound of Theorem~\ref{thm:Aeta-margin} at the same $\beta$.
\end{remark}

\begin{theorem}[Empirical concentration of population attackability]\label{thm:Aeta-empirical}
  Fix the affine readout $(w,b)$. Let $X_1,\ldots,X_n$ be i.i.d. draws from
$\mathcal{D}$, and define the empirical attackability  curve
 \begin{equation}\label{eq:Aeta-empirical}
 \hat{\mathcal{A}}  _{n,w} (\eta):= \frac{1}{n}\sum_{i=1}^n \mathbbm{1}\left[Z _w (X_i) < \eta\right].
\end{equation}
Then, for every   $\delta \in (0,1)$, with probability at least
  $1-\delta$  over the sample,
\begin{equation}\label{eq:Aeta-dkw}
 \sup  _{\eta>0} \left|\hat{\mathcal{A}}  _{n,w} (\eta)-\mathcal{A} _w (\eta)\right|
 \leq \sqrt{\frac{\ln(2/\delta)}{2n}}.
\end{equation}
\end{theorem}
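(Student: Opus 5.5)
The argument reduces Theorem~\ref{thm:Aeta-empirical} to the Dvoretzky--Kiefer--Wolfowitz inequality with Massart's constant, applied to the extended real random variables $Z_w(X_1),\dots,Z_w(X_n)$. Because the readout $(w,b)$ is fixed and does not depend on the sample, these variables are i.i.d. copies of $Z_w(X)$. First I would check measurability: $\gamma_w$ and $\Sigma_w$ are measurable functions of $x$ under the standing assumptions, so $Z_w$, defined piecewise in \eqref{eq:Z}, is a measurable map into $[0,\infty]$. The value $+\infty$ is not a problem. For $\eta>0$ finite, the event $\{Z_w<\eta\}$ never contains the point $+\infty$, so both $\hat{\mathcal A}_{n,w}(\eta)$ and $\mathcal A_w(\eta)$ are the empirical and true probabilities of the half-line $[0,\eta)$. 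To reduce to real-valued variables if desired, compose with a strictly increasing bijection $\phi:[0,\infty]\to[0,1]$, for instance $\phi(z)=z/(1+z)$ with $\phi(\infty)=1$. Then $\{Z_w<\eta\}=\{\phi(Z_w)<\phi(\eta)\}$ and everything becomes statements about bounded real random variables $V_i=\phi(Z_w(X_i))$.

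The key step is to pass from the left-limit functions $F^-$ used here to the ordinary CDF in DKW. Massart's form of DKW states
\[
\mathbb P\Bigl[\sup_{t\in\mathbb R}|\hat F_n(t)-F(t)|>\varepsilon\Bigr]\le 2e^{-2n\varepsilon^2}
\]
for any distribution, including ones with atoms. Our quantity is $\sup_t|\hat F_n(t-)-F(t-)|$. Taking left limits along $s\uparrow t$ in $|\hat F_n(s)-F(s)|\le \sup_s|\hat F_n(s)-F(s)|$ gives
\[
\sup_t|\hat F_n(t-)-F(t-)|\le \sup_t|\hat F_n(t)-F(t)|.
\]
Alternatively, one can apply DKW to $-V_i$, whose CDF at $-t$ is $1-F(t-)$, which handles the strict inequality directly. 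Restricting to $\eta>0$ only shrinks the supremum.

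Finally, set $2e^{-2n\varepsilon^2}=\delta$, which gives $\varepsilon=\sqrt{\ln(2/\delta)/(2n)}$, and this is exactly \eqref{eq:Aeta-dkw}.

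The main point needing care is the left-limit/strict-inequality issue together with possible atoms, including atoms at $0$ from the degenerate convention and mass at $+\infty$. The limiting argument above handles it cleanly, because Massart's bound requires no continuity of $F$. No uniformity over readouts is claimed here; that is deferred to the VC arguments of the later sections.
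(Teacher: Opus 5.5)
Your proposal is correct and follows the same route as the paper: fix $(w,b)$ so the $Z_w(X_i)$ are i.i.d., apply the Dvoretzky--Kiefer--Wolfowitz inequality with Massart's constant to the strict-threshold (left-limit) empirical distribution function, and solve $2e^{-2n\varepsilon^2}=\delta$. You also make explicit two steps the paper leaves implicit, namely the compactification $\phi$ that handles the value $+\infty$ and the left-limit inequality $\sup_t|\hat F_n(t-)-F(t-)|\le\sup_t|\hat F_n(t)-F(t)|$ that justifies using DKW with the strict inequality, and both are sound.
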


 \begin{remark}[Fixed versus data-dependent readouts]\label{rem:fixed-readout}
Theorem~\ref{thm:Aeta-empirical} is an oracle concentration statement for
a fixed readout and exactly observed values of $Z_w(X_i)$. If the readout
is learned or selected using the same sample, the DKW bound does not by
itself justify uniform validity over the class of possible readouts.
That extension requires sample splitting or a uniform-convergence bound
for the class of attackability events
$\{x\mapsto\mathbbm{1}[Z_w(x)<\eta]:(w,b,\eta)\in\mathcal{W}\times\mathbb{R}\times(0,\infty)\}$.
\end{remark}

\section{Uniform Attackability over Classes of Affine Readouts}\label{sec:hypotheses}

Section~\ref{sec:population} treats the affine readout $(w,b)$ as fixed. This is the right setting for auditing a single deployed classifier, and it is also the setting of the empirical exercise in Section~\ref{sec:experiments}, where FinBERT is treated as a deployed financial-sentiment classifier with a fixed pre-trained affine readout. However, in statistical learning the readout is typically trained, tuned, or otherwise
selected from data. A pointwise concentration statement such as
Theorem~\ref{thm:Aeta-empirical} does not by itself justify using the same
sample both to choose the readout and to estimate its attackability. The goal
of this section is to replace the fixed-readout DKW bound by a uniform bound
over a class of affine readouts.

Throughout this section we write
\[
z(x):=e_M(x), \qquad
S(x):=J_M(x)B(x)^{-1}J_M(x)^\top,
\]
where $J_M(x)$ and $B(x)$ are the local target Jacobian and proxy pullback
metric at $x$. By Remark~\ref{rem:S-invariant}, $S(x)$ does not depend on the
particular local chart used to parameterise paraphrases. The matrix $S(x)$ is
positive semidefinite and records how proxy-valid paraphrase directions move the
target representation in the coordinates seen by the readout. For a readout
parameter
$\theta=(w,b)$, define
\[
f_\theta(x):=w^\top z(x)+b,\qquad
\gamma_\theta(x):=|f_\theta(x)|,\qquad
\Sigma_\theta(x):=w^\top S(x)w.
\]
The linearised local attackability event at budget $\eta$ is
\begin{equation}\label{eq:attack-event-theta}
G_{\theta,\eta}(x)
:=\mathbbm{1}\left\{|f_\theta(x)|<\eta\sqrt{\Sigma_\theta(x)}\right\}.
\end{equation}
Thus $G_{\theta,\eta}(x)=1$ means that, in the first-order local model,
the margin of the readout is smaller than the displacement that the adversary
can induce in the readout direction. For a class $\Theta$ of affine readouts,
set
\begin{equation}\label{eq:G-class}
\mathcal G_\Theta
:=\left\{x\mapsto G_{\theta,\eta}(x):\theta\in\Theta,\,\eta>0\right\}.
\end{equation}
The corresponding population and empirical attackability curves are
\[
\mathcal A_\theta(\eta):=\mathbb P\{G_{\theta,\eta}(X)=1\},
\qquad
\widehat{\mathcal A}_{n,\theta}(\eta)
:=\frac1n\sum_{i=1}^nG_{\theta,\eta}(X_i).
\]
The empirical and population attackability curves
$\widehat{\mathcal A}_{n,\theta}(\eta)$ and $\mathcal A_\theta(\eta)$ are the
same objects as $\widehat{\mathcal A}_{n,w}(\eta)$ and $\mathcal A_w(\eta)$ in
Section~\ref{sec:population}, with the subscript $\theta=(w,b)$ replacing $w$
to make the dependence on the bias explicit. Definition~\ref{def:Z} and
Theorem~\ref{thm:flip} give
$G_{\theta,\eta}(x)=\mathbbm{1}\{Z_\theta(x)<\eta\}$ under the conventions
already in place for $\Sigma_\theta(x)=0$, so the two notations refer to the
same indicator function and to the same empirical and population averages.
Throughout this section and Section~\ref{sec:margins} we assume that
$\mathbb P\{\gamma_\theta(X)=0\}=0$ for each readout
$\theta\in\Theta$; this excludes the boundary-degeneracy case where
both $\gamma_\theta(x)$ and $\Sigma_\theta(x)$ vanish, on which the
strict inequality in \eqref{eq:attack-event-theta} and the convention
$Z_\theta(x)=0$ in Definition~\ref{def:Z} would otherwise disagree.

The next theorem is a direct uniform-convergence analogue of the DKW result
in Section~\ref{sec:population}. Instead of controlling the empirical CDF of
one fixed random variable $Z_w(X)$, it controls all attackability indicators
in $\mathcal G_\Theta$ simultaneously. We use the VC dimension \citep{vapnik1971uniform} as the complexity measure of binary function classes.
For a class $\mathcal F$ of $\{0,1\}$-valued functions on a domain $\mathcal Z$, a
finite set $\{z_1,\ldots,z_k\}\subset\mathcal Z$ is \emph{shattered} by $\mathcal F$
if, for every $\sigma\in\{0,1\}^k$, some $f\in\mathcal F$ satisfies
$f(z_i)=\sigma_i$ for all $i$. The VC dimension $\VC(\mathcal F)$ is the largest
cardinality of a shattered set, with $\VC(\mathcal F)=\infty$ if shattered sets
of unbounded size exist. It controls the rate of uniform convergence of
empirical to population frequencies and appears in distribution-free
generalisation bounds.

\begin{theorem}[Uniform VC concentration for attackability]\label{thm:vc-uniform}
Let $X_1,\ldots,X_n$ be i.i.d. from $\mathcal D$, and suppose that
$\VC(\mathcal G_\Theta)=v<\infty$. There is a universal constant $C$ such
that, for every $\delta\in(0,1)$, with probability at least $1-\delta$,
\begin{equation}\label{eq:vc-uniform-bound}
\sup_{\theta\in\Theta,\,\eta>0}
\left|
\widehat{\mathcal A}_{n,\theta}(\eta)-\mathcal A_\theta(\eta)
\right|
\leq
C\sqrt{\frac{v\log(en)+\log(1/\delta)}{n}}.
\end{equation}
\end{theorem}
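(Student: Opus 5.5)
This is the classical Vapnik--Chervonenkis uniform law applied directly to the binary class $\mathcal G_\Theta$ of \eqref{eq:G-class}. No new geometric input is needed. The key observation is that each pair $(\theta,\eta)$ indexes a single $\{0,1\}$-valued function $G_{\theta,\eta}$, with
\[
\widehat{\mathcal A}_{n,\theta}(\eta)=\frac1n\sum_{i=1}^n G_{\theta,\eta}(X_i),\qquad \mathcal A_\theta(\eta)=\mathbb E\,G_{\theta,\eta}(X).
\]
The left-hand side of \eqref{eq:vc-uniform-bound} is therefore exactly $\sup_{g\in\mathcal G_\Theta}|P_ng-Pg|$, the uniform deviation of empirical from population means over $\mathcal G_\Theta$. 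The geometric quantities $\gamma_\theta$ and $\Sigma_\theta$ enter only through the hypothesis $\VC(\mathcal G_\Theta)=v$.

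The steps are as follows.

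First, apply McDiarmid's bounded-differences inequality to $\Phi(X_1,\dots,X_n):=\sup_{g}|P_ng-Pg|$. Changing one $X_i$ changes $\Phi$ by at most $1/n$, so with probability at least $1-\delta$,
\[
\Phi\le\mathbb E\Phi+\sqrt{\log(1/\delta)/(2n)}.
\]

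Second, symmetrise with a ghost sample and Rademacher signs to obtain $\mathbb E\Phi\le 2\,\mathbb E\,\widehat{\mathfrak R}_n(\mathcal G_\Theta)$.

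Third, conditional on the sample, the projection $\{(g(X_1),\dots,g(X_n)):g\in\mathcal G_\Theta\}$ is a finite subset of $\{0,1\}^n$. By the Sauer--Shelah lemma it has cardinality at most $\sum_{j\le v}\binom nj\le (en/v)^v\le (en)^v$ for $n\ge v$; when $n<v$ the bound is trivial once $C\ge1$, since the left side is at most $1$. Massart's finite-class lemma, applied to vectors of Euclidean norm at most $\sqrt n$, then gives
\[
\widehat{\mathfrak R}_n\le\sqrt{2v\log(en)/n}.
\]

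Finally, combine the three estimates using $\sqrt a+\sqrt b\le\sqrt{2(a+b)}$. This yields \eqref{eq:vc-uniform-bound} with, for instance, $C=4$.

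One possible subtlety is measurability of the supremum over the uncountable index set $\Theta\times(0,\infty)$. I would handle it in the standard way: either assume $\mathcal G_\Theta$ is image-admissible Suslin, or note that for fixed data the supremum over the $\{0,1\}^n$-valued projection is a maximum over finitely many patterns. The latter lets the supremum be taken over a countable dense subfamily, provided $\Theta$ is separable and $z(\cdot)$ and $S(\cdot)$ are measurable.

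The point where I expect care rather than difficulty is the mismatch between the strict inequality in \eqref{eq:attack-event-theta} and the convention $Z_\theta=0$ in Definition~\ref{def:Z}. The theorem is stated for $\mathcal G_\Theta$ itself, so the bound holds for the indicators $G_{\theta,\eta}$ regardless of conventions. Identifying it with the curves $\mathcal A_w$ of Section~\ref{sec:population} uses the standing assumption $\mathbb P\{\gamma_\theta(X)=0\}=0$, so the two indicators agree almost surely, simultaneously on the sample and in the population, and the supremum is unaffected.

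The real content, whether $v$ is finite, is assumed in the statement and is not part of this proof. Since $G_{\theta,\eta}(x)=\mathbbm 1\{(w^\top z+b)^2<\eta^2 w^\top S w\}$ is the sign of a polynomial in $(w,b,\eta^2)$ of bounded degree, Warren/Goldberg--Jerrum-type bounds would give such a $v$. That would be the obstacle if one had to verify the hypothesis rather than assume it.
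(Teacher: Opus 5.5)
Your proposal is correct and follows the paper's proof: both identify the left-hand side with $\sup_{g\in\mathcal G_\Theta}|(P_n-P)g|$ and then apply the standard VC uniform law. The paper simply cites that law (Vapnik--Chervonenkis, Anthony--Bartlett), whereas you derive it via McDiarmid, symmetrisation, Sauer--Shelah and Massart's lemma. The only slip is at the level of constants: with the paper's signed Rademacher complexity, the two-sided supremum is not directly bounded by $2\,\mathbb E\,\widehat{\mathfrak R}_n(\mathcal G_\Theta)$. You should either treat the two one-sided deviations separately at level $\delta/2$ or apply Massart to $\mathcal G_\Theta\cup(-\mathcal G_\Theta)$; either fix adds a $\log 2$ that is absorbed into $C$ when $v\geq 1$, so your $C=4$ becomes a slightly larger universal constant.
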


Theorem~\ref{thm:vc-uniform} says that empirical attackability remains a
valid diagnostic even when the affine readout is chosen after seeing the
sample, provided the induced class of attackability events has finite VC
dimension. This is the learning-theoretic upgrade from a fixed classifier to
a data-dependent classifier. The DKW bound is sharper for a single fixed
readout, but it has no built-in protection against data-dependent choice of
$\theta$.

%\begin{remark}[The budget parameter alone is not the source of complexity]
%For a fixed readout $\theta$, the subclass
%$\{G_{\theta,\eta}:\eta>0\}$ is only a one-parameter threshold family. Writing
%$Z_\theta(x)=|f_\theta(x)|/\sqrt{\Sigma_\theta(x)}$ with the same conventions as
%Definition~\ref{def:Z}, we have
%$G_{\theta,\eta}(x)=\mathbbm{1}\{Z_\theta(x)<\eta\}$.
%Thus this subclass has VC dimension at most $2$ as a class of one-dimensional
%thresholds. Applying Theorem~\ref{thm:vc-uniform} with $\theta$ fixed gives a
%rate of order $\sqrt{\log n/n}$, matching the DKW fixed-readout rate of
%Theorem~\ref{thm:Aeta-empirical} up to the logarithmic factor introduced by the
%general VC inequality. The dimension-dependent bounds below
%(Propositions~\ref{prop:vc-affine} and~\ref{prop:vc-improved}) are therefore
%driven by uniformity over the readout, not by the scalar budget parameter.
%\end{remark}

The following proposition gives a simple dimension-dependent VC bound for
all affine readouts on a $d$-dimensional target embedding. The proof is based
on one observation: the attackability event is a quadratic inequality in the
embedding coordinates, together with the entries of the local sensitivity
matrix $S(x)$, and such inequalities can be viewed as linear thresholds after
an elementary feature lift. This is the degree-two instance of the standard
polynomial-surface viewpoint: a polynomial threshold can be represented as an
affine threshold in a higher-dimensional space of monomial features
\citep{anthony1995classification}.

\begin{proposition}[A VC bound for affine readouts]\label{prop:vc-affine}
Assume $z(x)\in\mathbb R^{d_M}$ and $S(x)$ is a symmetric positive semidefinite
$d_M\times d_M$ matrix for every $x$. Let $\Theta\subseteq\mathbb R^{d_M}\times\mathbb R$
be any set of affine-readout parameters $\theta=(w,b)$. Then
\begin{equation}\label{eq:vc-aff-bound}
\VC(\mathcal G_\Theta)\leq (d_M+1)^2.
\end{equation}
\end{proposition}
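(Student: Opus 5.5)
The plan is to rewrite the attackability event \eqref{eq:attack-event-theta} as a homogeneous linear threshold in a lifted feature space of dimension exactly $(d_M+1)^2$. We then invoke the standard fact that homogeneous halfspaces in $\mathbb R^N$ have VC dimension $N$ \citep{anthony1995classification}. Since $\mathcal G_\Theta\subseteq\mathcal G_{\mathbb R^{d_M}\times\mathbb R}$ and VC dimension is monotone under inclusion, it suffices to treat $\Theta=\mathbb R^{d_M}\times\mathbb R$.

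\emph{Step 1 (squaring).} Both sides of $|f_\theta(x)|<\eta\sqrt{\Sigma_\theta(x)}$ are nonnegative, because $S(x)\succeq0$ gives $\Sigma_\theta(x)\ge0$. Hence
\[
G_{\theta,\eta}(x)=\mathbbm{1}\{(w^\top z(x)+b)^2-\eta^2\,w^\top S(x)w<0\}.
\]

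\emph{Step 2 (homogenisation).} Set $\tilde w=(w,b)\in\mathbb R^{d_M+1}$ and $\tilde z(x)=(z(x),1)$. Let $\tilde S(x)$ be $S(x)$ padded with a zero last row and column. The expression above then becomes
\[
\langle \tilde w\tilde w^\top,\ \tilde z(x)\tilde z(x)^\top\rangle_F-\eta^2\langle \tilde w\tilde w^\top,\ \tilde S(x)\rangle_F .
\]
This is linear in the pair of symmetric matrices $\phi(x):=(\tilde z\tilde z^\top,\,S(x))$, with coefficient vector $a_{\theta,\eta}:=(\tilde w\tilde w^\top,-\eta^2 ww^\top)$.

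\emph{Step 3 (feature count).} Record only the upper-triangular entries, with off-diagonal entries doubled in the coefficient vector so that the Frobenius pairing is preserved. The $(d_M+1)\times(d_M+1)$ symmetric block contributes $(d_M+1)(d_M+2)/2$ coordinates. The $d_M\times d_M$ symmetric block $S(x)$ contributes $d_M(d_M+1)/2$ coordinates. The total is
\[
N=\frac{(d_M+1)(d_M+2)+d_M(d_M+1)}{2}=(d_M+1)^2 .
\]
No separate bias term is needed, since the corner entry of $\tilde z\tilde z^\top$ is identically $1$ and already plays that role. Thus every set in $\mathcal G_\Theta$ has the form $\{x:\langle a,\phi(x)\rangle<0\}$ for some $a\in\mathbb R^N$.

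\emph{Step 4 (conclusion).} The class $\{v\mapsto\mathbbm 1\{\langle a,v\rangle<0\}:a\in\mathbb R^N\}$ consists of the complements of closed homogeneous halfspaces. Complementation preserves VC dimension, so this class has VC dimension $N$. Any set shattered by $\mathcal G_\Theta$ is mapped injectively by $\phi$ to a set shattered by this halfspace class. Injectivity holds because two points with equal $\phi$-values receive identical labels from every member of $\mathcal G_\Theta$, so they could not both belong to a shattered set. Restricting to the non-convex parameter set $\{a_{\theta,\eta}\}$ only shrinks the class. This yields $\VC(\mathcal G_\Theta)\le(d_M+1)^2$.

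The only delicate points are the reduction by squaring, which uses $S\succeq0$ and the nonnegativity of both sides, and the exact bookkeeping of the symmetric-entry count needed to reach $(d_M+1)^2$ rather than a cruder $2(d_M+1)^2$. I expect the counting in Step 3 to be the main thing to get right.
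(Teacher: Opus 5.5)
Your proof is correct and follows the paper's argument. You square the event using $S(x)\succeq 0$, lift to the features $(1,z_i,z_iz_j,S_{ij})$ (exactly the upper triangles of your $\tilde z\tilde z^\top$ and $S(x)$), count $(d_M+1)^2$ coordinates, and invoke the VC dimension of homogeneous halfspaces; the Frobenius-pairing packaging and the injectivity remark are only cosmetic refinements of the same lift.
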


The bound in Proposition~\ref{prop:vc-affine} is conservative, since the
elementary lifting treats the structured quadratic coefficients as if they
were independent. A polynomial-threshold argument that exploits the
parameterisation of $\mathcal G_\Theta$ by $(w,b,\eta)$ gives a sharper rate.

\begin{proposition}[Polynomial-threshold improvement]\label{prop:vc-improved}
Under the assumptions of Proposition~\ref{prop:vc-affine},
\[
\VC(\mathcal G_\Theta)=O(d_M).
\]
\end{proposition}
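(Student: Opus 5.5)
The plan is to treat $\mathcal G_\Theta$ as a class of Boolean functions defined by a bounded number of polynomial sign conditions in the parameters, and then apply the Goldberg--Jerrum / Warren-type bound on sign patterns of polynomials. The first step rewrites the attackability event so that it becomes polynomial in $(w,b,\eta)$. For $\eta>0$, the event $|f_\theta(x)|<\eta\sqrt{\Sigma_\theta(x)}$ holds if and only if
\[
(w^\top z(x)+b)^2-\eta^2\, w^\top S(x) w<0 .
\]
The forward direction is immediate. For the converse, if the quadratic inequality holds then $\Sigma_\theta(x)>0$, and since both sides are nonnegative we may take square roots. Setting $t=\eta^2>0$, each $G_{\theta,\eta}(x)$ is then the sign of a single polynomial $p_x(w,b,t)$ in $k=d_M+2$ real parameters, of degree at most $3$ in those parameters; the degree is at most $2$ in $(w,b)$, and the term $t\,w^\top S(x)w$ has degree $3$. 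The point $x$ enters only through the coefficients $z(x)$ and $S(x)$. The restriction $\theta\in\Theta$, $t>0$ only shrinks the parameter set, so it can only decrease the VC dimension. We may therefore bound the VC dimension of the larger class in which the parameters range over all of $\mathbb R^{d_M+2}$.

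The second step counts dichotomies. Fix $m$ points $x_1,\dots,x_m$. Each dichotomy realised on them corresponds to a sign pattern, in $\{<0,\ \geq 0\}^m$, of the $m$ polynomials $p_{x_1},\dots,p_{x_m}$ as the parameter vector varies over $\mathbb R^{k}$. By Warren's theorem, in the form that also counts zero patterns (Goldberg--Jerrum, or the version in \citet{anthony1995classification}), the number of distinct sign patterns of $m$ polynomials of degree at most $D$ in $k$ variables is at most $(8eDm/k)^{k}$ for $m\geq k$. With $D=3$ and $k=d_M+2$, shattering requires
\[
2^m\leq\left(\frac{24 e m}{d_M+2}\right)^{d_M+2}.
\]
The standard inversion of an inequality of the form $2^m\le (cm/k)^k$ gives $m\leq C k\log c$. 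This yields $\VC(\mathcal G_\Theta)\leq C(d_M+2)=O(d_M)$, with an explicit constant; something like $2k\log_2(24e)$ suffices.

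I expect the main obstacle to be the reduction in the first step, not the counting. Two points need care. First, the square-root event must be converted into a single strict polynomial inequality with exactly the right boundary behaviour. This includes the degenerate case $\Sigma_\theta(x)=0$, where both formulations give $0$; the paper's convention $Z_\theta=0$ at the doubly degenerate points matters only on a null set and does not affect the combinatorial VC statement for the indicator as defined in \eqref{eq:attack-event-theta}. Second, we need a Warren-type bound that handles a non-strict/strict split ($<0$ versus $\geq 0$) rather than only nonzero sign patterns. I would cite the Goldberg--Jerrum formulation, which handles such Boolean combinations of polynomial sign tests directly. Once this is in place, the remaining argument is routine.
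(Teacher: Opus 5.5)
Your proposal is correct and follows essentially the paper's proof. Both arguments square the strict inequality, replace $\eta^2$ by a single parameter so that membership is one polynomial sign condition in the $d_M+2$ parameters $(w,b,\eta^2)$, and then invoke the Goldberg--Jerrum/Warren sign-pattern bound. The only difference is bookkeeping: the paper applies Goldberg--Jerrum Theorem~2.2 with total degree $4$ (treating $z_i(x)$ and $S_{ij}(x)$ as variables) and gets $2(d_M+2)\log_2(32e)$, whereas you count degree $3$ in the parameters alone and get the slightly smaller constant $2(d_M+2)\log_2(24e)$.
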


\begin{corollary}[Uniform attackability bound for affine readouts]\label{cor:affine-vc-uniform}
Under the assumptions of Proposition~\ref{prop:vc-affine}, there is a
universal constant $C$ such that, with probability at least $1-\delta$,
\begin{equation}\label{eq:affine-vc-uniform}
\sup_{\theta\in\Theta,\,\eta>0}
\left|
\widehat{\mathcal A}_{n,\theta}(\eta)-\mathcal A_\theta(\eta)
\right|
\leq
C\sqrt{\frac{(d_M+2)\log(en)+\log(1/\delta)}{n}}.
\end{equation}
The bound follows from Theorem~\ref{thm:vc-uniform} combined with the
polynomial-threshold VC bound of Proposition~\ref{prop:vc-improved}.
\end{corollary}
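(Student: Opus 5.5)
The plan is to combine Theorem~\ref{thm:vc-uniform} with an explicit VC bound coming from Proposition~\ref{prop:vc-improved}. The only real work is to make the constant hidden in $O(d_M)$ concrete enough to write it as $(d_M+2)$ up to a universal factor. Once a bound $\VC(\mathcal G_\Theta)\le c\,(d_M+2)$ with absolute $c$ is available, substituting $v\le c(d_M+2)$ into \eqref{eq:vc-uniform-bound} gives
\[
C\sqrt{\frac{v\log(en)+\log(1/\delta)}{n}}\le C\sqrt{c}\sqrt{\frac{(d_M+2)\log(en)+\log(1/\delta)}{n}},
\]
using $c\ge1$. The new constant $C\sqrt c$ is again universal, and the corollary follows.

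To pin down the linear VC bound, I would reduce to a polynomial-threshold class with few parameters. Squaring the defining inequality in \eqref{eq:attack-event-theta} (both sides are nonnegative) shows that
\[
G_{\theta,\eta}(x)=\mathbbm 1\{P(\theta,\eta;x)>0\},\qquad
P(\theta,\eta;x):=\eta^2\,w^\top S(x)w-(w^\top z(x)+b)^2 .
\]
Set $t=\eta^2$. For fixed $x$, $P$ is then a polynomial in the $d_M+2$ real parameters $(w,b,t)$ of degree at most $3$. It is quadratic in $(w,b)$ and linear in $t$, so it is cubic in the joint parameter vector.

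I would then invoke the standard Warren/Goldberg--Jerrum bound \citep{anthony1995classification} for classes $\{x\mapsto\mathbbm 1[P(\alpha;x)>0]\}$ in which each $P(\cdot;x)$ is a polynomial of degree at most $D$ in $k$ parameters. It states that the VC dimension is at most $2k\log_2(8eD)$. With $k=d_M+2$ and $D=3$ this gives
\[
\VC(\mathcal G_\Theta)\le 2(d_M+2)\log_2(24e),
\]
which is $c(d_M+2)$ with an absolute constant $c$. Restricting $\theta$ to a subset $\Theta$ and $\eta$ to $(0,\infty)$ can only shrink the class, so the bound applies to $\mathcal G_\Theta$.

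The main obstacle is this second step. Proposition~\ref{prop:vc-improved} states only $O(d_M)$, so I must check that the Warren-type bound is applied with the parameter count exactly $d_M+2$. That check has two parts. First, $\eta$ must be reparametrised as $t=\eta^2$ so that $P$ is polynomial in it. Second, the strict inequality and the boundary convention for $\Sigma_\theta=0$ (assumed null in this section) must not change the indicator class. On the event $\Sigma_\theta(x)=0$, both the original event and $\{P>0\}$ are empty, so the two classes coincide.
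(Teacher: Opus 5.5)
Your proposal is correct and takes the paper's route: after squaring and setting $t=\eta^2$, you apply the Goldberg--Jerrum polynomial-threshold bound underlying Proposition~\ref{prop:vc-improved} with $d_M+2$ parameters, then substitute the resulting linear VC bound into Theorem~\ref{thm:vc-uniform} and absorb the absolute constant into $C$. The differences are cosmetic. You count degree $3$ in the parameters $(w,b,t)$ alone, giving $\log_2(24e)$, whereas the paper counts total degree $4$ over parameters and instance entries, giving $\log_2(32e)$; the total-degree count matches how Goldberg and Jerrum's Theorem~2.2 is literally stated, although its Warren-type proof only needs the parameter degree. Also, that bound is due to Goldberg and Jerrum (1995), not to the polynomial-surfaces paper you cite.
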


\section{Attackability-Adjusted Margin Bounds}\label{sec:margins}

The VC bound above controls the binary attackability indicator directly.
Classical margin theory suggests a complementary approach: rather than count
only whether an input is attackable, measure how attackable it is. This
replaces a binary event by a real-valued margin distribution, which can give
sharper and more informative generalization bounds.

The attackability event in Section~\ref{sec:hypotheses} is label-free: it
asks only whether the classifier's prediction can be flipped, not whether
the prediction is correct. The margin theory below introduces labels.
Assume that the data are labelled, with $Y\in\{-1,+1\}$. For a readout
$\theta=(w,b)$ and budget $\eta>0$, we define the \emph{attackability-adjusted
margin} as follows. Recall from Section~\ref{sec:hypotheses} the notation $z(x):=e_M(x)$ for the
target embedding and $S(x):=J_M(x)B(x)^{-1}J_M(x)^\top$ for the local
sensitivity matrix.  
\begin{equation}\label{eq:adjusted-margin}
m_{\eta,\theta}(x,y)
:=y\bigl(w^\top z(x)+b\bigr)-\eta\sqrt{w^\top S(x)w}.
\end{equation}
The first term is the usual signed classifier margin. The second term is
the largest first-order displacement, at proxy budget $\eta$, that any
proxy-bounded paraphrase can induce against this readout direction. The
adjusted margin therefore subtracts a local geometric penalty from the
ordinary margin, reflecting the worst-case loss of margin available within
the proxy budget.

The linearised local robust risk is
\begin{equation}\label{eq:robust-risk}
R_\eta(\theta):=\mathbb P\{m_{\eta,\theta}(X,Y)\leq 0\},
\end{equation}
where a positive adjusted margin certifies correct classification after the
linearised proxy-bounded perturbation.

The flip event and the robust-risk event behave differently and it is
worth recording how. The flip event from
Section~\ref{sec:hypotheses},
\[
\{Z_\theta(X)<\eta\}=\{\gamma_\theta(X)<\eta\sqrt{\Sigma_\theta(X)}\},
\]
asks whether some proxy-bounded paraphrase moves $X$ across the decision
boundary of the readout. It depends on $X$ but not on the label $Y$.
The robust-risk event,
\[
\{m_{\eta,\theta}(X,Y)\leq 0\}=\{Yf_\theta(X)\leq\eta\sqrt{\Sigma_\theta(X)}\},
\]
asks whether the readout either already misclassifies $X$ or can be made
to misclassify it by such a paraphrase. It depends on both $X$ and $Y$.

On a correctly classified input ($Yf_\theta(X)>0$),
$Yf_\theta(X)=|f_\theta(X)|=\gamma_\theta(X)$, and the two events become
$\{\gamma_\theta(X)<\eta\sqrt{\Sigma_\theta(X)}\}$ and
$\{\gamma_\theta(X)\leq\eta\sqrt{\Sigma_\theta(X)}\}$ respectively, so they
agree apart from the equality boundary.

On a misclassified input ($Yf_\theta(X)<0$), the two events diverge. The
robust-risk event automatically holds, because $Yf_\theta(X)<0$ and
$\eta\sqrt{\Sigma_\theta(X)}\geq 0$, so the inequality
$Yf_\theta(X)\leq\eta\sqrt{\Sigma_\theta(X)}$ is satisfied regardless of
the adversary. For the flip event, suppose $Y=+1$ but $f_\theta(X)<0$, so
that the classifier predicts $-1$ on a positive-labelled input. A
paraphrase that moves the readout score $f_\theta$ from negative through
zero into positive territory would "flip" the prediction, and the flip
event would treat this as an adversarial success. But the flipped
prediction is now $+1$, which is the correct label. The flip event has
fired on what is, from the user's point of view, a correction rather than
an attack. The robust-risk event ignores such cases: it already counts
the input as an error before the adversary acts, and a paraphrase that
corrects the error does not remove that count.

The adjusted margin therefore tracks robust classification risk, not
prediction-flip frequency. The two coincide on correctly classified
inputs and disagree on misclassified inputs, where robust risk is the
right quantity for a learning-theoretic bound.

For a \emph{margin slack}  $\rho>0$, its empirical $\rho$-margin analogue is
\begin{equation}\label{eq:emp-margin-risk}
\widehat R_{n,\eta,\rho}(\theta)
:=\frac1n\sum_{i=1}^n
\mathbbm{1}\left\{m_{\eta,\theta}(X_i,Y_i)\leq \rho\right\}.
\end{equation}
The slack parameter $\rho$ plays the same role as in ordinary margin bounds.
The empirical $\rho$-margin error counts two kinds of examples: those with
adjusted margin at most zero, which are already locally non-robust on the
sample, and those with adjusted margin in the interval $(0,\rho]$, which are
empirically robust but only by an amount small enough that uniform
fluctuation between the empirical and population margin distributions could
plausibly move them across the decision threshold. 
 The slack $\rho$ acts as a safety buffer. A wider slack counts more sample
points as potentially fragile, but reduces the complexity penalty in
Theorem~\ref{thm:fat-margin}, which depends on the fat-shattering dimension
of the adjusted-margin class at scale $\rho$ and is decreasing in $\rho$.

Let
\begin{equation}\label{eq:M-class}
\mathcal M_\eta
:=\left\{(x,y)\mapsto m_{\eta,\theta}(x,y):\theta\in\Theta\right\}.
\end{equation}
The following theorem is the standard fat-shattering margin bound applied to
this adjusted-margin class. One route to this form
is to combine the margin-covering bound of \citet[Theorem~10.4]{anthony2009neural}
with the fat-shattering covering estimate of
\citet[Theorem~12.13]{anthony2009neural}.

We use the fat-shattering dimension as the scale-sensitive complexity
measure of real-valued function classes. For a class $\mathcal F$ on a
domain $\mathcal Z$ and $\alpha>0$, a finite set $\{z_1,\ldots,z_k\}\subset
\mathcal Z$ is \emph{$\alpha$-shattered} by $\mathcal F$ if there exist
reference levels $r_1,\ldots,r_k\in\mathbb R$ such that, for every
$\sigma\in\{-1,+1\}^k$, some $f\in\mathcal F$ satisfies
$\sigma_i(f(z_i)-r_i)\geq\alpha$ for all $i$. The fat-shattering dimension
$\fat_\alpha(\mathcal F)$ is the largest cardinality of an $\alpha$-shattered
set. It is a scale-sensitive analogue of VC dimension and appears in margin
bounds because it controls the covering number of $\mathcal F$ at resolution
$\alpha$.

\begin{theorem}[Fat-shattering margin bound]\label{thm:fat-margin}
Assume that every function in $\mathcal M_\eta$ takes values in $[-M,M]$.
Fix $\rho\in(0,M]$, and let
\[
d_\rho:=\fat_{\rho/8}(\mathcal M_\eta).
\]
There is a universal constant $C$ such that, for i.i.d. samples
$(X_i,Y_i)_{i=1}^n$, with probability at least $1-\delta$, uniformly over
$\theta\in\Theta$,
\begin{equation}\label{eq:fat-margin-bound}
R_\eta(\theta)
\leq
\widehat R_{n,\eta,\rho}(\theta)
+
C\sqrt{\frac{d_\rho\log^2(CMn/\rho)+\log(1/\delta)}{n}}.
\end{equation}
\end{theorem}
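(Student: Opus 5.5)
The plan is to treat Theorem~\ref{thm:fat-margin} as an instance of the classical margin-covering argument, with $\mathcal M_\eta$ playing the role of the real-valued hypothesis class. The adjusted margin enters only through its fat-shattering dimension; no property of $S(x)$ is used beyond boundedness of the class. Write $\mathcal Z=\mathcal X\times\{-1,+1\}$ and $h_\theta(x,y):=m_{\eta,\theta}(x,y)$. Then $R_\eta(\theta)=\mathbb P\{h_\theta\le 0\}$ and $\widehat R_{n,\eta,\rho}(\theta)$ is the empirical frequency of $\{h_\theta\le\rho\}$. This is exactly the setting of a margin bound for a real-valued class thresholded at zero.

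\textbf{Step 1 (symmetrisation).} I will show that for $n\ge C/\epsilon^2$,
\[
\mathbb P\Bigl\{\exists\theta:\ R_\eta(\theta)>\widehat R_{n,\eta,\rho}(\theta)+\epsilon\Bigr\}
\le 2\,\mathbb P\Bigl\{\exists\theta:\ \widehat R'_{n}(\theta)>\widehat R_{n,\eta,\rho}(\theta)+\epsilon/2\Bigr\},
\]
where the ghost sample frequency is $\widehat R'_n(\theta)=\frac1n\sum\mathbbm 1\{h_\theta(X_i',Y_i')\le 0\}$. This is the standard double-sample argument; it uses only Chebyshev applied to the fixed $\theta$ that witnesses the bad event.

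\textbf{Step 2 (covering at scale $\rho/2$ and permutation).} On the joint sample of size $2n$, I will take a minimal $\rho/2$-cover of $\mathcal M_\eta$ in the $\ell_\infty$ empirical metric. If $|h_\theta-g|\le\rho/2$ on the $2n$ points, then $\{h_\theta\le 0\}\subseteq\{g\le\rho/2\}\subseteq\{h_\theta\le\rho\}$. The bad event is therefore contained in the union over cover elements $g$ of the event that $g$ places ghost-sample mass on $\{g\le\rho/2\}$ exceeding its first-sample mass by $\epsilon/2$. Averaging over swap permutations and applying Hoeffding gives the bound
\[
2\,\mathbb E\,\mathcal N_\infty(\rho/2,\mathcal M_\eta,2n)\,e^{-n\epsilon^2/C}.
\]
This is \citet[Theorem~10.4]{anthony2009neural}; I would cite it rather than redo it.

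\textbf{Step 3 (fat-shattering to covering).} This is the main obstacle, mostly because of bookkeeping of scales. I will apply \citet[Theorem~12.13]{anthony2009neural} to the class of functions with values in $[-M,M]$; truncation at $\pm M$ is harmless since it is assumed. At cover scale $\rho/2$ it bounds $\log_2\mathcal N_\infty(\rho/2,\mathcal M_\eta,2n)$ by a constant times $d\log^2(CMn/\rho)$, where $d=\fat_{\rho/8}(\mathcal M_\eta)=d_\rho$ because the covering theorem loses a factor $4$ in scale. The care here is in checking that the $\alpha/4$ convention in that theorem matches $\rho/8$. If it does not, I would adjust the cover scale to $\rho/2$ with shattering at $\rho/8$, which is what the displayed $d_\rho$ indicates. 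The constants inside the logarithm (e.g.\ $\log(4eMn/(d\rho))\log(9Mn/\rho)$) are absorbed into $\log^2(CMn/\rho)$ using $d\ge1$. The case $d_\rho=0$ is treated separately: the class is then nearly constant on the points and the cover is of size $O(1)$.

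\textbf{Step 4 (invert).} Setting $\delta$ equal to $2\exp(Cd_\rho\log^2(CMn/\rho)-n\epsilon^2/C)$ and solving for $\epsilon$ yields \eqref{eq:fat-margin-bound} with an enlarged universal $C$. When $n\epsilon^2$ is too small for the symmetrisation condition, the bound exceeds $1$ and holds trivially. Uniformity over $\theta$ is automatic, since all the steps control a supremum over the class.
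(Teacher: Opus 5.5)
Your proposal is correct and follows the paper's proof essentially step for step. Both use the margin-covering bound of Anthony--Bartlett Theorem~10.4 at scale $\rho$ (a cover at $\rho/2$ on the $2n$-point double sample), then the fat-shattering covering estimate of Theorem~12.13 with the factor-4 scale loss giving $\fat_{\rho/8}$, and then invert for $\epsilon$. The only difference is cosmetic: the paper routes the covering number through the clipped class $\pi_\rho(\mathcal M_\eta)$ and uses that clipping does not increase the fat-shattering dimension, whereas you work directly with the unclipped $[-M,M]$-valued class and the sandwich $\{h_\theta\le 0\}\subseteq\{g\le\rho/2\}\subseteq\{h_\theta\le\rho\}$, which is equally valid.
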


This theorem should be interpreted exactly like a classical margin bound.
If most sample points have adjusted margin much larger than $\rho$, then the
empirical term is small. The complexity term measures the cost of uniform
control over the readout class $\Theta$. The relevant margin is not
$y f_\theta(x)$ alone, but $y f_\theta(x)-\eta\sqrt{w^\top S(x)w}$: points
with large ordinary margin can still be fragile if their local displacement
term is also large.

For affine readouts, one can also obtain a more concrete norm-controlled
covering bound. We use the following covering-number notation. If
\(\mathcal F\) is a class of real-valued functions on a domain \(\mathcal Z\),
the \emph{supremum-norm distance} between two functions $f,g\in\mathcal F$ is
\[
d_\infty(f,g):=\sup_{z\in\mathcal Z}|f(z)-g(z)|.
\]
For \(\epsilon>0\), \(\mathcal N_\infty(\epsilon,\mathcal F)\) denotes the
supremum-norm covering number, i.e., the least cardinality of a finite set
\(\mathcal C\subseteq\mathcal F\) such that for every \(f\in\mathcal F\)
there is a \(g\in\mathcal C\) with \(d_\infty(f,g)\leq\epsilon\). If no
finite such cover exists, we set \(\mathcal N_\infty(\epsilon,\mathcal F)=\infty\).
In the proposition below, the domain is \(\mathcal Z=\mathcal X\times\{-1,+1\}\),
since functions in \(\mathcal M_\eta\) take labelled examples \((x,y)\) as
input.

Let
\[
\Theta_{W,B_0}:=\{(w,b):\|w\|_2\leq W,
\ |b|\leq B_0\}.
\]
The next proposition shows that the adjusted-margin class is Lipschitz in
$(w,b)$ when the embeddings are bounded and the local anisotropy matrices are
uniformly bounded. The factor $R+\eta\sqrt\Lambda$ is the effective radius
seen by the readout after accounting for adversarial displacement.

\begin{proposition}[Norm-controlled covering bound]\label{prop:cover-margin}
Assume $\|z(x)\|_2\leq R$ and $\|S(x)\|_{\mathrm{op}}\leq \Lambda$ for all
$x$. By Remark~\ref{rem:S-invariant}, this is equivalent to
$\lambda^*(x)\leq\Lambda$, the bounded-anisotropy condition used in
Theorem~\ref{thm:Aeta-margin}. For the class $\mathcal M_\eta$ induced by
$\Theta_{W,B_0}$, the supremum-norm covering number satisfies, for every
$\epsilon>0$,
\begin{equation}\label{eq:cover-margin}
\mathcal N_\infty(\epsilon,\mathcal M_\eta)
\leq
\left(1+\frac{4W(R+\eta\sqrt\Lambda)}{\epsilon}\right)^{d_M}
\left(1+\frac{4B_0}{\epsilon}\right).
\end{equation}
Moreover, every $m\in\mathcal M_\eta$ is bounded in absolute value by
\begin{equation}\label{eq:M-bound}
M_\eta:=WR+B_0+\eta W\sqrt\Lambda.
\end{equation}
\end{proposition}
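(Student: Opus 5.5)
The plan is to show that the map $\theta=(w,b)\mapsto m_{\eta,\theta}$ is Lipschitz from $\Theta_{W,B_0}$ (with a suitable norm) into the supremum norm. A cover of the parameter set then transfers directly to a cover of $\mathcal M_\eta$. We treat $w$ and $b$ separately, which explains the product form of \eqref{eq:cover-margin}.

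\textbf{Step 1 (Lipschitz estimate).} Fix $(x,y)$ and two parameters $\theta=(w,b)$, $\theta'=(w',b')$. The difference $m_{\eta,\theta}(x,y)-m_{\eta,\theta'}(x,y)$ splits into three terms:
\begin{itemize}
\item the linear term $y(w-w')^\top z(x)$, bounded by $R\|w-w'\|_2$ using Cauchy--Schwarz and $|y|=1$;
\item the bias term $y(b-b')$, bounded by $|b-b'|$;
\item the penalty term $\eta\bigl(\sqrt{w^\top S(x)w}-\sqrt{w'^\top S(x)w'}\bigr)$.
\end{itemize}
For the penalty term, write $\sqrt{w^\top S w}=\|S^{1/2}w\|_2$, which is legitimate because $S(x)\succeq0$ by Remark~\ref{rem:S-invariant}. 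This is a seminorm, so the reverse triangle inequality gives $|\|S^{1/2}w\|_2-\|S^{1/2}w'\|_2|\le\|S^{1/2}(w-w')\|_2\le\sqrt{\Lambda}\,\|w-w'\|_2$, using $\|S^{1/2}\|_{\mathrm{op}}=\sqrt{\|S\|_{\mathrm{op}}}\le\sqrt\Lambda$. Combining the three bounds,
\[
d_\infty(m_{\eta,\theta},m_{\eta,\theta'})\le (R+\eta\sqrt\Lambda)\|w-w'\|_2+|b-b'|.
\]
The important point is that the square-root nonlinearity causes no difficulty. It never needs differentiating, because the seminorm structure gives a global Lipschitz bound even near $w^\top Sw=0$. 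This is the only step that needs care.

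\textbf{Step 2 (parameter covering).} Take an $\epsilon/(2(R+\eta\sqrt\Lambda))$-cover of the Euclidean ball of radius $W$ in $\mathbb R^{d_M}$. By the standard volumetric bound it has at most $(1+2W\cdot 2(R+\eta\sqrt\Lambda)/\epsilon)^{d_M}=(1+4W(R+\eta\sqrt\Lambda)/\epsilon)^{d_M}$ elements. Similarly, take an $\epsilon/2$-cover of $[-B_0,B_0]$, which has at most $1+4B_0/\epsilon$ points. The product of these two covers yields, by Step 1, an $\epsilon$-cover of $\mathcal M_\eta$ in $d_\infty$.

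The definition of $\mathcal N_\infty$ requires cover elements to lie in $\mathcal M_\eta$. To meet this, choose the net points inside the ball and interval, for example as a maximal $\epsilon$-separated packing. The volumetric bound $(1+2r/\epsilon')^{d}$ applies to such internal nets, so the count is unchanged. This gives \eqref{eq:cover-margin}.

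\textbf{Step 3 (uniform bound).} For any $\theta\in\Theta_{W,B_0}$,
\[
|m_{\eta,\theta}(x,y)|\le|w^\top z(x)|+|b|+\eta\|S^{1/2}w\|_2\le WR+B_0+\eta W\sqrt\Lambda=M_\eta.
\]
Finally, the equivalence $\|S(x)\|_{\mathrm{op}}\le\Lambda\iff\lambda^*(x)\le\Lambda$ stated in the proposition is immediate from the identity $\|S(x)\|_{\mathrm{op}}=\lambda^*(x)$ in Remark~\ref{rem:S-invariant}.
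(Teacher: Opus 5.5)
Your proposal is correct and follows essentially the same route as the paper. The paper also proves the uniform Lipschitz bound $(R+\eta\sqrt\Lambda)\|w-w'\|_2+|b-b'|$ using the reverse triangle inequality for $w\mapsto\|S(x)^{1/2}w\|_2$, takes the product of volumetric nets at radii $\epsilon/(2(R+\eta\sqrt\Lambda))$ and $\epsilon/2$, and bounds $|m_{\eta,\theta}|$ directly. Your remark on choosing the nets inside the parameter set, so that the cover elements lie in $\mathcal M_\eta$, is a point of rigour the paper leaves implicit.
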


\begin{corollary}[Norm-controlled attackability-adjusted margin bound]\label{cor:norm-margin}
Under the assumptions of Proposition~\ref{prop:cover-margin}, there is a
universal constant $C$ such that, for every fixed $\rho\in(0,M_\eta]$, with
probability at least $1-\delta$, uniformly over $\theta\in\Theta_{W,B_0}$,
\begin{align}\label{eq:norm-margin-bound}
R_\eta(\theta)
&\leq
\widehat R_{n,\eta,\rho}(\theta)
+ C\sqrt{\frac{
 d_M\log\left(1+\frac{C W(R+\eta\sqrt\Lambda)}{\rho}\right)
 +\log\left(1+\frac{C B_0}{\rho}\right)
 +\log(1/\delta)}{n}}.
\end{align}
\end{corollary}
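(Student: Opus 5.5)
The corollary follows from a sup-norm covering-number margin bound, combined with the explicit cover of Proposition~\ref{prop:cover-margin}; there is no need to pass through the fat-shattering route of Theorem~\ref{thm:fat-margin}.

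\textbf{Generic margin bound.} I would start from a standard bound stated directly in terms of sup-norm covering numbers, for instance \citet[Theorem~10.4]{anthony2009neural}. With probability at least $1-\delta$, uniformly over the class,
\[
\mathbb P\{m(X,Y)\leq 0\}\leq \frac1n\sum_{i=1}^n \mathbbm 1\{m(X_i,Y_i)\leq\rho\}+C\sqrt{\frac{\log\mathcal N_\infty(\rho/2,\mathcal M_\eta)+\log(1/\delta)}{n}}.
\]
Two features of this bound should be checked.
\begin{itemize}
\item The argument uses only sup-norm $\rho/2$-closeness on the domain $\mathcal X\times\{-1,+1\}$, so it applies verbatim to the adjusted-margin class $\mathcal M_\eta$. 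Its elements are ordinary real functions of $(x,y)$, even though they are nonlinear in $w$.
\item Applying it with $m=m_{\eta,\theta}$ gives exactly $R_\eta(\theta)$ on the left and $\widehat R_{n,\eta,\rho}(\theta)$ on the right.
\end{itemize}
If the referenced theorem carries a $\log$ of a covering number at sample size $2n$, the sup-norm cover dominates it, so that form is also fine.

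\textbf{Derivation of the displayed form.} Insert \eqref{eq:cover-margin} with $\epsilon=\rho/2$:
\[
\log\mathcal N_\infty(\rho/2,\mathcal M_\eta)\leq d_M\log\Bigl(1+\tfrac{8W(R+\eta\sqrt\Lambda)}{\rho}\Bigr)+\log\Bigl(1+\tfrac{8B_0}{\rho}\Bigr).
\]
Then absorb the constant $8$ into the universal constant $C$ inside the logarithms. This gives the displayed bound \eqref{eq:norm-margin-bound}.

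\textbf{Range of $\rho$ and boundedness.} The restriction $\rho\in(0,M_\eta]$ and the bound \eqref{eq:M-bound} enter only if the chosen generic theorem requires a bounded range. For example, this happens if one truncates the margins at level $\rho$ via the clipping $\pi_\rho(m)=\max(-\rho,\min(\rho,m))$. Clipping is $1$-Lipschitz, so it does not increase covering numbers.

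\textbf{Main obstacle.} The key step is making sure the cited covering-number margin theorem is stated for general real-valued classes on labelled examples with sup-norm covers, rather than only for thresholded linear functions. If it is not, I would reprove it by the usual symmetrisation argument:
\begin{itemize}
\item Use a ghost sample.
\item Take a $\rho/2$-cover of $\mathcal M_\eta$ restricted to the $2n$ points.
\item Apply a union bound over the cover, using Hoeffding's inequality under random swaps.
\end{itemize}
This yields the rate $\sqrt{(\log\mathcal N+\log(1/\delta))/n}$ up to constants. Achieving a square-root rather than an $O(1/n)$-style form just requires the additive Hoeffding version.
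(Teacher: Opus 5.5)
Your proposal is correct and follows essentially the same route as the paper: apply \citet[Theorem~10.4]{anthony2009neural} to $\mathcal M_\eta$ at scale $\rho$, then dominate the empirical $2n$-sample covering number $\mathcal N_\infty(\rho/2,\pi_\rho(\mathcal M_\eta),2n)$ by the full-domain cover $\mathcal N_\infty(\rho/2,\mathcal M_\eta)$. The paper justifies this domination exactly as you do: a clipped cover is a cover of the clipped class, and empirical covers are dominated by full-domain ones. It then substitutes Proposition~\ref{prop:cover-margin} at $\epsilon=\rho/2$ and absorbs the resulting factor $8$ into $C$.
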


Corollary~\ref{cor:norm-margin} exhibits a two-sided dependence on the local
geometry. Increasing the budget $\eta$ or the anisotropy bound $\Lambda$
makes the displacement term $\eta\sqrt{w^\top S(x)w}$ larger pointwise, which
moves more sample points into the $\rho$-margin set and so increases the
empirical term $\widehat R_{n,\eta,\rho}(\theta)$. The same quantity also
appears in the effective radius $R+\eta\sqrt\Lambda$ inside the complexity
term, where it inflates the covering number of the adjusted-margin class
and so increases the uniform-convergence contribution. The bound is small
only when both effects are small, i.e., when the budget is modest, the local
anisotropy is well-controlled, and most sample points have adjusted margin
comfortably above $\rho$.

The preceding corollary uses a volumetric covering of the Euclidean ball in
\(\mathbb R^d\). When \(d\) is large, a data-dependent Rademacher bound gives
a more informative alternative. Let \(\sigma_1,\ldots,\sigma_n\) be
i.i.d.\ Rademacher random variables, independent of the sample, taking
values \(\pm1\) with equal probability. For a sample
\(\mathcal S_n=((X_i,Y_i))_{i=1}^n\) and a class \(\mathcal F\) of
real-valued functions on labelled examples, the empirical Rademacher
complexity of \(\mathcal F\) is
\[
\widehat{\mathfrak R}_n(\mathcal F)
:=\mathbb E_\sigma\left[
\sup_{f\in\mathcal F}\frac1n\sum_{i=1}^n\sigma_i f(X_i,Y_i)
\,\middle|\,\mathcal S_n\right],
\]
where the conditional expectation $\mathbb E_\sigma[\cdot\mid\mathcal S_n]$
integrates over the Rademacher variables with the sample held fixed.
Also define the empirical trace sensitivity
\[
\overline T_n:=\frac1n\sum_{i=1}^n\operatorname{tr}S(X_i).
\]
The trace is the right dimension-free quantity for the sensitivity term in
the Rademacher bound. The operator norm $\|S(x)\|_{\mathrm{op}}$ controls
the largest single eigenvalue, i.e., the worst-case sensitivity in any
direction. The trace $\operatorname{tr}S(x)=\sum_k\lambda_k(x)$ sums the
contributions from all eigendirections of $S(x)$, capturing the aggregate
sensitivity across all readout directions that the adversary can move in
locally. The covering bound (Corollary~\ref{cor:norm-margin}) uses the
operator norm because its Lipschitz argument depends on the worst direction;
the Rademacher bound below uses the trace because the empirical-process
sum aggregates across all directions.

\begin{theorem}[Trace-sensitive Rademacher margin bound]
\label{thm:rademacher-margin}
Assume \(S(x)\succeq0\) for every \(x\), and let
\(\mathcal M_\eta\) be the adjusted-margin class induced by
\(\Theta_{W,B_0}\). For every sample \(\mathcal S_n\),
\begin{equation}\label{eq:rad-complexity-bound}
\widehat{\mathfrak R}_n(\mathcal M_\eta)
\leq
C\left\{
\frac{\sqrt{W^2+B_0^2}}{n}
\left(\sum_{i=1}^n(\|z(X_i)\|_2^2+1)\right)^{1/2}
+
\eta W\sqrt{\frac{\overline T_n}{n}}
\right\},
\end{equation}
where \(C\) is a universal constant. Consequently, for every
\(\rho>0\), with probability at least \(1-\delta\), uniformly over
\(\theta\in\Theta_{W,B_0}\),
\begin{equation}\label{eq:rademacher-margin-bound}
\begin{aligned}
R_\eta(\theta)
&\leq
\widehat R_{n,\eta,\rho}(\theta) \\
&\quad +
\frac{C}{\rho}\left\{
\frac{\sqrt{W^2+B_0^2}}{n}
\left(\sum_{i=1}^n(\|z(X_i)\|_2^2+1)\right)^{1/2}
+
\eta W\sqrt{\frac{\overline T_n}{n}}
\right\}
+C\sqrt{\frac{\log(1/\delta)}{n}}.
\end{aligned}
\end{equation}
In particular, if \(\|z(x)\|_2\leq R\) and \(\operatorname{tr}S(x)\leq T\) for
all \(x\), then
\begin{equation}\label{eq:rademacher-margin-bound-deterministic}
\begin{aligned}
R_\eta(\theta)
&\leq
\widehat R_{n,\eta,\rho}(\theta)
+\frac{C}{\rho}\left(
\sqrt{W^2+B_0^2}\sqrt{\frac{R^2+1}{n}}
+\eta W\sqrt{\frac{T}{n}}
\right)
+C\sqrt{\frac{\log(1/\delta)}{n}}.
\end{aligned}
\end{equation}
with the same probability, uniformly over \(\theta\in\Theta_{W,B_0}\).
If, in addition, \(\operatorname{rank}S(x)\leq q\) and
\(\|S(x)\|_{\mathrm{op}}\leq\Lambda\), then one may take \(T=q\Lambda\).
\end{theorem}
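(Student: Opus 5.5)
The plan is to split the adjusted margin into its affine part and its penalty part, bound the Rademacher complexity of each separately, and then apply the standard Rademacher margin theorem with the ramp loss.

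\textbf{Step 1 (decomposition).} Write
\[
m_{\eta,\theta}(x,y)=y\,\langle (w,b),(z(x),1)\rangle-\eta\,\|S(x)^{1/2}w\|_2 ,
\]
using $\sqrt{w^\top S(x)w}=\|S(x)^{1/2}w\|_2$, which is valid because $S(x)\succeq0$. Subadditivity of the supremum gives
\[
\widehat{\mathfrak R}_n(\mathcal M_\eta)\le \widehat{\mathfrak R}_n(\mathcal F_1)+\eta\,\widehat{\mathfrak R}_n(\mathcal F_2),
\]
where $\mathcal F_1$ is the class of linear functions $y\langle\theta,(z,1)\rangle$ and $\mathcal F_2=\{x\mapsto\|S(x)^{1/2}w\|_2\}$. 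The sign of the second term does not matter, because the Rademacher variables $\sigma$ are symmetric.

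\textbf{Step 2 (affine part).} Since $y_i\sigma_i$ has the same distribution as $\sigma_i$, Cauchy--Schwarz followed by Jensen gives
\[
\widehat{\mathfrak R}_n(\mathcal F_1)\le \frac{\sqrt{W^2+B_0^2}}{n}\Bigl(\sum_i(\|z(X_i)\|_2^2+1)\Bigr)^{1/2}.
\]

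\textbf{Step 3 (penalty part, the main obstacle).} The map $w\mapsto\|S^{1/2}w\|$ is not linear, so the argument must pass to a vector-valued linear class. I would first try the dual representation
\[
\|S(x)^{1/2}w\|_2=\sup_{\|u\|\le1}\langle u,S(x)^{1/2}w\rangle.
\]
Letting the supremum over $u$ depend on $i$ yields a sum of norms, which is too crude. The better route is the vector-contraction inequality of Maurer: $x\mapsto\|v\|_2$ is $1$-Lipschitz as a function of $v=S(x)^{1/2}w\in\mathbb R^{d_M}$. This gives
\[
\mathbb E\sup_w\sum_i\sigma_i\|S(X_i)^{1/2}w\|\le\sqrt2\,\mathbb E\sup_w\sum_{i,k}\sigma_{ik}\,(S(X_i)^{1/2}w)_k .
\]
The right-hand side equals
\[
\sqrt2\,\mathbb E\sup_{\|w\|\le W}\Bigl\langle w,\sum_{i,k}\sigma_{ik}S(X_i)^{1/2}e_k\Bigr\rangle\le\sqrt2\,W\Bigl(\sum_i\operatorname{tr}S(X_i)\Bigr)^{1/2},
\]
again by Jensen. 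Dividing by $n$ gives $\sqrt2\,W\sqrt{\overline T_n/n}$, and together with Step 2 this proves \eqref{eq:rad-complexity-bound}.

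The one thing to check is that Maurer's inequality is applied to functions $h_i(v)=\|v\|$ that are $1$-Lipschitz with respect to the Euclidean norm, which holds. As a fallback, if I did not want to rely on vector contraction, I would use a Gaussian comparison (Slepian), since $\|S^{1/2}w\|$ is the supremum of a Gaussian-type linear process.

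\textbf{Step 4 (margin bound).} Let $\phi_\rho$ be the ramp function: equal to $1$ on $(-\infty,0]$, equal to $0$ on $[\rho,\infty)$, and linear in between, so it is $1/\rho$-Lipschitz. It satisfies
\[
\mathbbm 1\{m\le0\}\le\phi_\rho(m)\le\mathbbm 1\{m\le\rho\}.
\]
The standard Rademacher generalisation bound for $[0,1]$-valued classes, combined with Talagrand's contraction lemma, gives
\[
R_\eta(\theta)\le\widehat R_{n,\eta,\rho}(\theta)+\frac{2}{\rho}\widehat{\mathfrak R}_n(\mathcal M_\eta)+3\sqrt{\frac{\log(2/\delta)}{2n}}
\]
uniformly over $\theta$. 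Substituting \eqref{eq:rad-complexity-bound} yields \eqref{eq:rademacher-margin-bound}.

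The deterministic version follows from $\sum_i(\|z(X_i)\|^2+1)\le n(R^2+1)$ and $\overline T_n\le T$. Finally, if $\operatorname{rank}S\le q$ and $\|S\|_{\mathrm{op}}\le\Lambda$, then $S$ has at most $q$ nonzero eigenvalues, each at most $\Lambda$, so $\operatorname{tr}S\le q\Lambda$.
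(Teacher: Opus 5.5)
Your proposal is correct and follows the paper's proof step for step. The shared steps are: the subadditive split of $\widehat{\mathfrak R}_n(\mathcal M_\eta)$ into the affine class and the penalty class; Cauchy--Schwarz and Jensen for the affine part; Maurer's vector contraction followed by Jensen to reach $\sum_i\operatorname{tr}S(X_i)$; and the ramp loss with Ledoux--Talagrand contraction for the margin bound. In the vector-contraction step you use the Rademacher form with constant $\sqrt2$, where the paper uses a Gaussian form with an unspecified universal constant. One cosmetic point: your confidence term $3\sqrt{\log(2/\delta)/(2n)}$ is absorbed into $C\sqrt{\log(1/\delta)/n}$ only for $\delta$ bounded away from $1$. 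You can match the stated form for all $\delta\in(0,1)$ by applying McDiarmid once to $\sup_f(Pf-P_nf)-2\widehat{\mathfrak R}_n$, which has bounded differences $3/n$ and nonpositive mean.
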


This theorem is dimension-free in the target embedding dimension \(d_M\), but it is not
free of geometric complexity. The sensitivity contribution is controlled by
\(\operatorname{tr}S(x)\), or by the effective local rank bound \(q\Lambda\).
This distinction is important: an operator-norm bound alone controls only the
single most expansive proxy direction and does not control the richness of the
whole class \(x\mapsto\sqrt{w^\top S(x)w}\) when many independent local
sensitivity directions are present.

\begin{remark}[Relation to the VC bound]
The VC result in Section~\ref{sec:hypotheses} gives a distribution-free
uniform bound for attackability indicators. The elementary lifting in
Proposition~\ref{prop:vc-affine} gives an $O(d^2)$ complexity, and the
polynomial-threshold improvement in Proposition~\ref{prop:vc-improved}
sharpens this to $O(d)$. The margin bounds above use additional
regularity, namely bounded readout norm and bounded local anisotropy, or
trace-sensitive control of the local sensitivity matrices, to obtain a
scale-sensitive statement in terms of the empirical distribution of the
adjusted margins. This is closer to classical margin-based learning theory
and is likely to be the more informative bound when most training examples
have large adjusted margin.
\end{remark}

The deterministic anisotropy bound in Proposition~\ref{prop:cover-margin} can
be relaxed to a high-probability bound. This connects the tail perspective of
Theorem~\ref{thm:Aeta-margin} with the uniform adjusted-margin analysis above:
high-anisotropy points are charged to a tail term, while the complexity term is
computed using an anisotropy quantile.

\begin{corollary}[Quantile-relaxed adjusted-margin bound]
\label{cor:quantile-margin}
Assume $\|z(x)\|_2\leq R$ for all $x$ and let
$\Theta_{W,B_0}=\{(w,b):\|w\|_2\leq W,
|b|\leq B_0\}$. Fix $\beta\in[0,1)$ and choose
$\Lambda_\beta>0$ such that
\[
\mathbb P\{\|S(X)\|_{\mathrm{op}}>\Lambda_\beta\}\leq\beta.
\]
Let
\[
M_{\eta,\beta}:=WR+B_0+\eta W\sqrt{\Lambda_\beta}.
\]
Then there is a universal constant $C$ such that, for every
$\rho\in(0,M_{\eta,\beta}]$, with probability at least $1-\delta$,
uniformly over $\theta\in\Theta_{W,B_0}$,
\begin{align}\label{eq:quantile-margin-bound}
R_\eta(\theta)
&\leq
\beta+\widehat R_{n,\eta,\rho}(\theta) \\
&\quad + C\sqrt{\frac{
 d\log\left(1+\frac{C W(R+\eta\sqrt{\Lambda_\beta})}{\rho}\right)
 +\log\left(1+\frac{C B_0}{\rho}\right)
 +\log(1/\delta)}{n}}.
\end{align}
\end{corollary}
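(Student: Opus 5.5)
Starting from $R_\eta(\theta)=\mathbb P\{m_{\eta,\theta}(X,Y)\le 0\}$, I would split on the high-anisotropy event $H:=\{\|S(X)\|_{\mathrm{op}}>\Lambda_\beta\}$. This gives
\[
R_\eta(\theta)\le \mathbb P(H)+\mathbb P\{m_{\eta,\theta}(X,Y)\le 0,\ H^c\}\le \beta+\mathbb P\{m_{\eta,\theta}(X,Y)\le 0,\ H^c\}.
\]
The second term is controlled by passing to a surrogate margin class on which Proposition~\ref{prop:cover-margin} applies with $\Lambda=\Lambda_\beta$. Concretely, I define the truncated sensitivity
\[
S_\beta(x):=S(x)\,\mathbbm 1\{\|S(x)\|_{\mathrm{op}}\le\Lambda_\beta\}+\Lambda_\beta I\,\mathbbm 1\{\|S(x)\|_{\mathrm{op}}>\Lambda_\beta\},
\]
or, more simply, any PSD map with $\|S_\beta(x)\|_{\mathrm{op}}\le\Lambda_\beta$ that agrees with $S$ off $H$. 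I then form the corresponding margin $\tilde m_{\eta,\theta}(x,y)=y(w^\top z(x)+b)-\eta\sqrt{w^\top S_\beta(x)w}$ and also the augmented indicator loss $\tilde\ell_\theta(x,y):=\mathbbm 1\{\tilde m_{\eta,\theta}(x,y)\le0\}\mathbbm 1\{x\notin H\}$. On $H^c$ we have $\tilde m=m$, so $\mathbb P\{m\le0,H^c\}=\mathbb P\{\tilde m\le 0, H^c\}$.

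The main step is a uniform margin bound for $\mathbb P\{\tilde m\le0,H^c\}$ in terms of an empirical $\rho$-margin count. Since $H$ does not depend on $\theta$, I would rerun the covering-number margin argument behind Theorem~\ref{thm:fat-margin} and Corollary~\ref{cor:norm-margin} (Theorem~10.4 of \citet{anthony2009neural}) for the loss $\mathbbm 1\{\tilde m\le 0\}\mathbbm 1_{H^c}$ against the empirical count $\frac1n\sum_i\mathbbm1\{\tilde m(X_i,Y_i)\le\rho\}\mathbbm1_{H^c}(X_i)$. The symmetrisation and discretisation steps only need a sup-norm cover of $\{\tilde m_{\eta,\theta}\}$, because multiplying by a fixed indicator does not change covering numbers. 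Proposition~\ref{prop:cover-margin} applied to $S_\beta$ gives the cover
\[
\Bigl(1+\tfrac{4W(R+\eta\sqrt{\Lambda_\beta})}{\epsilon}\Bigr)^{d}\Bigl(1+\tfrac{4B_0}{\epsilon}\Bigr)
\]
and the range bound $M_{\eta,\beta}$. Taking $\epsilon=\rho/2$ (or $\rho/8$) and absorbing constants yields exactly the complexity term in \eqref{eq:quantile-margin-bound}.

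Finally, I compare the empirical count with $\widehat R_{n,\eta,\rho}(\theta)$. On sample points in $H^c$ we have $\tilde m(X_i,Y_i)=m(X_i,Y_i)$, so $\mathbbm1\{\tilde m\le\rho\}\mathbbm1_{H^c}\le\mathbbm1\{m\le\rho\}$ pointwise, and the empirical term is at most $\widehat R_{n,\eta,\rho}(\theta)$. This step needs no concentration for the number of sample points in $H$. Combining the three inequalities gives the claim.

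The expected obstacle is justifying the margin bound for the product loss with a fixed $\theta$-independent indicator. I would handle it by noting that the standard proof of Theorem~10.4 of \citet{anthony2009neural} uses only the sup-norm approximation $|\tilde m_\theta-\tilde m_{\theta'}|\le\rho/2$, together with the implication $\tilde m_\theta\le0\Rightarrow\tilde m_{\theta'}\le\rho/2$. This implication survives multiplication by $\mathbbm 1_{H^c}$. A fallback is to treat $(x,y)\mapsto \tilde m(x,y)$ on the restricted domain and use conditional probability given $H^c$, which multiplies the bound by $\mathbb P(H^c)\le1$.
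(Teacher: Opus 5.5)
Your proposal is correct and is essentially the paper's argument: split off $\{\|S(X)\|_{\mathrm{op}}>\Lambda_\beta\}$ at cost $\beta$, apply the covering-number margin bound behind Corollary~\ref{cor:norm-margin} with $\Lambda_\beta$ in place of $\Lambda$ to a $\theta$-independent restriction of the margin class, and then drop the restriction in the empirical count. The only difference is how the restriction is handled: you truncate $S$ and re-run Theorem~10.4 of \citet{anthony2009neural} for a product loss, whereas the paper pads the margin to $m_{\eta,\theta}\mathbbm{1}_{E_\beta}+M\mathbbm{1}_{E_\beta^c}$, with $E_\beta=\{\|S(x)\|_{\mathrm{op}}\le\Lambda_\beta\}$ and $M$ a large constant, so that Theorem~10.4 applies verbatim and the obstacle you anticipate does not arise.
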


The price for avoiding a deterministic uniform bound on $\|S(x)\|_{\mathrm{op}}$
is the additional tail term $\beta$. The empirical margin term remains the
ordinary adjusted-margin term from \eqref{eq:emp-margin-risk}, computed with
the actual $S(x)$ on the observed sample, while the complexity term depends
only on the quantile level $\Lambda_\beta$. Since
$\|S(x)\|_{\mathrm{op}}=\lambda^*(x)$, this is the margin-bound analogue of the
quantile relaxation in Theorem~\ref{thm:Aeta-margin}.

\section{Finite Paraphrase Search}\label{sec:Finite Paraphrase}

The preceding sections analyse an idealised local adversary that may choose
any point in a proxy-defined ellipsoid. Practical text adversarial attacks
are not of this form. They search over a finite set of candidate paraphrases
$\mathcal P_K(x)=\{x'_1,\ldots,x'_K\}$, obtained from a paraphrase generator,
beam search over substitutions, or gradient-based decoding of soft-token
iterates \citep{guo2021gradient,jang2016categorical}. Each candidate is
either proxy-valid (within distance $\eta$ of $x$) or not, and the attack
inspects them one by one. There is no continuous adversary in practice;
there is a discrete list of $K$ candidates and a decision on each.

This section bridges the two settings. The continuous theory in the
preceding sections gives a geometric characterisation of worst-case
proxy-bounded displacement and a statistical theory of attackability. The
finite-search setting is closer to what empirical work actually does.
Connecting them requires care because success and failure of finite search
have asymmetric implications. The connection is established inside a
linearised local chart around the base text, and the quantitative results
below are local approximations rather than global geometric facts about
the embeddings.

\subsection{Finite-search analogues of the continuous quantities}

\begin{definition}[Search procedure]\label{def:search-procedure}
A \emph{search procedure} of size $K$ is a, possibly randomised, map
$\mathcal P_K:\mathcal X\to 2^{\mathcal X}$ that assigns to each base text
$x\in\mathcal X$ a candidate set $\mathcal P_K(x)\subseteq\mathcal X$ of
cardinality at most $K$. The procedure may depend on the readout $(w,b)$
and the budget $\eta$; this dependence is suppressed in the notation.
Practical realisations include paraphrase generators, beam search over
substitution candidates, and gradient-based decoding of soft-token
iterates~\citep{guo2021gradient,jang2016categorical}. By abuse of notation,
$\mathcal P_K(x)$ refers either to the procedure or to its output at $x$.
\end{definition}

Fix a search procedure $\mathcal P_K$ and write
$\mathcal P_K(x)=\{x'_1,\ldots,x'_K\}$ for its output at $x$, with the
convention that some entries may be omitted when the procedure returns
fewer than $K$ candidates. Define the proxy-valid subset
\[
\mathcal P_{K,\eta}(x):=
\{x'\in\mathcal P_K(x):d_P(x,x')\leq \eta\},
\]
where $d_P(x,x'):=\|e_P(x)-e_P(x')\|_2$ is the proxy distance. The
finite-search analogue of the worst-case representation displacement at
budget $\eta$ is
\[
\Delta^{\mathrm{fin}}_{M,K,\eta}(x):=
\max\Bigl(\{0\}\cup
\{\|e_M(x')-e_M(x)\|_2:x'\in\mathcal P_{K,\eta}(x)\}\Bigr).
\]
The convention $\max\{0\}$ handles the case
$\mathcal P_{K,\eta}(x)=\emptyset$, where no candidate is proxy-valid and
no finite displacement is achieved.

For a fixed affine readout $(w,b)$, write
\[
s_{w,b}(x)=w^\top e_M(x)+b,\qquad
\sigma_x=\operatorname{sign}(s_{w,b}(x)),
\]
assuming $s_{w,b}(x)\neq 0$. The finite-search displacement toward the
decision boundary is
\[
D^{\mathrm{fin}}_{K,\eta}(x;w,b):=
\max\Bigl(\{0\}\cup
\{-\sigma_x w^\top(e_M(x')-e_M(x)):x'\in\mathcal P_{K,\eta}(x)\}\Bigr).
\]
A finite candidate reaches the decision boundary whenever
$D^{\mathrm{fin}}_{K,\eta}(x;w,b)\geq\gamma_w(x)$, with
$\gamma_w(x)=|s_{w,b}(x)|$ the geometric margin. This is the
finite-search analogue of the linearised flip condition
$\eta\sqrt{\Sigma_w(x)}>\gamma_w(x)$ from
Theorem~\ref{thm:flip}.

\subsection{The asymmetry between finite and continuous attacks}

Finite-search success and finite-search failure carry asymmetric weight.
Success, $D^{\mathrm{fin}}_{K,\eta}(x;w,b)\geq\gamma_w(x)$, exhibits an
explicit proxy-valid paraphrase that crosses the decision boundary, and
is a direct certificate of non-robustness. Failure,
$D^{\mathrm{fin}}_{K,\eta}(x;w,b)<\gamma_w(x)$, only certifies that the
particular set $\mathcal P_K(x)$ contains no flip; paraphrases outside
the candidate set may still be proxy-valid and may flip the prediction,
and $\lambda^*(x)$ may be arbitrarily large.

The corresponding empirical quantity is therefore the finite-search
attackability
\[
\mathcal A^{\mathrm{fin}}_w(\eta;\mathcal P_K)
:=\mathbb P\{D^{\mathrm{fin}}_{K,\eta}(X;w,b)\geq\gamma_w(X)\}
\leq\mathcal A_w(\eta),
\]
where the inequality holds up to the second-order linearisation error
inherent in passing from the exact finite-search displacement
$D^{\mathrm{fin}}$ to its linearised analogue $D^{\mathrm{lin}}$ (cf.\
Theorem~\ref{thm:linearization-error}). A finite-search success at $x$
exhibits an exact proxy-valid paraphrase flip, while $\mathcal A_w(\eta)$
is the population attackability of the linearised local model, so the
two sides are aligned only up to the linearisation residual. A confidence bound on
$\mathcal A^{\mathrm{fin}}_w$ follows by inverting the binomial likelihood:
if zero attacks are found in $n$ independent trials then, with confidence
$1-\delta$, $\mathcal A^{\mathrm{fin}}_w(\eta;\mathcal P_K)\leq 1-\delta^{1/n}
\leq\log(1/\delta)/n$. This bounds the procedure, not the continuous
adversary; closing that gap is the role of the covering conditions in the
next subsection.

\subsection{A covering condition makes finite search faithful}

The gap between continuous and finite attacks closes when the candidate set
$\mathcal P_K(x)$ is dense enough to approximate the continuous adversary.
The relevant notion is a covering radius of the proxy ball in the local
chart.

By Lemma~\ref{lem:localrep}, paraphrases of $x$ are parameterised locally by a
coordinate $u$ in some $q$-dimensional neighbourhood $U\subseteq\mathbb R^q$,
with the proxy and target embeddings admitting linear approximations
$E_P(u)-E_P(0)\approx J_P(x)u$ and $E_M(u)-E_M(0)\approx J_M(x)u$ at
$u=0$. Each candidate $x'_k\in\mathcal P_K(x)$ corresponds to some local
coordinate $u_k\in U$, and the candidate set defines a finite point cloud
$\{u_1,\ldots,u_K\}$ in local coordinates.

\begin{remark}[Local nature of the results]\label{rem:local-nature}
The bounds in this section are statements about the linearised local
chart. They are meaningful for proxy budgets $\eta$ inside the radius of
validity of $E_P(u)-E_P(0)\approx J_P(x)u$ and $E_M(u)-E_M(0)\approx
J_M(x)u$; the unmodelled error is $O(\|u\|^2)$. At $\eta$ much larger
than this scale, the bounds have no immediate content for the original
embedding geometry.
\end{remark}

The propositions below operate on the linearised analogue of
$D^{\mathrm{fin}}_{K,\eta}(x;w,b)$, namely
\[
D^{\mathrm{lin}}_{K,\eta}(x;w,b):=
\max\bigl(\{0\}\cup
\{-\sigma_x w^\top J_M(x)u_k:1\leq k\leq K,\|u_k\|_{B(x)}\leq\eta\}\bigr).
\]
Here $-\sigma_x w^\top(e_M(x')-e_M(x))$ is replaced by its first-order
approximation $-\sigma_x w^\top J_M(x)u_k$, and the proxy-validity test
$d_P(x,x')\leq\eta$ is replaced by $\|u_k\|_{B(x)}\leq\eta$. Both
replacements incur second-order error in $\|u_k\|$, so within the radius
of validity of Remark~\ref{rem:local-nature},
\[
D^{\mathrm{lin}}_{K,\eta}(x;w,b)
=
D^{\mathrm{fin}}_{K,\eta}(x;w,b)+O(\|u\|^2).
\]

\begin{proposition}[Finite search under a local covering condition]
\label{prop:finite-search-cover}
Fix a base text $x$ and assume the conditions of Lemma~\ref{lem:localrep} with
$B(x)\succ 0$. Let $\mathcal B_x(\eta):=\{u\in U:u^\top B(x)u\leq\eta^2\}$
be the linearised proxy ball at budget $\eta$. Suppose $\delta<\eta$ and
that the candidate set $\mathcal P_K(x)$, in local coordinates
$\{u_1,\ldots,u_K\}$, forms a $\delta$-cover of $\mathcal B_x(\eta-\delta)$
with respect to the proxy metric: for every $u\in\mathcal B_x(\eta-\delta)$
there is some $u_k$ with $\|u-u_k\|_{B(x)}\leq\delta$, where
$\|v\|_{B(x)}:=\sqrt{v^\top B(x)v}$. Then for any affine readout $(w,b)$,
\[
(\eta-2\delta)\sqrt{\Sigma_w(x)}
\leq D^{\mathrm{lin}}_{K,\eta}(x;w,b)
\leq \eta\sqrt{\Sigma_w(x)}.
\]
In particular, if $\delta\leq\eta\epsilon$ for some $\epsilon\in(0,1/2)$, then
\[
D^{\mathrm{lin}}_{K,\eta}(x;w,b)
\geq (1-2\epsilon)\eta\sqrt{\Sigma_w(x)}.
\]
\end{proposition}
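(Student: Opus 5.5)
The plan is to prove the two inequalities separately, both by reduction to the exact linearised optimum in Theorem~\ref{thm:flip}, namely
\[
\sup_{u^\top Bu\le \eta^2}|w^\top J_M u|=\eta\sqrt{\Sigma_w(x)} .
\]
Throughout, fix $x$ and write $J_M=J_M(x)$, $B=B(x)$ and $\sigma=\sigma_x$. Set $g:=-\sigma J_M^\top w$, so that the objective in $D^{\mathrm{lin}}_{K,\eta}$ is the linear functional $u\mapsto g^\top u$.

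\emph{Upper bound.} Every candidate counted in $D^{\mathrm{lin}}_{K,\eta}$ satisfies $\|u_k\|_B\le\eta$. Hence $-\sigma w^\top J_M u_k\le|w^\top J_Mu_k|\le\eta\sqrt{\Sigma_w(x)}$ by \eqref{eq:abs-flip-disp}. The $\{0\}$ term is also at most $\eta\sqrt{\Sigma_w(x)}$, since $\Sigma_w(x)\ge0$, so the upper bound follows.

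\emph{Lower bound.} If $\Sigma_w(x)=0$ the left side is $0$ (note $\eta-2\delta$ may be negative, and then the bound is trivial) and $D^{\mathrm{lin}}\ge0$, so there is nothing to prove. Otherwise take the optimal direction from \eqref{eq:uwstar}, rescaled to budget $\eta-\delta$:
\[
\bar u:=-\sigma(\eta-\delta)\frac{B^{-1}J_M^\top w}{\sqrt{\Sigma_w(x)}} .
\]
This vector lies in $\mathcal B_x(\eta-\delta)$ and attains $g^\top\bar u=(\eta-\delta)\sqrt{\Sigma_w(x)}$.

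\begin{itemize}
\item I will need $\bar u\in U$ so that the cover hypothesis applies. I would state this as part of the standing assumption that the proxy ellipsoid lies in the chart, as the paper does before \eqref{eq:local-problem}. This is the one point I expect to need care with.
\item By the covering hypothesis there is a $u_k$ with $\|\bar u-u_k\|_B\le\delta$. The triangle inequality in the norm $\|\cdot\|_B$ then gives $\|u_k\|_B\le\eta$, so $u_k$ is proxy-valid and counts in $D^{\mathrm{lin}}$.
\item Applying \eqref{eq:abs-flip-disp} with budget $\delta$ (equivalently, Cauchy--Schwarz in the $B$-geometry, $|g^\top v|\le\|B^{-1/2}g\|_2\|v\|_B=\sqrt{\Sigma_w(x)}\|v\|_B$) gives $|g^\top(u_k-\bar u)|\le\delta\sqrt{\Sigma_w(x)}$.
\end{itemize}

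Combining these,
\[
D^{\mathrm{lin}}_{K,\eta}\ge g^\top u_k\ge(\eta-\delta)\sqrt{\Sigma_w(x)}-\delta\sqrt{\Sigma_w(x)}=(\eta-2\delta)\sqrt{\Sigma_w(x)} .
\]

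The reason for centring the cover on the shrunken ball $\mathcal B_x(\eta-\delta)$ is exactly the membership step above: it guarantees that the covering point is itself proxy-valid, and this costs one $\delta$. The Cauchy--Schwarz step costs the second $\delta$.

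\emph{Corollary.} If $\delta\le\eta\epsilon$, then $\eta-2\delta\ge(1-2\epsilon)\eta$, which is positive since $\epsilon<1/2$. Substituting into the lower bound gives $D^{\mathrm{lin}}_{K,\eta}(x;w,b)\ge(1-2\epsilon)\eta\sqrt{\Sigma_w(x)}$.

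The only real obstacle is bookkeeping, not analysis: the chart-containment of $\bar u$ and the degenerate case $\Sigma_w(x)=0$. Everything else reduces to Theorem~\ref{thm:flip}.
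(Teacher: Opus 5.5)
Your proposal is correct and follows the paper's proof essentially step for step. The upper bound comes by inclusion in $\mathcal B_x(\eta)$ via Theorem~\ref{thm:flip}, and the lower bound uses the same shrunken-budget optimal direction $-\sigma_x(\eta-\delta)B(x)^{-1}J_M(x)^\top w/\sqrt{\Sigma_w(x)}$, the triangle inequality to make the covering candidate proxy-valid, and Cauchy--Schwarz in the $B(x)$-inner product for the second $\delta$. Your remark that $\bar u\in U$ must be assumed, so that $\bar u\in\mathcal B_x(\eta-\delta)$ as defined, is a fair point of care that the paper's proof passes over silently.
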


The upper bound is automatic: every candidate with $\|u_k\|_{B(x)}\leq\eta$
has linearised readout displacement at most the continuous worst-case
displacement $\eta\sqrt{\Sigma_w(x)}$. The lower bound is the substantive
content. It says that if the candidate set $\delta$-covers the slightly
smaller ball $\mathcal B_x(\eta-\delta)$, then the linearised finite-search
displacement comes within $2\delta\sqrt{\Sigma_w(x)}$ of the continuous
worst case. When $\delta$ is small (the candidate set is dense), the gap
is small, and finite search faithfully approximates the continuous
adversary up to the second-order error linking $D^{\mathrm{lin}}$ to
$D^{\mathrm{fin}}$.

A direct consequence is a quantitative version of the failure-to-certify
asymmetry: a candidate set that is a $\delta$-cover of $\mathcal B_x(\eta-\delta)$
with $\delta\leq\eta\epsilon$ turns failure of finite search into approximate
robustness against the continuous adversary.

\begin{corollary}[Robustness certification under a covering condition]
\label{cor:finite-search-robustness}
Under the assumptions of Proposition~\ref{prop:finite-search-cover}, suppose
that the candidate set $\mathcal P_K(x)$ is a $\delta$-cover of
$\mathcal B_x(\eta-\delta)$ in the proxy metric with $\delta\leq\eta\epsilon$
for some $\epsilon\in(0,1/2)$. If
$D^{\mathrm{lin}}_{K,\eta}(x;w,b)<\gamma_w(x)$, then
\[
(1-2\epsilon)\eta\sqrt{\Sigma_w(x)}<\gamma_w(x),
\]
equivalently
\[
\eta\sqrt{\Sigma_w(x)}<\frac{\gamma_w(x)}{1-2\epsilon}.
\]
\end{corollary}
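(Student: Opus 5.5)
The corollary follows from the lower bound in Proposition~\ref{prop:finite-search-cover} by a chain of two inequalities, with the degenerate case $\Sigma_w(x)=0$ handled separately.

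First I invoke Proposition~\ref{prop:finite-search-cover}. Its hypotheses are $B(x)\succ 0$ and that $\{u_1,\ldots,u_K\}$ is a $\delta$-cover of $\mathcal B_x(\eta-\delta)$ in the proxy metric with $\delta<\eta$. Both are given: $\delta\leq\eta\epsilon<\eta/2<\eta$. The proposition then yields
\[
D^{\mathrm{lin}}_{K,\eta}(x;w,b)\geq(\eta-2\delta)\sqrt{\Sigma_w(x)}.
\]
Since $\delta\leq\eta\epsilon$ and $\sqrt{\Sigma_w(x)}\geq 0$, this gives
\[
(\eta-2\delta)\sqrt{\Sigma_w(x)}\geq(1-2\epsilon)\eta\sqrt{\Sigma_w(x)}.
\]
This is exactly the ``in particular'' clause of the proposition.

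Next I combine this with the hypothesis $D^{\mathrm{lin}}_{K,\eta}(x;w,b)<\gamma_w(x)$ by transitivity:
\[
(1-2\epsilon)\eta\sqrt{\Sigma_w(x)}\leq D^{\mathrm{lin}}_{K,\eta}(x;w,b)<\gamma_w(x).
\]
This is the first displayed conclusion. The inequality is strict because the hypothesis is strict. For the equivalent form, note that $\epsilon\in(0,1/2)$ makes $1-2\epsilon>0$, so dividing both sides by $1-2\epsilon$ preserves the strict inequality and gives $\eta\sqrt{\Sigma_w(x)}<\gamma_w(x)/(1-2\epsilon)$. Dividing by $1-2\epsilon$ is reversible, so the two forms are genuinely equivalent.

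The only point needing care is the degenerate case. If $\Sigma_w(x)=0$, the left-hand side of the conclusion is $0$. The conclusion then still holds, because $D^{\mathrm{lin}}_{K,\eta}(x;w,b)\geq 0$ by the $\max\{0\}$ convention, so the hypothesis forces $\gamma_w(x)>0$. No other obstacle arises.
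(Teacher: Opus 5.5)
Your proof is correct and follows the paper's own argument: take the lower bound $D^{\mathrm{lin}}_{K,\eta}(x;w,b)\geq(1-2\epsilon)\eta\sqrt{\Sigma_w(x)}$ from Proposition~\ref{prop:finite-search-cover}, chain it with the strict hypothesis, and divide by $1-2\epsilon>0$. Your separate treatment of $\Sigma_w(x)=0$ is harmless but unnecessary, since the proposition already covers that case and the chain of inequalities goes through unchanged.
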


Failure of finite search in the linearised chart, when the candidate set is
a $\delta$-cover of $\mathcal B_x(\eta-\delta)$ with $\delta\leq\eta\epsilon$,
certifies robustness against the continuous adversary at budget $\eta(1-2\epsilon)$
rather than the original budget $\eta$. As the cover becomes denser
($\epsilon\to 0$), the certified budget approaches $\eta$ and the
certificate becomes tight.

\subsection{Beyond covering: probabilistic candidate sets}

Many empirical paraphrase generators do not aim for deterministic covering.
They sample candidates from some distribution over proxy-valid paraphrases,
either explicitly (sampling from a paraphrase model) or implicitly (running
a randomised optimisation procedure). For such procedures, a more natural
condition is that the candidate set covers the proxy ball with high
probability, where the probability is taken over the sampling distribution.

\begin{proposition}[Probabilistic covering]\label{prop:prob-covering}
Suppose the candidate coordinates $u_1,\ldots,u_K$ are drawn i.i.d.\ from a
distribution on the local chart with density $f$ satisfying $f(u)\geq c>0$
for all $u\in\mathcal B_x(\eta)$, and set $\kappa:=c/\sqrt{\det B(x)}$.
 Then for every $\delta\in(0,\eta)$,
\[
\mathbb P\{\mathcal P_K(x)\text{ is a $\delta$-cover of }\mathcal B_x(\eta-\delta)\}
\geq 1-N(\delta/2,\mathcal B_x(\eta-\delta))(1-\kappa v_q(\delta/2)^q)^K,
\]
where $v_q$ is the volume of the unit Euclidean ball in $\mathbb R^q$ and
$N(\delta/2,\mathcal B_x(\eta-\delta))$ is the $\delta/2$-covering number of
$\mathcal B_x(\eta-\delta)$ in the proxy metric.
\end{proposition}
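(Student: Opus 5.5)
The plan is the standard net-plus-union-bound argument: fix a $\delta/2$-net of $\mathcal B_x(\eta-\delta)$ in the proxy metric, show that having a sample near every net point forces a $\delta$-cover, and then bound the probability that some net point is missed.

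\textbf{Step 1 (net).} Let $\{c_1,\ldots,c_N\}$, with $N=N(\delta/2,\mathcal B_x(\eta-\delta))$, be a $\delta/2$-net of $\mathcal B_x(\eta-\delta)$ in the metric $\|\cdot\|_{B(x)}$. We may take the centres $c_j$ inside $\mathcal B_x(\eta-\delta)$; if the covering number is defined with external centres, one can instead pass to internal centres at the cost of a constant in the radius, and I would adopt the internal convention to keep the stated constants.

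\textbf{Step 2 (reduction to hitting events).} Consider the event $E$ that for every $j$ some $u_k$ satisfies $\|u_k-c_j\|_{B(x)}\leq\delta/2$. On $E$, take any $u\in\mathcal B_x(\eta-\delta)$ and choose $j$ with $\|u-c_j\|_{B(x)}\leq\delta/2$. The triangle inequality for the norm $\|\cdot\|_{B(x)}$ then gives a sample point within proxy distance $\delta$ of $u$. Hence $E$ implies that the candidate set is a $\delta$-cover. The union bound gives
\[
\mathbb P(E^c)\leq\sum_{j=1}^N \mathbb P\{\text{no } u_k \text{ lies in } C_j\},
\]
where $C_j:=\{u:\|u-c_j\|_{B(x)}\leq\delta/2\}$ is the proxy ball of radius $\delta/2$ around $c_j$.

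\textbf{Step 3 (mass of each ball).} Each $C_j$ is an ellipsoid. Under $z=B^{1/2}u$ it maps to a Euclidean ball of radius $\delta/2$, so its Lebesgue volume is $v_q(\delta/2)^q/\sqrt{\det B(x)}$. Next, $C_j\subseteq\mathcal B_x(\eta)$, since $\|c_j\|_{B}\leq\eta-\delta$ and the triangle inequality gives $\|u\|_B\leq\eta-\delta/2\leq\eta$ for $u\in C_j$. The density bound $f\geq c$ therefore holds on $C_j$, and
\[
\mathbb P\{u_k\in C_j\}\geq c\,v_q(\delta/2)^q/\sqrt{\det B(x)}=\kappa v_q(\delta/2)^q.
\]
By independence, the probability that all $K$ draws miss $C_j$ is at most $(1-\kappa v_q(\delta/2)^q)^K$. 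Summing over the $N$ net points yields the stated bound.

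\textbf{Main obstacle.} The only delicate points are bookkeeping. First, the net centres must lie inside $\mathcal B_x(\eta-\delta)$ (or at least the balls $C_j$ must lie inside $\mathcal B_x(\eta)$), so that the density lower bound applies on each $C_j$. Second, each $C_j$ must lie in the chart $U$, which holds because $\mathcal B_x(\eta)\subseteq U$ is implicit in the hypotheses. Third, the volume of a proxy-metric ball carries the Jacobian factor $1/\sqrt{\det B(x)}$, which is exactly why $\kappa$ is defined with that normalisation.
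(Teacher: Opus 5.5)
Your proposal is correct and follows the paper's proof essentially step for step. Both take a $\delta/2$-net of $\mathcal B_x(\eta-\delta)$ in the proxy metric and reduce, via the triangle inequality, to hitting each net ball. Both then use that each ellipsoidal ball has volume $v_q(\delta/2)^q/\sqrt{\det B(x)}$ and lies inside $\mathcal B_x(\eta)$, apply independence across the $K$ draws, and finish with a union bound over the $N$ centres.

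One small simplification: your caveat about external centres is unnecessary. In a minimal cover, every centre's $\delta/2$-ball meets $\mathcal B_x(\eta-\delta)$, so the centre has proxy norm at most $\eta-\delta/2$. Its $\delta/2$-ball therefore still lies in $\mathcal B_x(\eta)$, and the stated constants survive unchanged.
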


\begin{remark}[Applicability of the density-floor hypothesis]\label{rem:density-floor}
The density-floor condition $f\geq c$ requires the sampling distribution to be
full-dimensional on the local chart. It covers uniform sampling on
$\mathcal B_x(\eta)$ and noise-injection or randomised-optimisation
schemes with continuous densities, but excludes samplers whose support
is contained in a Lebesgue-null subset, such as discrete paraphrase
samplers or one-dimensional samplers along a fixed direction.
Proposition~\ref{prop:gradient-search} gives the analogue for the
one-dimensional case; discrete samplers fall outside the reach of both
results and can only be analysed via the deterministic
Proposition~\ref{prop:finite-search-cover} or empirically.
\end{remark}

In particular, $K=\Theta(\delta^{-q}\log(1/\delta))$ candidates suffice to
$\delta$-cover $\mathcal B_x(\eta-\delta)$ with high probability. This is
exponential in the local chart dimension $q$, the standard difficulty of
dense sampling in high-dimensional spaces. Practical attacks circumvent
this difficulty by exploiting structure in the embeddings: they bias the
sampling toward directions where the readout score moves quickly, using
gradient information \citep{guo2021gradient}, or they restrict to discrete
paraphrases that are linguistically plausible
\citep{alzantot2018generating,jin2020bert}.

\subsection{Readout-aligned search}

A practical attack that uses information about the readout can achieve much
better effective coverage than uniform sampling. The basic move is to
search not over the whole proxy ball but along the readout-specific
worst-case direction of the linearised attacker, the closed-form solution
of Theorem~\ref{thm:flip}.

\begin{proposition}[Effective one-dimensional coverage]
\label{prop:gradient-search}
Let $(w,b)$ be a fixed affine readout with $\Sigma_w(x)>0$ and define the
linearised worst-case direction
\[
r_w(x):=-\sigma_x\,\frac{B(x)^{-1}J_M(x)^\top w}{\sqrt{\Sigma_w(x)}},
\]
which satisfies $\|r_w(x)\|_{B(x)}=1$. Suppose the candidate coordinates
$\{u_1,\ldots,u_K\}$ contain points $\{\alpha_j r_w(x):j=1,\ldots,J\}$ for
some scalars $\alpha_j\in[0,\eta]$ that $\delta'$-cover the interval
$[0,\eta]$. Then
\[
D^{\mathrm{lin}}_{K,\eta}(x;w,b)
\geq
(\eta-\delta')\sqrt{\Sigma_w(x)}.
\]
\end{proposition}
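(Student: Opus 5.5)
The plan is to exhibit a single candidate among the readout-aligned points whose linearised displacement is at least $(\eta-\delta')\sqrt{\Sigma_w(x)}$, and then use that $D^{\mathrm{lin}}_{K,\eta}(x;w,b)$ is a maximum over proxy-valid candidates. First, compute the norm: with $v:=B(x)^{-1}J_M(x)^\top w$ we have $v^\top B(x) v = w^\top J_M(x)B(x)^{-1}J_M(x)^\top w=\Sigma_w(x)$, so $\|r_w(x)\|_{B(x)}=1$ as claimed. Hence the point $\alpha_j r_w(x)$ has $\|\alpha_j r_w(x)\|_{B(x)}=\alpha_j\leq\eta$. It is therefore proxy-valid in the linearised sense and enters the maximum defining $D^{\mathrm{lin}}_{K,\eta}$. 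Here I take for granted, as the statement implicitly does, that these points lie in the chart $U$.

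Second, evaluate the linearised displacement along the ray:
\[
-\sigma_x w^\top J_M(x)\,\alpha_j r_w(x)
=\alpha_j\sigma_x^2\,\frac{w^\top J_M(x)B(x)^{-1}J_M(x)^\top w}{\sqrt{\Sigma_w(x)}}
=\alpha_j\sqrt{\Sigma_w(x)},
\]
using $\sigma_x^2=1$. This is the same computation that shows $u^*_{\rm flip}$ in Theorem~\ref{thm:flip} is optimal. Consequently
\[
D^{\mathrm{lin}}_{K,\eta}(x;w,b)\geq\max_j\alpha_j\sqrt{\Sigma_w(x)}.
\]

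Third, use the covering hypothesis. Since $\{\alpha_j\}$ $\delta'$-covers $[0,\eta]$, the endpoint $\eta$ lies within $\delta'$ of some $\alpha_j$. Because $\alpha_j\le\eta$, this gives $\alpha_j\geq\eta-\delta'$, and hence $\max_j\alpha_j\geq\eta-\delta'$. Combining with the previous step yields the claim. If $\delta'\geq\eta$, the bound is trivial because $D^{\mathrm{lin}}\geq 0$ by the $\{0\}$ convention.

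The only step requiring care is the sign bookkeeping: $\sigma_x$ must enter the direction $r_w(x)$ and the displacement $-\sigma_x w^\top J_M u$ consistently, so that the product is $+\alpha_j\sqrt{\Sigma_w}$. Everything else is direct substitution.
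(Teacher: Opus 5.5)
Your proof is correct and follows the paper's argument essentially step for step. You show that $r_w(x)$ has unit $B(x)$-norm, so each $\alpha_j r_w(x)$ is proxy-valid; the linearised displacement along the ray is $\alpha_j\sqrt{\Sigma_w(x)}$; and covering the endpoint $\eta$ gives $\max_j\alpha_j\geq\eta-\delta'$, a step you justify more explicitly than the paper does.
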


The proposition is the readout-specific analogue of
Proposition~\ref{prop:finite-search-cover} along a single direction: a
$J$-point cover of $[0,\eta]$ along $r_w(x)$ recovers the continuous
worst case up to $\delta'\sqrt{\Sigma_w(x)}$, without requiring a
$q$-dimensional cover of the whole proxy ball. The direction $r_w(x)$ is
not the leading generalised eigenvector $v^*(x)$ of $(A(x),B(x))$:
$v^*(x)$ maximises target-representation displacement in the proxy metric
and is the right object for $\lambda^*(x)$, but for flipping a fixed
affine readout it is the readout-direction displacement that matters, and
the direction that maximises it is $r_w(x)$.

The limitation of one-dimensional readout-aligned search is that it
exploits a single readout. If the empirical attack is to certify
robustness over a class of readouts, the alignment direction has to be
recomputed for each readout, and the bound applies only to the readouts
covered.

\subsection{Empirical interpretation}

The section's overall message for empirical work on text adversarial
robustness can be summarised in three principles.

First, successful attacks are unambiguous evidence of vulnerability and
require no theoretical bridge. A paraphrase that is proxy-valid and flips
the prediction is a witness.

Second, failed attacks should be reported alongside the search procedure
that produced them, not as evidence of robustness against the continuous
adversary. The relevant population quantity for empirical evaluation is
the finite attackability $\mathcal A^{\mathrm{fin}}_w(\eta;\mathcal P_K)$, not
the continuous $\mathcal A_w(\eta)$.

Third, the gap between the two attackabilities is controlled by the quality
of the search procedure. Proposition~\ref{prop:finite-search-cover} gives a
deterministic sufficient condition (proxy-metric covering),
Proposition~\ref{prop:prob-covering} a probabilistic one (density-bounded
random sampling), and Proposition~\ref{prop:gradient-search} a structural
one (one-dimensional search along the readout-specific direction
$r_w(x)$). Each gives a different way to certify, within the linearised
local chart, that finite search faithfully approximates the continuous
adversary.

A practical attack that targets the readout direction $r_w(x)$, samples
densely along the corresponding interval, and reports both successful and
unsuccessful flips gives an empirical certificate that the continuous
theory can interpret.
Attacks that do not specify their search structure produce numbers that
are difficult to compare across studies and that the continuous theory
cannot underwrite.

\section{Empirical verification}\label{sec:experiments}

This section examines how the geometric quantities of the preceding sections
behave on a deployed financial sentiment classifier. The theory rests on a
continuous local relaxation of a discrete paraphrase space, and its
guarantees are stated in terms of the Jacobians $J_M$ and $J_P$, the
attackability index $\lambda^*(x)$, and the adjusted margin.

We report five experiments, instantiated across two local charts. Experiment~1
computes $\lambda^*(x)$ and the readout-direction sensitivity $\Sigma_w(x)$
across inputs and tests the inequality $\Sigma_w(x)\le\lambda^*(x)$ of
Theorem~\ref{thm:sigma-lambda}; it serves both as a numerical check on the
Jacobian and generalised-eigenvalue pipeline and as a measurement of the slack
ratio $\lambda^*(x)/\Sigma_w(x)$, which controls the looseness of the
readout-free certificate of Corollary~\ref{cor:worstcase-flip} and of the
margin-tail bound of Theorem~\ref{thm:Aeta-margin}. Experiment~2 compares the
linearised flip threshold $\eta=Z_w(x)$ of Theorem~\ref{thm:flip} against the
realised nonlinear displacement along the same worst-case direction,
calibrating the budget range over which the first-order model of
Theorem~\ref{thm:linearization-error} is informative. Experiment~3 estimates the
empirical attackability curve $\widehat A_{n,w}(\eta)$ and the deterministic
empirical-inclusion overlays implied by Theorem~\ref{thm:Aeta-margin} and
Remark~\ref{rem:Aeta-sigma}, and compares it with the concentration rate of
Theorem~\ref{thm:Aeta-empirical}. Experiment~4 evaluates finite-search
attackability against the candidate budget $K$, together with a
one-dimensional coverage diagnostic and a best-of-all-candidates oracle, making
explicit the failure-to-certify asymmetry of Section~\ref{sec:Finite Paraphrase}.
Experiment~5 tests whether the adjusted margin $Z_w(x)$ predicts realised
paraphrase flips. Experiments~1 and~3 are run in both charts; Experiment~2 in
the soft-token chart only; Experiments~4 and~5 in the paraphrase-cloud chart.

The theory is built on continuous local charts, but text is discrete. We
therefore evaluate two procedures for instantiating these charts from the
paraphrase space $\mathcal{X}$. In Section~\ref{sec:soft-token-relaxation}, $x\in\mathcal{X}$ is represented
as a sequence of one-hot token vectors and relaxed to soft distributions over
the vocabulary, a continuous relaxation of discrete text inputs underlying
gradient-based text attacks \citep{guo2021gradient,jang2016categorical}; the
latent coordinate $u\in U$ parametrises perturbations of those soft tokens,
and the Jacobians are computed by automatic differentiation
\citep{baydin2018automatic} rather than estimated. In
Section~\ref{sec:paraphrase-cloud}, we generate real paraphrases of $x$, embed
each under $M$ and $P$, and fit a local linear model, the finite-search
regime of Section~\ref{sec:Finite Paraphrase}.
Section~\ref{sec:exp-setup} fixes the shared setup.

\subsection{Experimental setup}\label{sec:exp-setup}

The target model $M$ is FinBERT \citep{araci2019finbert}, a
financial sentiment classifier, whose head is a single affine layer on its
pooled representation. The local results of
Sections~\ref{sec:Prediction-Flip}--\ref{sec:margins} depend on the target
map $e_M$ only through its Jacobian $J_M$, so we take $e_M(x)$ to be the raw
pooled representation rather than its $\ell_2$-normalisation; the readout
$(w,b)$ is then exactly the deployed FinBERT head, read off the model rather
than imposed. The proxy model $P$ is the all-MiniLM-L6-v2 Sentence-BERT
embedder \citep{reimers2019sentence}, whose output is mean-pooled and
$\ell_2$-normalised in the standard way and defines the semantic budget
through the proxy distance $d_P$. Since $e_P$ is unit-normalised,
$\|e_P(x)-e_P(x')\|_2^2 = 2 - 2\cos(e_P(x),e_P(x'))$, so the proxy
ball $\{x': d_P(x,x')\le \eta\}$ coincides with the cosine threshold
$\{x': \cos(e_P(x),e_P(x'))\ge 1 - \eta^2/2\}$. Both models share FinBERT's WordPiece vocabulary, so the soft-token chart of Section~\ref{sec:soft-token-relaxation} can parametrise a single latent coordinate feeding both embedding maps, a construction that is well defined only because the two models tokenise text over a common vocabulary.  The embedding
dimensions are $d_M=768$ and $d_P=384$ and so, as the theory permits, do not
coincide.

Inputs are drawn from the all-agreement subset of the Financial
PhraseBank \citep{malo2014good}, on which all annotators assign the
same sentiment label. The three sentiment classes of ``Positive", ``Negative" and ``Neutral", are handled
through a logit-gap construction. At a labelled example with true
class $t$, we apply the binary affine analysis of
Sections~\ref{sec:Prediction-Flip}--\ref{sec:margins} to the
effective readout $w := w_t - w_{i^\star}$ and $b := b_t - b_{i^\star}$,
where $\{(w_i,b_i)\}_{i=1}^{3}$ are the rows of the deployed FinBERT
classification head and
\[
i^\star \in \arg\min_{i\neq t}\,\bigl[(w_t-w_i)^\top e_M(x)+(b_t-b_i)\bigr]
\]
is the closest competing class. The induced score
$w^\top e_M(x)+b$ is then the smallest gap between the true-class
logit and any competing-class logit, and a sign change in this score
is a prediction flip in the multiclass classifier; logit-gap
objectives of this form are standard in adversarial-attack design
\citep{carlini2017towards}. We sample 200 sentences uniformly without replacement from the all-agreement subset and retain the $n=190$ on which FinBERT's hard prediction matches the labelled sentiment, so that a realised
paraphrase flip is unambiguously a flip away from the base prediction. The empirical attackability
curve then concentrates around its population counterpart at the Dvoretzky--Kiefer--Wolfowitz rate
of Theorem~\ref{thm:Aeta-empirical}, fixing the band half-width in Figure~\ref{fig:attackability_curve} at $\sqrt{\ln(2/\delta)/2n}=1.36/\sqrt n \approx 0.099$ for $\delta=0.05$. The paraphrase-cloud construction of Section~\ref{sec:paraphrase-cloud} is evaluated on the $n'=185$ inputs for which the cloud displacement pencil is non-degenerate and at least one paraphrase survives filtering, giving a half-width $1.36/\sqrt{n'}\approx 0.100$ in Figure~\ref{fig:attackability_curve_cloud}.

The proxy budget $\eta$ is swept over a grid and reported as a fraction of
the median proxy distance $d_P(x,x')$ between an input and its generated
paraphrases, so that $\eta$ is interpretable on the scale of genuine
paraphrasing rather than in raw embedding units. Proxy-null directions are
regularised by replacing $B$ with $B+\nu I$, as discussed
after~\eqref{eq:local-problem} in Section~\ref{sec:Local Semantic}, with
$\nu=10^{-3}\operatorname{tr}(B)/q$; we verify that reported quantities are
stable as $\nu$ ranges over $[10^{-4},10^{-2}]\operatorname{tr}(B)/q$. Because $\nu I$ is referred to the coordinate frame, the regularized pencil $(A,B+\nu I)$ is not reparameterisation-invariant, so the reported $\lambda^*(x)$ and $\Sigma_w(x)$ are coordinate-dependent approximations to the intrinsic quantities of Proposition~\ref{prop:chart-inv}. They are recovered as $\nu\to 0$ on non-degenerate inputs, and their stability across the stated $\nu$ range indicates that the approximation does not affect the conclusions.
Confidence bands use $\delta=0.05$. Jacobians are obtained by automatic
differentiation in PyTorch, and the generalised eigenproblems for the pencil
$(A,B)$ in~\eqref{eq:lambdaAB} are solved with a symmetric-definite solver
applied to $(A ,B+\nu I)$. The local coordinate dimension $q$ is
method-specific and is fixed in Sections~\ref{sec:soft-token-relaxation}
and~\ref{sec:paraphrase-cloud}.

\subsection{Soft-token relaxation}\label{sec:soft-token-relaxation}

The chart instantiates the local model of Section~\ref{sec:Local Semantic}
via a continuous relaxation of the discrete token sequence in the spirit of
Gumbel--softmax adversarial decoding
\citep{jang2016categorical,guo2021gradient}. GBDA samples from a Gumbel--softmax distribution to support
stochastic gradient updates on the parameter matrix; we use the
deterministic softmax at the basepoint $u=0$ because our use of the
relaxation is to compute Jacobians of $E_M, E_P$, not to sample
adversarial candidates. Let $\mathcal V$ denote the common WordPiece vocabulary, with cardinality
$|\mathcal V|$. For an input of length $L$
WordPiece tokens, let $\ell^{(0)}\in\mathbb{R}^{L|\mathcal V|}$ be a stack of
position-wise logit vectors initialised so that $\softmax(\ell_i^{(0)})$ is a
sharp approximation to the one-hot encoding of the original token at
position $i$; concretely we set the logit of the original token to $\tau=20$ and all other entries to zero,
which places mass $e^{20}/(e^{20}+|V|-1)\approx 0.99994$ on the original token over FinBERT's WordPiece
vocabulary ($|V|=30,522$), so the basepoint faithfully approximates the one-hot encoding.
Each model $m\in\{M,P\}$ ingests the soft-token sequence through its own
WordPiece embedding $E^{(m)}_{\mathrm{tok}}:\mathcal V\to\mathbb{R}^{h_m}$,
\[
\widetilde T^{(m)}_i(\ell)=\sum_{v\in \mathcal V}\softmax(\ell_i)_vE^{(m)}_{\mathrm{tok}}(v).
\]
The soft distribution $\softmax(\ell_i)$ is shared between the two models
because they tokenise over a common vocabulary; the embedding matrices that
follow are model-specific. The local coordinate
$u\in\mathbb{R}^q$ parametrises perturbations of $\ell$ via
\begin{equation}\label{eq:soft-token-chart}
\ell(u) = \ell^{(0)} + Gu,
\end{equation}
where $G\in\mathbb{R}^{L|\mathcal V|\times q}$
has orthonormalised columns and is drawn independently for each input, so the measured geometry
reflects directions adapted to each sentence rather than a single fixed random slice. The composition with the rest of $M$ and $P$
defines the smooth local maps $E_M(u)$ and $E_P(u)$ of
Lemma~\ref{lem:localrep}, and the Jacobians $J_M(0)$ and
$J_P(0)$ are obtained in closed form by forward-mode Jacobian--vector
products on the columns of $G$, requiring $q$ forward passes per input
through each model. We report results at $q\in\{32,64,128\}$ and verify
qualitative stability across this range; the figures use $q=32$.

Experiment 2 is the single exception to $\tau=20$. At the near-one-hot basepoint the softmax Jacobian
is vanishingly small, so $\beta=\lambda_{\min}(B+\nu I)$ collapses and the worst-case direction leaves the
chart's first-order regime at budgets far below any genuine paraphrase. We therefore evaluate the
linearisation diagnostic at $\tau=10$ ($\mathrm{mass}\approx 0.42$ on the original token), a smoother operating point
at which the first-order regime is resolvable; Experiments 1 and 3 retain $\tau=20$, at which $\lambda*$, $\Sigma_w$, and $Z_w$, being scale-invariant ratios, have converged.

\paragraph{Experiment 1 (soft-token chart): $\mathbf{\Sigma_w(x)\le\lambda^*(x)}$.}

For each input $x$ in the sample we form the $q\times q$ pencil
$(A,B+\nu I)$ from the soft-token Jacobians, solve the symmetric-definite
generalised eigenproblem to obtain $\lambda^*(x)$, and compute the
readout-direction sensitivity
$\Sigma_w(x)=w^\top J_M(B+\nu I)^{-1}J_M^\top w$ for the logit-gap readout
$(w,b)$ of Section~\ref{sec:exp-setup} by solving the $q\times q$
symmetric system $(B+\nu I)y=J_M^\top w$ and forming
$(J_M^\top w)^\top y$. Figure~\ref{fig:sigma-lambda} plots $\Sigma_w(x)$
against $\lambda^*(x)$ across the $n=190$ inputs together with the line
$\Sigma_w=\lambda^*$ that upper-bounds the scatter by
Theorem~\ref{thm:sigma-lambda}. The empirical distribution of the slack
ratio $\lambda^*(x)/\Sigma_w(x)$ is reported as a histogram in the same
figure: a slack ratio concentrated near one indicates that the deployed
readout is aligned with the worst direction of the local pencil, while a
larger slack indicates that worst-direction movement is largely orthogonal
to the readout.

\paragraph{Experiment 2 (soft-token chart): linearisation error.}
For each input we take the closed-form linearised worst direction toward
the decision boundary,
\[
u^\star_{\mathrm{flip}}(\eta;x) =
-\operatorname{sign}(s_0)\eta
\frac{(B+\nu I)^{-1}J_M^\top w}{\sqrt{\Sigma_w(x)}},
\]
of Theorem~\ref{thm:flip}, apply it through the \emph{nonlinear} $E_M$,
and record the actual readout displacement
\[
\Delta s^{\mathrm{nl}}(\eta;x) :=
w^\top\bigl(E_M(u^\star_{\mathrm{flip}})-E_M(0)\bigr).
\]
The linearised prediction at the same $u^\star_{\mathrm{flip}}$ is
$\Delta s^{\mathrm{lin}}(\eta;x)=-\operatorname{sign}(s_0)\,\eta\sqrt{\Sigma_w(x)}$.
Figure~\ref{fig:linearisation-error} plots, separately, the mean absolute nonlinear displacement $n^{-1}\sum_i|\Delta s^{nl}(\eta;x_i)|$ and the mean absolute linearised displacement $n^{-1}\sum_i|\Delta s^{lin}(\eta;x_i)|$, and, in a second panel, the
residual $|\Delta s^{nl}-\Delta s^{lin}|$ normalised per input by $\eta^2/\beta_i$ with $\beta_i=\lambda_{\min}(B+\nu I)$, since the
aggregate mean and median otherwise confound the $\eta$-scaling with heterogeneity in $1/\beta$. By
Theorem~\ref{thm:linearization-error} the normalised residual is flat in the regime where the first-order model is
informative; the experiment calibrates the budget range over which this holds, rather than
testing a pass-or-fail radius of validity.

\paragraph{Experiment 3 (soft-token chart): empirical attackability curve.}
The attackability margin
$Z_w(x_i)=\gamma_w(x_i)/\sqrt{\Sigma_w(x_i)}$ is computed in the
soft-token chart for each of the $n=190$ inputs.
Figure~\ref{fig:attackability_curve} shows the empirical strict left-CDF
\[
\hat A_{n,w}(\eta) = \frac{1}{n}\sum_{i=1}^{n}\mathbbm{1}\{Z_w(x_i)<\eta\}
\]
together with the Dvoretzky--Kiefer--Wolfowitz band of width
$1.36/\sqrt n$ at $\delta=0.05$ from
Theorem~\ref{thm:Aeta-empirical}. Two families of conservative quantile
bounds are overlaid, each for $\beta\in\{0.10,0.25\}$. The readout-free
bound of Theorem~\ref{thm:Aeta-margin} uses the empirical
$(1-\beta)$-quantile $\hat\Lambda_{1-\beta}$ of $\lambda^*(x_i)$ and is
labelled $\lambda^*$ in the figure; the readout-specific bound of
Remark~\ref{rem:Aeta-sigma} uses the empirical $(1-\beta)$-quantile
$\hat\Sigma_{1-\beta}$ of $\Sigma_w(x_i)$ and is labelled $\Sigma_w$.
These overlays are the deterministic empirical-inclusion bound, not a finite-sample population
certificate. Writing $P_n$ for the empirical measure and $\widehat{\Lambda}_{1-\beta}$ for the empirical $(1-\beta)$-quantile
of $\lambda^*(x_i)$,
\[
\widehat A_{n,w}(\eta)
=
P_n[\gamma_w < \eta\sqrt{\Sigma_w}]
\leq
P_n[\gamma_w < \eta\sqrt{\lambda^*}]
\leq
P_n[\gamma_w < \eta\sqrt{\widehat{\Lambda}_{1-\beta}}] + \beta,
\]
an exact in-sample inequality (the last step because $P_n[\lambda^* > \widehat{\Lambda}_{1-\beta}]\leq \beta$). The analogous $\Sigma_w$
overlay uses the empirical quantile of $\Sigma_w$; since $\Sigma_w \leq \lambda^*$ it lies below the $\lambda^*$ overlay at
matching $\beta$. The gap between each overlay and $\widehat A_{n,w}(\eta)$ is the empirical looseness of the
quantile relaxation.

\subsection{Paraphrase-cloud estimation}\label{sec:paraphrase-cloud}

The chart instantiates the local model from a finite set of generated
paraphrases rather than from a parameterised relaxation. For each input $x$,
using the attack of \citet{TURETKEN2026107698},
we generate up to $K$ paraphrases. After discarding near-duplicates and candidates
outside the proxy-similarity gate, the number retained varies per input, with a
median of 35 and a maximum of 40 (the generator's per-call cap). We form the target and proxy displacement matrices
\begin{equation}\label{eq:cloud-displacements}
D_M(x) := \bigl[e_M(x'_k)-e_M(x)\bigr]_{k=1}^{K}\in\mathbb{R}^{d_M\times K_x},
\quad
D_P(x) := \bigl[e_P(x'_k)-e_P(x)\bigr]_{k=1}^{K}\in\mathbb{R}^{d_P\times K_x}.
\end{equation}
The local coordinate is $u\in\mathbb{R}^{K_x}$, with the $k$\textsuperscript{th} unit basis
vector indexing the $k$\textsuperscript{th} retained paraphrase, and the local maps are linear
interpolations between $x$ and combinations of the candidates,
\begin{equation}\label{eq:cloud-chart}
E_M(u)-E_M(0) \approx D_M(x)\,u,
\qquad
E_P(u)-E_P(0) \approx D_P(x)\,u.
\end{equation}
The chart Jacobians are then $J_M(x)=D_M(x)$ and $J_P(x)=D_P(x)$, and the
pullback matrices $A=D_M^\top D_M$ and $B=D_P^\top D_P$ are $K\times K$;
the local chart dimension is $q=K_x$. The two charts of
Sections~\ref{sec:soft-token-relaxation} and~\ref{sec:paraphrase-cloud}
yield different Jacobians and therefore different numerical values of
$\lambda^*(x)$ and $\Sigma_w(x)$. By Proposition~\ref{prop:chart-inv}
each chart is internally invariant under reparametrisation, but the two
charts test the same structural predictions of
Sections~\ref{sec:Prediction-Flip}--\ref{sec:population} rather than
predicting the same scalars.

\paragraph{Experiment 1 (cloud chart): $\mathbf{\Sigma_w(x)\le\lambda^*(x)}$.}

We repeat Experiment~1 in the cloud chart, using the cloud-chart pencil $(D_M^\top D_M,\,D_P^\top D_P+\nu I)$.
Figure~\ref{fig:sigma-lambda-cloud} plots $\Sigma_w(x)$ against
$\lambda^*(x)$ across the $n'=185$ inputs. The comparison with
Figure~\ref{fig:sigma-lambda} is the substantive content: a slack ratio
concentrated more tightly near one in the cloud chart than in the
soft-token chart indicates that the paraphrase generator is biased
towards directions that move the readout, whereas a wider distribution
indicates that the generator explores directions that the worst-case
local geometry would not target.

\paragraph{Experiment 3 (cloud chart): attackability curve.}
We repeat Experiment~3 in the cloud chart and overlay
$\hat A_{n,w}(\eta)$, its DKW band, and the readout-free and
readout-specific quantile bounds of Theorem~\ref{thm:Aeta-margin} and
Remark~\ref{rem:Aeta-sigma} (labelled $\lambda^*$ and $\Sigma_w$, each
for $\beta\in\{0.10,0.25\}$) on
Figure~\ref{fig:attackability_curve_cloud}. Two qualitative outcomes are
diagnostic. If the cloud-chart $\hat A_{n,w}$ tracks the soft-token
$\hat A_{n,w}$ of Section~\ref{sec:soft-token-relaxation} within their
respective DKW bands, the two relaxations agree on the attackability of
the deployed classifier. If the two curves disagree, the disagreement
isolates the contribution of the paraphrase generator's inductive biases
relative to a model-agnostic local relaxation.

\paragraph{Experiment 4 (cloud chart): finite-search flip events.}

For each input we compute the linearised prediction
$\mathbbm{1}\{Z_w(x)<\eta\}$ in the full cloud chart. The
finite-search flip event
$\mathbbm{1}\{D^{\mathrm{fin}}_{K,\eta}(x;w,b)\ge\gamma_w(x)\}$ of
Section~\ref{sec:Finite Paraphrase} is then evaluated on subsampled
candidate sets $\mathcal P_K(x)\subseteq\mathcal P_{K_x}(x)$ for
$K\in\{4,8,12,18,25\}$, with each subsample drawn uniformly without
replacement.
Figure~\ref{fig:finite-search} reports the finite-search attackability
\[
A^{\mathrm{fin}}_w(\eta;\mathcal P_K) :=
\frac{1}{n}\sum_{i=1}^{n}\mathbbm{1}\{D^{\mathrm{fin}}_{K,\eta}(x_i;w,b)\ge\gamma_w(x_i)\}
\]
as a function of $\eta$ for each $K$. We report $A^{\mathrm{fin}}_w(\eta; P_K)$ for $K \in \{4, 8, 12, 18, 25\}$ and a best-of-all-candidates oracle that uses each input's full retained set, together with a one-dimensional coverage diagnostic: the median
over inputs of the largest gap, normalised by $\eta$, between consecutive proxy-valid candidate
positions along the readout-aligned direction $r_w(x)$ of Proposition~\ref{prop:gradient-search}. Subsampling cannot
establish a proxy-ball cover, i.e., the candidate count for a fixed-radius cover of a $q$-dimensional
ball grows exponentially in $q$, so the finite-search curve is not expected to reach $\widehat A_{n,w}$, and
we present the result as the failure-to-certify asymmetry of Section~\ref{sec:Finite Paraphrase}, a found attack is evidence
of vulnerability; a failed finite search is weak evidence of robustness, rather than as
confirmation of Proposition~\ref{prop:finite-search-cover}. Because $\widehat A_{n,w}$ is the linearised curve and over-states realised flips (Experiment~5), the gap between it and the oracle reflects both finite coverage and
first-order over-prediction; the coverage diagnostic isolates the former.

\paragraph{Experiment 5 (cloud chart): predictive validity of $Z_w$.} 
The decisive test of the term
attackability index is whether $Z_w(x)$ predicts realised flips. For each cloud input we record
the realised FinBERT prediction of every paraphrase and define the smallest realised flip budget
as the minimum proxy distance over paraphrases whose prediction differs from the base label
($\infty$ if none flips). Since smaller $Z_w$ means more attackable, we report the Spearman rank
correlation between $Z_w(x)$ and the realised flip budget over inputs that flip, and the AUC of
$-Z_w(x)$ for predicting whether a proxy-valid flip exists within budget $\eta$.

\subsection{Results}

We first consider Figure~\ref{fig:sigma-lambda}. Across all inputs and in both charts, every
$(\lambda^*(x),\Sigma_w(x))$ pair lies strictly below the diagonal
$\Sigma_w=\lambda^*$, i.e., zero violations of $\Sigma_w\leq\lambda^*$ in
the 190 soft-token inputs,  confirming Theorem~\ref{thm:sigma-lambda} pointwise under the
unit-norm readout. The slack is substantial: $\log_{10}(\lambda^*(x)/\Sigma_w(x))$
has median $\approx 2.05$ in the soft-token chart and $\approx 1.47$ in the
paraphrase-cloud chart, so the worst local direction typically expands the
target representation one-and-a-half to two orders of magnitude more than the
deployed readout does. The cloud-chart slack is smaller and more concentrated,
indicating that the paraphrase generator is mildly biased toward readout-moving
directions relative to the model-agnostic soft-token relaxation, though in
neither chart is the ratio close to one.

\begin{figure}[htbp]
    \centering
    \begin{subfigure}{\textwidth}
        \centering
        \includegraphics[width=\textwidth]{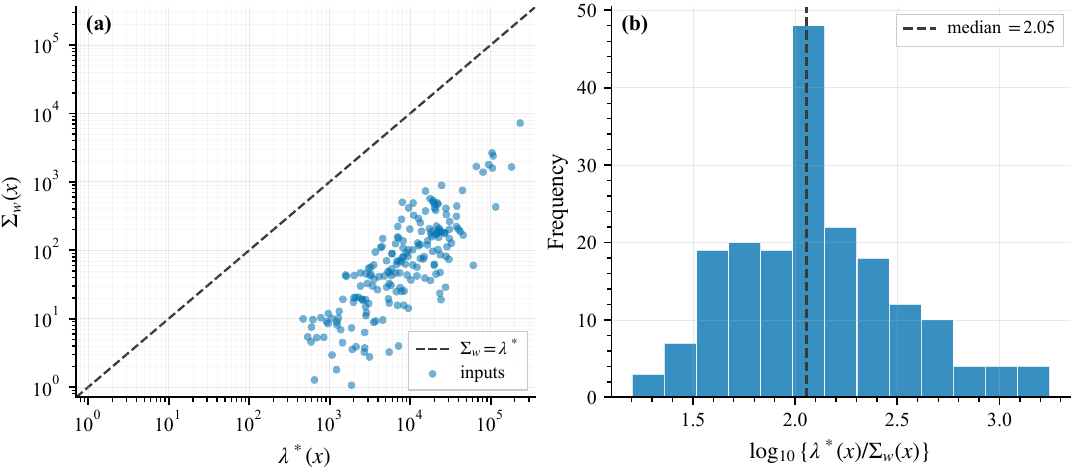}
        \caption{Soft-token relaxation}
        \label{fig:sigma-lambda-soft}
    \end{subfigure}
    \hfill
    \begin{subfigure}{\textwidth}
        \centering
        \includegraphics[width=\textwidth]{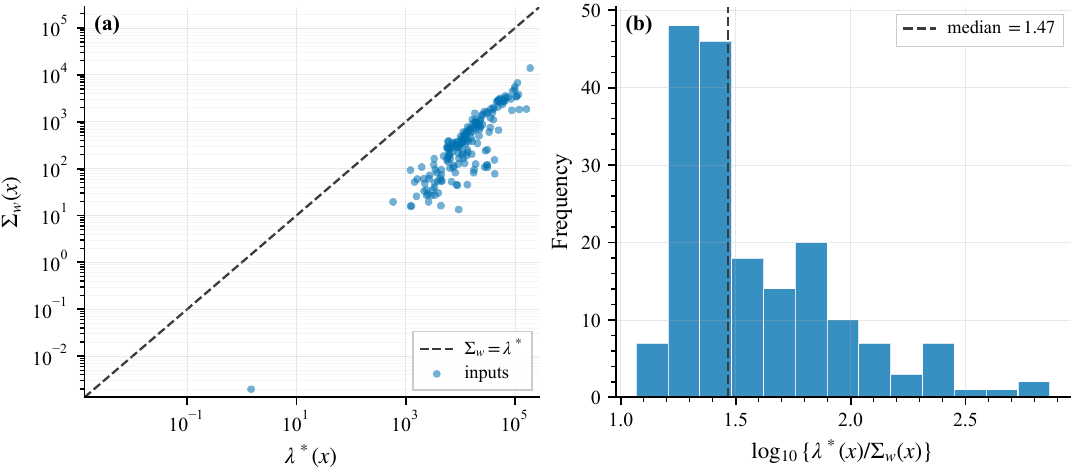}  
        \caption{Paraphrase cloud}
        \label{fig:sigma-lambda-cloud}
    \end{subfigure}
    \caption{$\Sigma_w(x)$ versus $\lambda^*(x)$ across inputs (log--log), with the
line $\Sigma_w=\lambda^*$ that bounds the scatter by
Theorem~\ref{thm:sigma-lambda}, and the empirical distribution of
$\log_{10}\{\lambda^*(x)/\Sigma_w(x)\}$. Soft-token chart $n=190$; paraphrase
cloud $n'=185$.}
    \label{fig:sigma-lambda}
\end{figure}

For the linearisation diagnostic, Figure~\ref{fig:linearisation-error}, the mean nonlinear displacement
tracks the linear displacement closely up to $\eta\approx 3\times 10^{-2}$ and
then peels below it as the soft-token embedding saturates within its convex hull
while the linear term keeps growing; the break is saturation of $\Delta s^{\mathrm{nl}}$, not Taylor blow-up. The per-input normalised residual confirms this: its median is flat across the small-budget range, the regime
in which the first-order model of Theorem~\ref{thm:linearization-error} is informative, and rises only
as saturation sets in, while the mean is inflated by a right-skewed minority of
inputs that leave the first-order regime early.

\begin{figure}[hbtp]
\centering
\includegraphics[width=\textwidth]{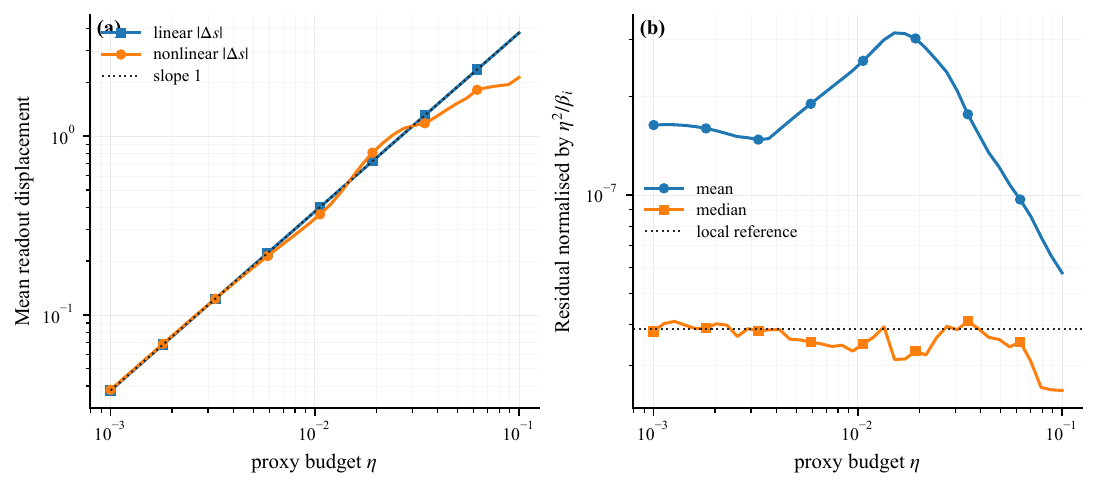}
\caption{Local linearisation diagnostic in the soft-token chart ($\tau=10$).
(a)~mean absolute linear and nonlinear readout displacement against the proxy
budget $\eta$ (log--log); (b)~the residual
$|\Delta s^{\mathrm{nl}}-\Delta s^{\mathrm{lin}}|$ normalised per input by
$\eta^2/\beta_i$, $\beta_i=\lambda_{\min}(B+\nu I)$. The flat median in~(b) marks
the budget range over which the first-order model is informative.}
\label{fig:linearisation-error}
\end{figure}

Figure~\ref{fig:attackability_curves} presents the empirical attackability curves. In both charts
$\widehat A_{n,w}(\eta)$ is sigmoidal, rising from near zero to
$\approx 0.76$ (soft-token) and $\approx 0.97$ (cloud),
with DKW half-widths $\pm 0.099$ and $\pm 0.100$. Both empirical-inclusion
overlays lie above $\widehat A_{n,w}$, and the $\Sigma_w$ overlay below the
$\lambda^*$ overlay at matching $\beta$, as required; but both saturate at one
for small budgets, $\lambda^*$ almost immediately, $\Sigma_w$ by $\eta\approx 0.2$,
so over most of the range they are valid but close to vacuous. The wide gap
between $\widehat A_{n,w}$ and either overlay is the curve-level counterpart of
the large slack in Figure~\ref{fig:sigma-lambda}. The soft-token and cloud curves agree in shape and
lie within overlapping DKW bands.

\begin{figure}[htbp]
    \centering

    \begin{subfigure}{0.48\textwidth}
        \centering
        \includegraphics[width=\textwidth]{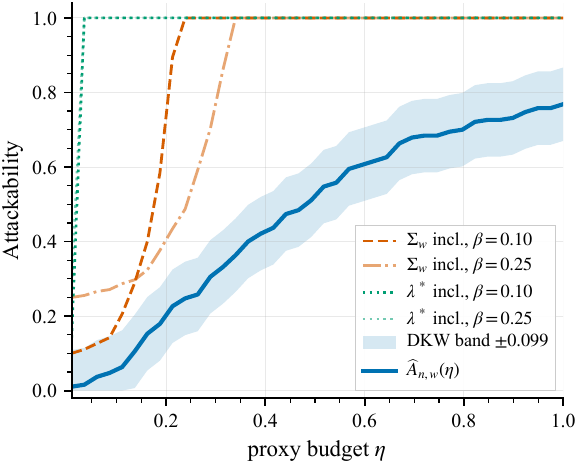}
        \caption{Soft-token relaxation}
        \label{fig:attackability_curve}
    \end{subfigure}
    \hfill
    \begin{subfigure}{0.48\textwidth}
        \centering
        \includegraphics[width=\textwidth]{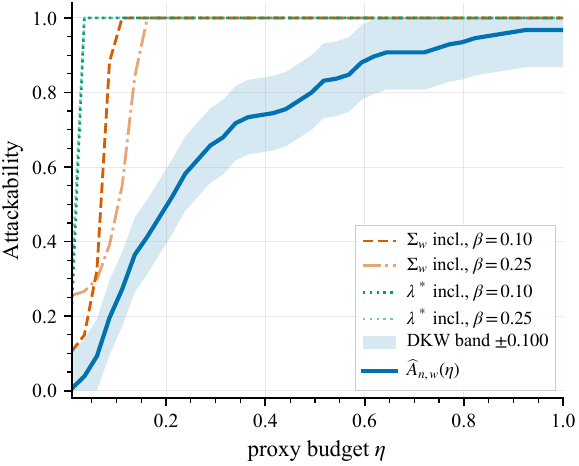}  
        \caption{Paraphrase cloud}
        \label{fig:attackability_curve_cloud}
    \end{subfigure}

\caption{Empirical attackability curves $\widehat A_{n,w}(\eta)$ against the
proxy budget $\eta$, with DKW bands and the empirical-inclusion overlays of
Theorem~\ref{thm:Aeta-margin} and Remark~\ref{rem:Aeta-sigma}
($\beta\in\{0.10,0.25\}$). (a)~soft-token chart; (b)~paraphrase cloud.}
    \label{fig:attackability_curves}
\end{figure}

Finite search, Figure~\ref{fig:finite-search}, lies below the cloud-chart $\widehat A_{n,w}$ at every
budget and every $K$ and increases monotonically in $K$: the per-input ceiling
rises from $\approx 0.11$ at $K=4$ through $\approx 0.31$ at $K=25$ to
$\approx 0.39$ for the best-of-all oracle, against a linearised
$\widehat A_{n,w}\approx 0.97$. No candidate is proxy-valid below
$\eta\approx 0.25$, so all curves are zero there. The coverage panel shows the
median normalised gap along $r_w$ holding at one, fully uncovered, until
$\eta\approx 0.35$ and settling near $0.83$ thereafter: the candidate sets
barely populate the readout-relevant axis, so the finite-search shortfall is the
expected consequence of sampling geometry, the quantitative form of the
failure-to-certify asymmetry of Section~\ref{sec:Finite Paraphrase}. The gap between the oracle and
$\widehat A_{n,w}$ reflects both this coverage shortfall and the first-order
over-prediction quantified next, so we read the asymmetry from the coverage
diagnostic rather than from that gap alone.

\begin{figure}[hbtp]
\centering
\includegraphics[width=\textwidth]{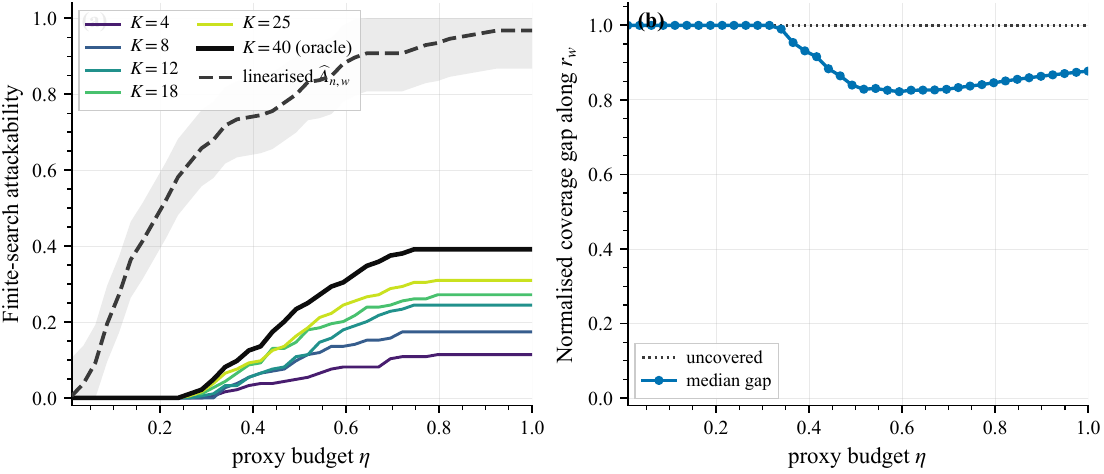}
\caption{Finite-search attackability in the cloud chart against the proxy budget
$\eta$. (a)~$A^{\mathrm{fin}}_w(\eta;\mathcal P_K)$ for $K\in\{4,8,12,18,25\}$ and
the best-of-all-candidates oracle, against the linearised $\widehat A_{n,w}$;
(b)~median normalised coverage gap along the readout-aligned direction $r_w(x)$.}
\label{fig:finite-search}
\end{figure}

Finally, the predictive-validity test, Figure~\ref{fig:predictive-validity}, confirms that $Z_w$ orders
attackability as intended: the Spearman correlation between $Z_w(x)$ and the
smallest realised flip budget is $\rho=0.58$ ($n=184$), and the AUC of $-Z_w$
for predicting a realised flip within the median budget is $0.91$. Every input
lies above the identity, so the linearised $Z_w$ systematically under-states the
realised flip budget (the saturation seen in Figure~\ref{fig:linearisation-error}) and $Z_w$ should be
read as a ranking index of attackability rather than as a pointwise predictor of
the realised budget.

\begin{figure}[hbtp]
\centering
\includegraphics[width=\textwidth]{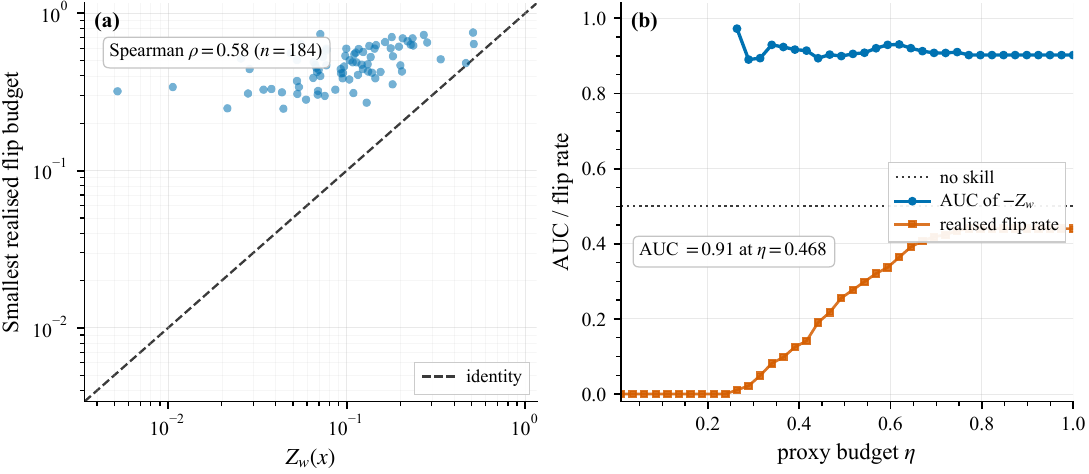}
\caption{Predictive validity of $Z_w$ in the cloud chart. (a)~smallest realised
flip budget versus $Z_w(x)$ with the identity line; (b)~AUC of $-Z_w$ and the
realised flip rate against the proxy budget $\eta$.}
\label{fig:predictive-validity}
\end{figure}

\FloatBarrier

\section{Discussion}\label{sec:Discussion}

The main contribution of the paper is the identification of the local geometric
object that controls semantic attackability in a two-embedding setting. The
matrix pencil $(A(x),B(x))$ describes how target and proxy embeddings assign
length to the same local paraphrase directions, while the target-space matrix
$S(x)=J_M(x)B(x)^{-1}J_M(x)^\top$ is the object seen by an affine readout. The
former gives the worst-direction displacement $\lambda^*(x)$; the latter gives
the readout-specific displacement $\Sigma_w(x)=w^\top S(x)w$ and hence the
adjusted margin.

The statistical bounds in the paper serve different purposes. The DKW
bound in Section~\ref{sec:population} is sharp and simple, but applies to a
single fixed readout. The VC bound in Section~\ref{sec:hypotheses} is fully distribution-free and
uniform over data-dependent affine readouts. The elementary lifting in
Proposition~\ref{prop:vc-affine} gives an order-$d^2$ bound by representing
the attackability event as a quadratic feature lift, and the
polynomial-threshold argument in Proposition~\ref{prop:vc-improved}
sharpens this to order $d$.
 The margin bounds in Section~\ref{sec:margins} are more
scale-sensitive: they depend on the empirical distribution of
$y f_\theta(x)-\eta\sqrt{w^\top S(x)w}$ and on norm, tail, or trace control of the local
sensitivity. The covering-number version is explicit but dimension-dependent,
whereas the Rademacher version replaces the ambient dimension by the empirical
trace sensitivity $n^{-1}\sum_i\operatorname{tr}S(X_i)$. This mirrors the usual distinction between hard-indicator VC bounds
and margin-based generalisation bounds.

Empirically, the experiments in Section~\ref{sec:experiments} confirm the paper's central claims on a deployed classifier. The pointwise inequality $\Sigma_w(x)\le\lambda^*(x)$ of
Theorem~\ref{thm:sigma-lambda} holds without exception in both charts, and the attackability margin $Z_w(x)$ is empirically predictive of vulnerability: it separates inputs that admit a proxy-valid flip from those that do not with AUC $\approx 0.91$, and ranks inputs by realised flip budget at Spearman's $\rho\approx 0.58$, i.e., direct evidence that the local geometry identifies where the deployed model is fragile, and support for reading $Z_w$ as a screening index rather than, given the saturation of Figure~\ref{fig:linearisation-error}, a pointwise predictor of the exact budget. The remaining findings sharpen this picture rather than qualify it. The one-and-a-half to two orders of magnitude of slack between $\lambda^*(x)$ and $\Sigma_w(x)$ shows that FinBERT's readout sits far from the worst local paraphrase direction; this is precisely why the readout-free certificate of Corollary~\ref{cor:worstcase-flip} and the margin-tail bound of Theorem~\ref{thm:Aeta-margin} are conservative on FinBERT, and why the adjusted margin is built from the readout-specific
$S(x)$, whose quantile bound (Remark~\ref{rem:Aeta-sigma}) is correspondingly tighter. Likewise, the gap between finite search and the linearised attackability curve is the failure-to-certify asymmetry of Section~\ref{sec:Finite Paraphrase} made quantitative: the candidate clouds barely populate the readout-relevant direction (oracle $\approx 0.39$ against a linearised $\approx 0.97$), so a failed search certifies robustness only against its own candidate set. The conservative quantities are conservative for reasons the theory names in advance, while the quantity the theory proposes for diagnosis, $Z_w$, predicts realised attacks.

Several limitations remain. First, the theory is local and first-order. The
nonlinear error terms in Theorem~\ref{thm:linearization-error} must be small for
the linearised certificate to be predictive at a finite budget. Second, the local
charts and metric matrices are assumed as part of the model; the paper does not
prescribe a particular procedure for estimating them from a discrete paraphrase
space. Third, finite paraphrase search gives explicit attacks when it succeeds,
but failure of a finite search procedure is only a certificate relative to the
generated candidate set unless a coverage condition such as
Proposition~\ref{prop:finite-search-cover} is available.

Several natural extensions are not pursued here. 
One could study finite unions of attackability events to model compound
fragility detectors that fire when any one of several readouts is locally
attackable; the VC dimension of an $m$-fold union of a class with VC
dimension $d$ is at most $O(dm\log m)$ by standard results on Boolean
combinations, and the corresponding
generalisation bounds follow by substitution into Theorem~\ref{thm:vc-uniform}. One could also formulate
asymmetric, Neyman--Pearson-style certificates that separately control false
positive and false negative declarations of fragility. Finally, nonlinear
readout heads can be handled locally by replacing the affine direction with the
gradient of the relevant score or logit gap, but uniform bounds for such heads
would also require complexity control of the associated gradient class. These
extensions would enlarge the scope of the present paper, whose focus is the
geometric object $S(x)$ and its consequences for affine readouts.

The binary affine-readout assumption is mainly expositional. For multiclass
linear logits, the relevant local quantity at a labelled example is the
smallest gap between the true-class logit and any competing-class logit,
adjusted by the worst-case proxy-bounded displacement along the
corresponding gap direction. The local sensitivity matrix $S(x)$ enters
in the same way as in the binary case, contracted against each gap
direction rather than against a single readout direction, and the
uniform bounds extend by the usual multiclass adjustments to the margin
class.

\section*{Acknowledgments}

This work is supported by The London School of Economics and Political Science, and the Economic and Social Research Council (ESRC) under the “Diversity and Productivity: from Education to Work” (DAPEW) project [Grant Ref: ES/W010224/1].

\bibliographystyle{apalike}
\bibliography{References_martin}

@inproceedings{zhang2022manifold,
  title={A manifold view of adversarial risk},
  author={Zhang, Wenjia and Zhang, Yikai and Hu, Xiaoling and Goswami, Mayank and Chen, Chao and Metaxas, Dimitris N},
  booktitle={International Conference on Artificial Intelligence and Statistics},
  pages={11598--11614},
  year={2022},
  organization={PMLR}
}

@inproceedings{carlini2017towards,
  title={Towards evaluating the robustness of neural networks},
  author={Carlini, Nicholas and Wagner, David},
  booktitle={2017 IEEE symposium on security and privacy (SP)},
  pages={39--57},
  year={2017},
  organization={IEEE}
}

@article{malo2014good,
  title={Good debt or bad debt: Detecting semantic orientations in economic texts},
  author={Malo, Pekka and Sinha, Ankur and Korhonen, Pekka and Wallenius, Jyrki and Takala, Pyry},
  journal={Journal of the Association for Information Science and Technology},
  volume={65},
  number={4},
  pages={782--796},
  year={2014},
  publisher={Wiley Online Library}
}

@inproceedings{reimers2019sentence,
  title={Sentence-bert: Sentence embeddings using siamese bert-networks},
  author={Reimers, Nils and Gurevych, Iryna},
  booktitle={Proceedings of the 2019 conference on empirical methods in natural language processing and the 9th international joint conference on natural language processing (EMNLP-IJCNLP)},
  pages={3982--3992},
  year={2019}
}

@article{araci2019finbert,
  title={Finbert: Financial sentiment analysis with pre-trained language models},
  author={Araci, Dogu},
  journal={arXiv preprint arXiv:1908.10063},
  year={2019}
}

@article{baydin2018automatic,
  title={Automatic differentiation in machine learning: a survey},
  author={Baydin, Atilim Gunes and Pearlmutter, Barak A and Radul, Alexey Andreyevich and Siskind, Jeffrey Mark},
  journal={Journal of Machine Learning Research},
  volume={18},
  number={153},
  pages={1--43},
  year={2018}
}

@inproceedings{szegedy2014intriguing,
  author    = {Szegedy, Christian and Zaremba, Wojciech and Sutskever, Ilya and Bruna, Joan and Erhan, Dumitru and Goodfellow, Ian and Fergus, Rob},
  title     = {Intriguing Properties of Neural Networks},
  booktitle = {International Conference on Learning Representations},
  year      = {2014},
  note      = {arXiv:1312.6199}
}

@inproceedings{goodfellow2015explaining,
  author    = {Goodfellow, Ian J. and Shlens, Jonathon and Szegedy, Christian},
  title     = {Explaining and Harnessing Adversarial Examples},
  booktitle = {International Conference on Learning Representations},
  year      = {2015},
  note      = {arXiv:1412.6572}
}

@inproceedings{madry2017towards,
  author    = {Madry, Aleksander and Makelov, Aleksandar and Schmidt, Ludwig and Tsipras, Dimitris and Vladu, Adrian},
  title     = {Towards Deep Learning Models Resistant to Adversarial Attacks},
  booktitle = {International Conference on Learning Representations},
  year      = {2018},
  note      = {arXiv:1706.06083}
}

@inproceedings{tsipras2018robustness,
  author    = {Tsipras, Dimitris and Santurkar, Shibani and Engstrom, Logan and Turner, Alexander and Madry, Aleksander},
  title     = {Robustness May Be at Odds with Accuracy},
  booktitle = {International Conference on Learning Representations},
  year      = {2019},
  note      = {arXiv:1805.12152}
}

@inproceedings{alzantot2018generating,
  author    = {Alzantot, Moustafa and Sharma, Yash and Elgohary, Ahmed and Ho, Bo-Jhang and Srivastava, Mani and Chang, Kai-Wei},
  title     = {Generating Natural Language Adversarial Examples},
  booktitle = {Proceedings of the 2018 Conference on Empirical Methods in Natural Language Processing},
  pages     = {2890--2896},
  year      = {2018},
  address   = {Brussels, Belgium},
  publisher = {Association for Computational Linguistics},
  doi       = {10.18653/v1/D18-1316}
}

@inproceedings{jin2020bert,
  author    = {Jin, Di and Jin, Zhijing and Zhou, Joey Tianyi and Szolovits, Peter},
  title     = {Is {BERT} Really Robust? A Strong Baseline for Natural Language Attack on Text Classification and Entailment},
  booktitle = {Proceedings of the AAAI Conference on Artificial Intelligence},
  volume    = {34},
  pages     = {8018--8025},
  year      = {2020},
  doi       = {10.1609/aaai.v34i05.6311}
}

@article{TURETKEN2026107698,
  author  = {Can T{\"u}retken, Aysun and Leippold, Markus},
  title   = {Battle of Transformers: Adversarial Attacks on Financial Sentiment Models},
  journal = {Journal of Banking \& Finance},
  volume  = {188},
  pages   = {107698},
  year    = {2026},
  doi     = {10.1016/j.jbankfin.2026.107698}
}

@inproceedings{guo2021gradient,
  author    = {Guo, Chuan and Sablayrolles, Alexandre and J{\'e}gou, Herv{\'e} and Kiela, Douwe},
  title     = {Gradient-based Adversarial Attacks against Text Transformers},
  booktitle = {Proceedings of the 2021 Conference on Empirical Methods in Natural Language Processing},
  pages     = {5747--5757},
  year      = {2021},
  address   = {Online and Punta Cana, Dominican Republic},
  publisher = {Association for Computational Linguistics},
  doi       = {10.18653/v1/2021.emnlp-main.464}
}

@inproceedings{jang2016categorical,
  author    = {Jang, Eric and Gu, Shixiang and Poole, Ben},
  title     = {Categorical Reparameterization with {Gumbel-Softmax}},
  booktitle = {International Conference on Learning Representations},
  year      = {2017},
  note      = {arXiv:1611.01144}
}

@article{papernot2016transferability,
  author  = {Papernot, Nicolas and McDaniel, Patrick and Goodfellow, Ian},
  title   = {Transferability in Machine Learning: From Phenomena to Black-Box Attacks Using Adversarial Samples},
  journal = {arXiv preprint arXiv:1605.07277},
  year    = {2016}
}

@inproceedings{demontis2019adversarial,
  author    = {Demontis, Ambra and Melis, Marco and Pintor, Maura and Jagielski, Matthew and Biggio, Battista and Oprea, Alina and Nita-Rotaru, Cristina and Roli, Fabio},
  title     = {Why Do Adversarial Attacks Transfer? Explaining Transferability of Evasion and Poisoning Attacks},
  booktitle = {28th USENIX Security Symposium (USENIX Security 19)},
  pages     = {321--338},
  year      = {2019}
}

@inproceedings{tramer2017ensemble,
  author    = {Tram{\`e}r, Florian and Kurakin, Alexey and Papernot, Nicolas and Goodfellow, Ian and Boneh, Dan and McDaniel, Patrick},
  title     = {Ensemble Adversarial Training: Attacks and Defenses},
  booktitle = {International Conference on Learning Representations},
  year      = {2018},
  note      = {arXiv:1705.07204}
}

@article{vapnik1971uniform,
  author  = {Vapnik, Vladimir N. and Chervonenkis, Alexey Ya.},
  title   = {On the Uniform Convergence of Relative Frequencies of Events to Their Probabilities},
  journal = {Theory of Probability and Its Applications},
  volume  = {16},
  number  = {2},
  pages   = {264--280},
  year    = {1971},
  doi     = {10.1137/1116025}
}

@book{anthony2009neural,
  author    = {Anthony, Martin and Bartlett, Peter L.},
  title     = {Neural Network Learning: Theoretical Foundations},
  publisher = {Cambridge University Press},
  year      = {1999},
  note      = {Paperback edition 2009}
}

@article{anthony1995classification,
  author  = {Anthony, Martin},
  title   = {Classification by Polynomial Surfaces},
  journal = {Discrete Applied Mathematics},
  volume  = {61},
  number  = {2},
  pages   = {91--103},
  year    = {1995},
  doi     = {10.1016/0166-218X(94)00008-2}
}

@book{shalev2014understanding,
  author    = {Shalev-Shwartz, Shai and Ben-David, Shai},
  title     = {Understanding Machine Learning: From Theory to Algorithms},
  publisher = {Cambridge University Press},
  year      = {2014}
}

@article{bartlett2002rademacher,
  author  = {Bartlett, Peter L. and Mendelson, Shahar},
  title   = {Rademacher and {Gaussian} Complexities: Risk Bounds and Structural Results},
  journal = {Journal of Machine Learning Research},
  volume  = {3},
  pages   = {463--482},
  year    = {2002}
}

@inproceedings{maurer2016vector,
  author    = {Maurer, Andreas},
  title     = {A vector-contraction inequality for {R}ademacher complexities},
  booktitle = {Algorithmic Learning Theory (ALT 2016)},
  publisher = {Springer},
  pages     = {3--17},
  year      = {2016}
}

@article{goldberg1995bounding,
  author  = {Goldberg, Paul W. and Jerrum, Mark R.},
  title   = {Bounding the {V}apnik-{C}hervonenkis dimension of concept classes parameterized by real numbers},
  journal = {Machine Learning},
  volume  = {18},
  number  = {2-3},
  pages   = {131--148},
  year    = {1995}
}

\appendix

\newpage
\clearpage

\section{Proofs}

\subsection{Proof of Lemma~\ref{lem:localrep}}
  We prove the claim for a generic map $E:U\to S^{d-1}$ with Jacobian
$J$ at the origin. By differentiability,
\[
 E(u)-E(0)=Ju+r(u),
 \qquad
 \|r(u)\|=o(\|u\|).
\]
  Hence
\[
 \|E(u)-E(0)\|_2^2
 =\|Ju+r(u)\|_2^2
 =u^\top J^\top Ju+2(Ju)^\top r(u)+\|r(u)\|_2^2
 =u^\top J^\top Ju+o(\|u\|^2).
\]
  Applying this argument to $E_P$ and $E_M$ gives
\eqref{eq:quad-P} and \eqref{eq:quad-M}. Positive semidefiniteness follows
from $v^\top J^\top Jv=\|Jv\|^2\geq0$.

\subsection{Proof of Proposition~\ref{prop:solution}}

If $A=0$, then $u^\top A u = 0$ for every feasible $u$, and the optimal value of \eqref{eq:local-problem} is $0=\eta^2\lambda_{\max}(B^{-1}A)$. Assume henceforth that $A\neq 0$.

The feasible set $\mathcal{F}:=\{u\in\mathbb{R}^q : u^\top B u\leq\eta^2\}$ is closed and, because $B\succ 0$, bounded; hence compact. Continuity of $u\mapsto u^\top A u$ then guarantees a maximiser $u^*\in\mathcal{F}$. Since $A\succeq 0$ and $A\neq 0$, some $v$ has $v^\top A v>0$, and rescaling $v$ to satisfy $v^\top B v=\eta^2$ produces a feasible point with strictly positive objective; hence $u^{*\top}A u^*>0$, and in particular $u^*\neq 0$. If $u^{*\top}B u^*<\eta^2$, choose $\alpha>1$ with $\alpha^2 u^{*\top}B u^*=\eta^2$; then $\alpha u^*$ is feasible with $(\alpha u^*)^\top A(\alpha u^*)=\alpha^2 u^{*\top}A u^*>u^{*\top}A u^*$, contradicting optimality. Therefore $u^{*\top}B u^*=\eta^2$, and $u^*$ also maximises the equality-constrained problem
\begin{equation}\label{eq:objective_proof}
\max_{u\in\mathbb{R}^q} u^\top A u \quad\text{subject to}\quad u^\top B u=\eta^2.
\end{equation}

The constraint function $g(u):=u^\top B u-\eta^2$ has gradient $\nabla g(u)=2Bu$, which is nonzero whenever $u\neq 0$, so the constraint is regular at $u^*$. The Lagrange multiplier theorem applied to the Lagrangian
\begin{equation}
\mathcal{L}(u,\mu):=u^\top A u-\mu(u^\top B u-\eta^2)
\end{equation}
gives $\mu\in\mathbb{R}$ such that $\nabla_u\mathcal{L}(u^*,\mu)=0$, that is,
\begin{equation}\label{eq:gev_problem}
A u^*=\mu B u^*,
\end{equation}
the generalised eigenvalue equation for the pencil $(A,B)$. Left-multiplying \eqref{eq:gev_problem} by $u^{*\top}$ and using the active constraint gives $u^{*\top}A u^*=\mu\eta^2$. Hence the optimal value equals $\eta^2\mu$ for some generalised eigenvalue $\mu$ of $(A,B)$, and
\begin{equation}
\sup_{u\in\mathcal{F}} u^\top A u \leq \eta^2\mu_{\max},
\end{equation}
where $\mu_{\max}$ is the largest generalised eigenvalue. The reverse inequality follows by exhibiting a feasible point that attains it: let $u_{\max}\neq 0$ satisfy $A u_{\max}=\mu_{\max}B u_{\max}$, and rescale so that $u_{\max}^\top B u_{\max}=\eta^2$. Then $u_{\max}^\top A u_{\max}=\mu_{\max}\eta^2$, attaining the bound.

Since $B$ is invertible, $Au=\mu Bu$ is equivalent to $B^{-1}A u=\mu u$, so $\mu_{\max}=\lambda_{\max}(B^{-1}A)$. The Cholesky decomposition $B=LL^\top$ with $L$ invertible, combined with the substitution $z=L^\top u$, identifies the generalised eigenvalues of $(A,B)$ with the eigenvalues of the symmetric positive semidefinite matrix $L^{-1}A L^{-\top}$, so they are real and non-negative. The optimal value of \eqref{eq:local-problem} is therefore $\eta^2\lambda_{\max}(B^{-1}A)$, attained at any top generalised eigenvector of $(A,B)$ scaled to satisfy $u^{*\top}B u^*=\eta^2$.

%\subsection{Proof of Proposition~\ref{prop:solution}}
%If $A=0$, the value of \eqref{eq:local-problem} is zero and every
%feasible point is optimal. Otherwise set $z=B^{1/2}u$, so that
%$u=B^{-1/2}z$. The constraint becomes $\|z\|_2\leq\eta$, and the objective
%is
%\[
% u^\top Au=z^\top B^{-1/2}AB^{-1/2}z.
%\]
%  Let $C=B^{-1/2}AB^{-1/2}$. Since $C$ is symmetric positive semidefinite,
%the Rayleigh--Ritz theorem gives
%\[
% \max_{\|z\|_2\leq\eta} z^\top Cz
% =\eta^2\lambda_{\max}(C)
% =\eta^2\lambda_{\max}(A,B).
%\]
% The maximisers are $z^*=\eta v$, where $v$ is a unit top eigenvector of
%$C$. Transforming back, $u^*=B^{-1/2}z^*$ satisfies
%$Au^*=\lambda_{\max}(A,B)Bu^*$  and   $u^{*\top}Bu^*=\eta^2$.

  \subsection{Proof of Proposition~\ref{prop:chart-inv}}
By the chain rule, $\tilde J_M=J_MT$  and   $\tilde J_P=J_PT$, which gives
\eqref{eq:congruence}. If  $B$ is invertible,   then
\[
(T^\top BT)^{-1}=T^{-1}B^{-1}T^{-\top},
\]
and therefore
\[
 \tilde B^{-1}\tilde A
 =(T^\top BT)^{-1}(T^\top AT)
 =T^{-1}B^{-1}T^{-\top}T^\top AT
 =T^{-1}B^{-1}AT,
\]
  so \eqref{eq:similarity} holds and $\tilde B^{-1}\tilde A$ is similar to
$B^{-1}A$. Equivalently,
\[
 \det(\tilde A-\lambda\tilde B)=\det(T)^2\det(A-\lambda B),
\]
so the generalised eigenvalues are preserved. If
$Au^*=\lambda^*Bu^*$ and $\tilde u^*=T^{-1}u^*$, then
\[
 \tilde A\tilde u^*=T^\top Au^*
 =\lambda^*T^\top Bu^*
 =\lambda^*\tilde B\tilde u^*.
\]
Finally, $\tilde J_M\tilde u^*=(J_MT)(T^{-1}u^*)=J_Mu^*$.

\subsection{Proof of Theorem~\ref{thm:flip}}
Fix $x$ and use the shorthand $J_M=J_M(x)$ and $B=B(x)$ introduced in the
statement of the theorem. Let $a=J_M^\top w$. The quantity to be maximised is
$|a^\top u|$ subject
to $u^\top Bu\leq\eta^2$. Under the $B$-inner  product
$\langle u,v\rangle_B=u^\top Bv$,
\[
 a^\top u
 =\langle B^{-1}a,u\rangle_B
 \leq
 \sqrt{a^\top B^{-1}a}\sqrt{u^\top Bu}
 \leq
 \eta\sqrt{a^\top B^{-1}a}.
\]
  Since $a^\top B^{-1}a=w^\top J_MB^{-1}J_M^\top w=\Sigma_w(x)$, the
supremum of $|w^\top J_Mu|$  is at most   $\eta\sqrt{\Sigma_w(x)}$. If
$\Sigma_w(x)>0$, equality is attained at
\[
 u=\pm\eta\frac{B^{-1}J_M^\top w}{\sqrt{\Sigma_w(x)}}.
\]
Choosing the sign $-\operatorname{sign}(s_0)$ gives the direction toward
the decision boundary. If $\Sigma_w(x)=0$, then $a=0$ and the supremum is
zero. The linearised score crosses the boundary precisely when the
maximum achievable movement toward the boundary exceeds
$\gamma_w(x)=|s_0|$, which gives \eqref{eq:flip-condition}.

\subsection{Proof of Theorem~\ref{thm:linearization-error}}
\label{app:p:proof-r-eta}

Let
\[
R_M(u):=E_M(u)-E_M(0)-J_Mu .
\]
For fixed $u$, define the curve
\[
\phi(t):=E_M(tu), \qquad t\in[0,1].
\]
Assume that the line segment $\{tu:t\in[0,1]\}$ is contained in the local chart $U$. Since
\[
J_M=\left.\frac{\partial E_M}{\partial u}\right|_{u=0},
\]
the chain rule gives
\[
\phi'(t)=\frac{\partial E_M}{\partial u}(tu)u
\]
and
\[
\phi''(t)
=
\frac{\partial^2 E_M}{\partial u^2}(tu)[u,u],
\]
where
\[
\frac{\partial^2 E_M}{\partial u^2}(v)
:
\mathbb R^q\times \mathbb R^q
\to
\mathbb R^{d_M}
\]
is the second derivative of $E_M$ with respect to the local coordinate $u$, interpreted as
a bilinear map. Using the fundamental theorem of calculus twice, applied componentwise to the
vector-valued curve $\phi$, we have
\[
\begin{aligned}
\phi(1)-\phi(0)-\phi'(0)
&=
\int_0^1\{\phi'(s)-\phi'(0)\}\,ds \\
&=
\int_0^1\int_0^s \phi''(t)\,dt\,ds \\
&=
\int_0^1(1-t)\phi''(t)\,dt .
\end{aligned}
\]
Therefore,
\[
R_M(u)
=
\int_0^1(1-t)
\frac{\partial^2 E_M}{\partial u^2}(tu)[u,u]\,dt .
\]

For $\eta>0$, define
\[
U_\eta
:=
\{tu:t\in[0,1],\ u^\top Bu\le \eta^2\}.
\]
For $\eta$ sufficiently small, $U_\eta\subset U$. Define
\[
C_M
:=
\sup_{v\in U_\eta}
\left\|
\frac{\partial^2 E_M}{\partial u^2}(v)
\right\|_{\mathrm{bil}},
\]
where
\[
\left\|
\frac{\partial^2 E_M}{\partial u^2}(v)
\right\|_{\mathrm{bil}}
:=
\sup_{\|a\|_2\le 1,\ \|b\|_2\le 1}
\left\|
\frac{\partial^2 E_M}{\partial u^2}(v)[a,b]
\right\|_2 .
\]
Since $E_M$ is $C^2$, this quantity is finite for sufficiently small $\eta$. Hence
\[
\begin{aligned}
\|R_M(u)\|_2
&\le
\int_0^1(1-t)
\left\|
\frac{\partial^2 E_M}{\partial u^2}(tu)[u,u]
\right\|_2\,dt \\
&\le
\int_0^1(1-t)C_M\|u\|_2^2\,dt \\
&=
\frac{C_M}{2}\|u\|_2^2 .
\end{aligned}
\]
Therefore, using Cauchy--Schwarz and $\|w\|_2=1$,
\[
\begin{aligned}
r_M(\eta)
&=
\sup_{u^\top Bu\le \eta^2}
\left|
w^\top\{E_M(u)-E_M(0)-J_Mu\}
\right| \\
&=
\sup_{u^\top Bu\le \eta^2}
|w^\top R_M(u)| \\
&\le
\sup_{u^\top Bu\le \eta^2}
\|w\|_2\|R_M(u)\|_2 \\
&\le
\frac{C_M}{2}
\sup_{u^\top Bu\le \eta^2}
\|u\|_2^2 .
\end{aligned}
\]
Since $B\succ0$, let
\[
\beta:=\lambda_{\min}(B)>0 .
\]
Then $B\succeq \beta I$, so
\[
u^\top Bu\ge \beta\|u\|_2^2 .
\]
Thus, on the feasible set $u^\top Bu\le \eta^2$,
\[
\|u\|_2^2\le \frac{\eta^2}{\beta}.
\]
Substituting gives
\[
r_M(\eta)
\le
\frac{C_M\eta^2}{2\beta}
=
O(\eta^2/\beta),
\qquad
\eta\downarrow0 .
\]

The same argument applied to $E_P$ gives, with
\[
R_P(u):=E_P(u)-E_P(0)-J_Pu,
\]
the bound
\[
\|R_P(u)\|_2
\le
\frac{C_P}{2}\|u\|_2^2,
\]
where
\[
C_P
:=
\sup_{v\in U_\eta}
\left\|
\frac{\partial^2 E_P}{\partial u^2}(v)
\right\|_{\mathrm{bil}} .
\]
Consequently,
\[
\left|
\|E_P(u)-E_P(0)\|_2-\|J_Pu\|_2
\right|
\le
\|R_P(u)\|_2
\le
\frac{C_P}{2}\|u\|_2^2 .
\]
Since
\[
\|J_Pu\|_2^2
=
u^\top J_P^\top J_Pu
=
u^\top Bu,
\]
replacing the exact proxy displacement
$\|E_P(u)-E_P(0)\|_2$ by its linearised version
$(u^\top Bu)^{1/2}$ incurs a second-order error in $\|u\|_2$.
This proves the claimed finite-radius error bound.

\subsection{Proof of Theorem~\ref{thm:sigma-lambda}}
  Set $C=J_MB^{-1/2}$. Then
\[
 \Sigma_w(x)=w^\top CC^\top w.
\]
For unit $w$,  the Rayleigh quotient bound   gives
\[
 \Sigma_w(x)\leq \lambda_{\max}(CC^\top).
\]
  The nonzero eigenvalues of $CC^\top$ and $C^\top C$ coincide, and
\[
 C^\top C=B^{-1/2}J_M^\top J_MB^{-1/2}=B^{-1/2}AB^{-1/2}.
\]
  Therefore
\[
 \lambda_{\max}(CC^\top)
 =\lambda_{\max}\left(B^{-1/2}AB^{-1/2}\right)
 =\lambda^*(x),
\]
which proves \eqref{eq:sigma-leq-lambda}. Equality holds when $w$ is a
top eigenvector of $CC^\top=J_MB^{-1}J_M^\top$.

\subsection{Proof of Theorem~\ref{thm:Aeta-margin}}
 By Theorem~\ref{thm:sigma-lambda},   $\Sigma_w(x)\leq\lambda^*(x)$ whenever
Theorem~\ref{thm:flip} applies. If $Z_w(x)<\eta$ and
$\lambda^*(x)\leq\Lambda$, then
\[
 \gamma_w(x)<\eta\sqrt{\Sigma_w(x)}
 \leq \eta\sqrt{\lambda^*(x)}
 \leq \eta\sqrt{\Lambda}.
\]
Thus
\[
 \{Z_w(x)<\eta\}\subseteq
 \{\gamma_w(x)<\eta\sqrt{\Lambda}\}\cup\{\lambda^*(x)>\Lambda\}.
\]
Taking probabilities gives \eqref{eq:Aeta-bound-tail}. The almost-sure
bounded case and the quantile form are immediate specialisations.

\subsection{Proof of Theorem~\ref{thm:Aeta-empirical}}
  For fixed $(w,b)$, the random variables $Z_w(x_1),\ldots,Z_w(x_n)$ are
 i.i.d.  \ with left-limit distribution function
$F_{Z_w}^{-}(\eta)=\mathcal{A}_w(\eta)$. The empirical curve
$\hat{\mathcal{A}}_{n,w}$ is the corresponding empirical distribution
function with strict threshold. The Dvoretzky--Kiefer--Wolfowitz
inequality with Massart's sharp constant gives, for every $\epsilon>0$,
\[
 \mathbb{P}\left(
 \sup_{\eta>0}\left|\hat{\mathcal{A}}_{n,w}(\eta)-\mathcal{A}_w(\eta)\right|>
 \epsilon\right)
 \leq 2\exp(-2n\epsilon^2).
\]
  Setting the right-hand side equal to $\delta$ and  solving for $\epsilon$
  gives \eqref{eq:Aeta-dkw}.

\subsection{Proof of Theorem~\ref{thm:vc-uniform}}

Write $P$ for expectation under the population distribution of $X$ and
$P_n$ for the empirical measure. For each pair $(\theta,\eta)$, the function
$G_{\theta,\eta}$ is binary-valued, and
\[
P_nG_{\theta,\eta}=\widehat{\mathcal A}_{n,\theta}(\eta),
\qquad
PG_{\theta,\eta}=\mathcal A_\theta(\eta).
\]
Therefore the left-hand side of \eqref{eq:vc-uniform-bound} is exactly
\[
\sup_{g\in\mathcal G_\Theta}|(P_n-P)g|.
\]
The standard VC uniform convergence inequality for a binary class of VC
dimension $v$ gives, with probability at least $1-\delta$,
\[
\sup_{g\in\mathcal G_\Theta}|(P_n-P)g|
\leq
C\sqrt{\frac{v\log(en)+\log(1/\delta)}{n}},
\]
for a universal constant $C$; see, for example,
\citet{vapnik1971uniform} or \citet{anthony2009neural}. Substituting the
identities above gives the result. The point of the theorem is that the
supremum is taken over both the readout $\theta$ and the budget $\eta$, so the
bound remains valid even if the readout is selected in a data-dependent way.

\subsection{Proof of Proposition~\ref{prop:vc-affine}}

Fix $(\theta,\eta)$, with $\theta=(w,b)$. Let
\[
\tilde z(x):=(z(x),1)\in\mathbb R^{d_M+1},
\qquad
\tilde w:=(w,b)\in\mathbb R^{d_M+1}.
\]
The attackability event is
\[
|w^\top z(x)+b|<\eta\sqrt{w^\top S(x)w}.
\]

Since $S(x)\succeq0$, the expression under the square root is nonnegative.
Both sides of the strict inequality are therefore nonnegative, and squaring
is an equivalent transformation; the case $w^\top S(x)w=0$ is handled the
same way, since then both the original and the squared inequalities fail
trivially. Hence the event is equivalently
\begin{equation}\label{eq:proof-vc-quadratic}
(\tilde w^\top\tilde z(x))^2-\eta^2w^\top S(x)w<0.
\end{equation}

We now write the quadratic comparison as a linear threshold after an explicit
feature lift. This is the same elementary lifting used for classification by
polynomial surfaces, specialized here to degree two and augmented with the
entries of $S(x)$ \citep{anthony1995classification}. Write $z_i(x)$ for the
coordinates of $z(x)$ and $S_{ij}(x)$ for the entries of $S(x)$. Define
\begin{equation}\label{eq:explicit-lift-Psi}
\begin{aligned}
\Psi(x):=\bigl(&1,\ z_1(x),\ldots,z_{d_M}(x),\\
&\bigl(z_i(x)z_j(x)\bigr)_{1\leq i\leq j\leq d_M},\quad
\bigl(S_{ij}(x)\bigr)_{1\leq i\leq j\leq d_M}\bigr)\in\mathbb R^m.
\end{aligned}
\end{equation}
This feature vector consists of a constant coordinate, the ordinary embedding
coordinates, their distinct quadratic products, and the distinct entries of
$S(x)$. Expanding the two terms in \eqref{eq:proof-vc-quadratic} gives
\begin{align*}
&(w^\top z(x)+b)^2-\eta^2w^\top S(x)w \\
&\quad = b^2
+2b\sum_{i=1}^d w_i z_i(x)
+\sum_{i=1}^d w_i^2 z_i(x)^2
+2\sum_{1\leq i<j\leq d}w_iw_j z_i(x)z_j(x) \\
&\qquad
-\eta^2\sum_{i=1}^{d_M} w_i^2 S_{ii}(x)
-2\eta^2\sum_{1\leq i<j\leq d_M}w_iw_j S_{ij}(x).
\end{align*}
Therefore there is a coefficient vector
$a_{\theta,\eta}\in\mathbb R^m$, depending only on $(w,b,\eta)$, such that
\[
(w^\top z(x)+b)^2-\eta^2w^\top S(x)w
= a_{\theta,\eta}^\top\Psi(x).
\]
Equivalently, \eqref{eq:proof-vc-quadratic} is precisely
\[
a_{\theta,\eta}^\top\Psi(x)<0.
\]
Thus each attackability indicator in $\mathcal G_\Theta$ is the pullback, by
$\Psi$, of a homogeneous linear threshold in $\mathbb R^m$.

It remains only to count the dimension of the lifted feature space. The
feature map $\Psi$ has one constant coordinate, $d_M$ linear embedding
coordinates,
\[
\frac{d_M(d_M+1)}2
\]
distinct quadratic products $z_i(x)z_j(x)$, and
\[
\frac{d_M(d_M+1)}2
\]
distinct entries of the symmetric matrix $S(x)$. Hence
\[
m=1+d_M+\frac{d_M(d_M+1)}2+\frac{d_M(d_M+1)}2=(d_M+1)^2.
\]
Each indicator $G_{\theta,\eta}\in\mathcal G_\Theta$ takes the form
$\mathbbm{1}\{a_{\theta,\eta}^\top\Psi(x)<0\}$ for some
$a_{\theta,\eta}\in\mathbb R^m$ depending on $(\theta,\eta)$. Therefore
$\mathcal G_\Theta$ is contained in the class of homogeneous linear threshold
sets on the lifted feature $\Psi(x)$. Since VC dimension is monotone under
set inclusion, $\VC(\mathcal G_\Theta)$ is at most that of the ambient class
of homogeneous halfspaces in $\mathbb R^m$, which is at most $m$ (one less
than the bound $m+1$ for inhomogeneous halfspaces). Therefore
\[
\VC(\mathcal G_\Theta)\leq m=(d_M+1)^2.
\]

\subsection{Proof of Proposition~\ref{prop:vc-improved}}

By the squaring in the proof of Proposition~\ref{prop:vc-affine}
(equation~\eqref{eq:proof-vc-quadratic}), the attackability event is
equivalent to $(w^\top z(x)+b)^2-\eta^2 w^\top S(x)w<0$. Set $\lambda=\eta^2$;
the event becomes $\lambda\,w^\top S(x)w-(w^\top z(x)+b)^2>0$, so
$\mathcal G_\Theta$ is contained in the class indexed by
$(w,b,\lambda)\in\mathbb R^{d_M+2}$ defined by this single inequality, and VC
dimension is monotone under inclusion. Represent an instance by
\[
T(x)=\bigl(z(x),\bigl(S_{ij}(x)\bigr)_{1\leq i\leq j\leq d_M}\bigr).
\]
The membership condition is a single polynomial inequality. Expanding,
\[
\lambda\,w^\top S(x)w-(w^\top z(x)+b)^2
=\sum_{i,j}\lambda\,w_iw_jS_{ij}(x)
-\sum_{i,j}w_iw_jz_i(x)z_j(x)-2b\sum_i w_iz_i(x)-b^2,
\]
and, counting the concept variables $(w,b,\lambda)$ and the entries of $T(x)$
together, the monomials $\lambda\,w_iw_jS_{ij}(x)$ and $w_iw_jz_i(x)z_j(x)$
each have degree $4$, while $b\,w_iz_i(x)$ has degree $3$ and $b^2$ degree
$2$; the total degree is therefore $4$. By
\citet[Theorem~2.2]{goldberg1995bounding},
\[
\VC(\mathcal G_\Theta)
\leq 2(d_M+2)\log_2(8e\cdot 4\cdot 1)
=
2(d_M+2)\log_2(32e).
\]
This bound is set by the number of parameters $d_M+2$, the single predicate,
and the degree $4$, not by the number of entries of $T(x)$, so
$\VC(\mathcal G_\Theta)=O(d_M)$.

\subsection{Proof of Corollary~\ref{cor:affine-vc-uniform}}

Combine Theorem~\ref{thm:vc-uniform} with Proposition~\ref{prop:vc-improved}.
The proposition supplies the bound $\VC(\mathcal G_\Theta)=O(d_M)$;
substituting this into \eqref{eq:vc-uniform-bound} and absorbing numerical
constants into $C$ gives \eqref{eq:affine-vc-uniform}.

\subsection{Proof of Theorem~\ref{thm:fat-margin}}

\begin{proof}
The proof uses a covering-number notation different from the one in
Proposition~\ref{prop:cover-margin}. There, $\mathcal N_\infty(\epsilon,\mathcal F)$
denoted the full-domain supremum-norm covering number, defined through the
distance $d_\infty(f,g)=\sup_{z\in\mathcal Z}|f(z)-g(z)|$ over the whole
domain. Theorems~10.4 and~12.13 of \citet{anthony2009neural} are instead
stated in terms of an \emph{empirical} supremum-norm covering number, which
we now define. For a class $\mathcal F$ of real-valued functions on a domain
$\mathcal Z$, a positive integer $N$, and $\epsilon>0$, the empirical
supremum-norm covering number is
\[
\mathcal N_\infty(\epsilon,\mathcal F,N)
:=\sup_{(z_1,\ldots,z_N)\in\mathcal Z^N}
\mathcal N(\epsilon,\mathcal F,(z_1,\ldots,z_N)),
\]
where $\mathcal N(\epsilon,\mathcal F,(z_1,\ldots,z_N))$ is the smallest
cardinality of a finite set $\mathcal C\subseteq\mathcal F$ such that for
every $f\in\mathcal F$ there is a $g\in\mathcal C$ with
$\max_{i\leq N}|f(z_i)-g(z_i)|\leq\epsilon$. The empirical version is
bounded above by the full-domain version,
$\mathcal N_\infty(\epsilon,\mathcal F,N)\leq\mathcal N_\infty(\epsilon,\mathcal F)$,
but is in general much smaller. The third argument $N$ distinguishes the
empirical version from the full-domain version $\mathcal N_\infty(\epsilon,\mathcal F)$
of Section~\ref{sec:margins}.

The proof proceeds in three steps. First, apply the margin-covering theorem
of \citet[Theorem~10.4]{anthony2009neural} to the class $\mathcal M_\eta$
at scale $\rho$. For every $\epsilon>0$,
\[
\mathbb P\left\{\exists\theta\in\Theta:
R_\eta(\theta)>\widehat R_{n,\eta,\rho}(\theta)+\epsilon\right\}
\leq
2\mathcal N_\infty(\rho/2,\pi_\rho(\mathcal M_\eta),2n)
\exp\left(-\frac{\epsilon^2 n}{8}\right).
\]
Here $\pi_\rho$ is the clipping map of \citet[Section~10.4]{anthony2009neural},
which restricts function values to a $\rho$-neighbourhood of the decision
threshold (zero in our margin convention); the empirical covering number on
the right-hand side is taken over samples of size $2n$, as required by the
symmetrisation argument in the proof of Theorem~10.4.

Second, the fat-shattering covering estimate of
\citet[Theorem~12.13]{anthony2009neural} bounds this empirical covering
number in terms of the fat-shattering dimension of $\pi_\rho(\mathcal M_\eta)$
at scale $\rho/8$. Since clipping does not increase the fat-shattering
dimension, $\fat_{\rho/8}(\pi_\rho(\mathcal M_\eta))\leq\fat_{\rho/8}(\mathcal M_\eta)=d_\rho$,
and Theorem~12.13 gives
\[
\log\mathcal N_\infty(\rho/2,\pi_\rho(\mathcal M_\eta),2n)
\leq Cd_\rho\log^2(CMn/\rho)
\]
for a universal constant $C$, after absorbing numerical constants.

Third, set
\[
\epsilon=\sqrt{\frac{8}{n}\left(\log\mathcal N_\infty(\rho/2,\pi_\rho(\mathcal M_\eta),2n)+\log(2/\delta)\right)}.
\]
This choice makes the displayed probability at most $\delta$. Substituting
the covering estimate from the second step into this expression for
$\epsilon$ and absorbing universal numerical constants into $C$ gives, with
probability at least $1-\delta$, uniformly over $\theta\in\Theta$,
\[
R_\eta(\theta)
\leq
\widehat R_{n,\eta,\rho}(\theta)
+C\sqrt{\frac{d_\rho\log^2(CMn/\rho)+\log(1/\delta)}{n}},
\]
which is \eqref{eq:fat-margin-bound}.
\end{proof}

\subsection{Proof of Proposition~\ref{prop:cover-margin}}

Fix two readouts $\theta=(w,b)$ and $\theta'=(w',b')$. For any labelled point
$(x,y)$, the difference between the adjusted margins is bounded by
\begin{align*}
&\left|m_{\eta,\theta}(x,y)-m_{\eta,\theta'}(x,y)\right|\\
&\qquad\leq
\|z(x)\|_2\|w-w'\|_2+|b-b'|
+\eta\left|\sqrt{w^\top S(x)w}-\sqrt{w'^\top S(x)w'}\right|.
\end{align*}
The first two terms are the ordinary Lipschitz bound for affine scores. For
the adversarial displacement term, use $S(x)\succeq0$ and
$\|S(x)\|_{\mathrm{op}}\leq\Lambda$. Then
\[
\sqrt{w^\top S(x)w}=\|S(x)^{1/2}w\|_2,
\]
and by the reverse triangle inequality,
\begin{align*}
&\left|\sqrt{w^\top S(x)w}-\sqrt{w'^\top S(x)w'}\right| \\
&\quad=\left|\|S(x)^{1/2}w\|_2-\|S(x)^{1/2}w'\|_2\right| \\
&\quad\leq \|S(x)^{1/2}(w-w')\|_2
\leq \sqrt\Lambda\|w-w'\|_2.
\end{align*}
Using $\|z(x)\|_2\leq R$, we obtain the uniform Lipschitz bound
\[
\left|m_{\eta,\theta}(x,y)-m_{\eta,\theta'}(x,y)\right|
\leq (R+\eta\sqrt\Lambda)\|w-w'\|_2+|b-b'|.
\]
Set $L:=R+\eta\sqrt\Lambda$. If the $w$-parameters are covered at Euclidean
radius $\epsilon/(2L)$ and the bias interval is covered at radius
$\epsilon/2$, then the corresponding adjusted-margin functions are covered in
supremum norm at radius $\epsilon$.

The Euclidean ball $\{w:\|w\|_2\leq W\}\subset\mathbb R^{d_M}$ has an
$\epsilon/(2L)$-net of size at most
\[
\left(1+\frac{4WL}{\epsilon}\right)^{d_M},
\]
by the standard volumetric covering bound. The interval $[-B_0,B_0]$ has an
$\epsilon/2$-net of size at most
\[
1+\frac{4B_0}{\epsilon}.
\]
Multiplying the two covering numbers gives \eqref{eq:cover-margin}.
Finally, for any $\theta\in\Theta_{W,B_0}$,
\[
|m_{\eta,\theta}(x,y)|
\leq |w^\top z(x)|+|b|+\eta\sqrt{w^\top S(x)w}
\leq WR+B_0+\eta W\sqrt\Lambda=M_\eta,
\]
which proves \eqref{eq:M-bound}.

\subsection{Proof of Corollary~\ref{cor:norm-margin}}

We apply the margin-covering theorem of
\citet[Theorem~10.4]{anthony2009neural} to the adjusted-margin class
$\mathcal M_\eta$ at scale $\rho$, exactly as in the proof of
Theorem~\ref{thm:fat-margin}. The bound there is stated in terms of the
empirical supremum-norm covering number
$\mathcal N_\infty(\rho/2,\pi_\rho(\mathcal M_\eta),2n)$, which is bounded
above by the full-domain covering number $\mathcal N_\infty(\rho/2,\mathcal M_\eta)$
of Proposition~\ref{prop:cover-margin} (clipping a cover of $\mathcal M_\eta$
gives a cover of $\pi_\rho(\mathcal M_\eta)$, and the empirical version is
dominated by the full-domain version). Substituting the covering bound of
Proposition~\ref{prop:cover-margin} at scale $\rho/2$ into Theorem~10.4 and
calibrating as in the proof of Theorem~\ref{thm:fat-margin} gives, with
probability at least $1-\delta$, uniformly over $\theta\in\Theta_{W,B_0}$,
\[
R_\eta(\theta)
\leq \widehat R_{n,\eta,\rho}(\theta)
+C\sqrt{\frac{\log\mathcal N_\infty(\rho/2,\mathcal M_\eta)+\log(1/\delta)}{n}}.
\]
Proposition~\ref{prop:cover-margin} gives
\[
\log\mathcal N_\infty(\rho/2,\mathcal M_\eta)
\leq d_M\log\left(1+\frac{8W(R+\eta\sqrt\Lambda)}{\rho}\right)
+\log\left(1+\frac{8B_0}{\rho}\right),
\]
and absorbing numerical constants into the universal constant $C$ yields
\eqref{eq:norm-margin-bound}.

\subsection{Proof of Theorem~\ref{thm:rademacher-margin}}

Let
\[
\mathcal L:=\{(x,y)\mapsto y(w^\top z(x)+b):(w,b)\in\Theta_{W,B_0}\}
\]
and
\[
\mathcal Q:=\{x\mapsto \sqrt{w^\top S(x)w}:\|w\|_2\leq W\}.
\]
By subadditivity of empirical Rademacher complexity,
\[
\widehat{\mathfrak R}_n(\mathcal M_\eta)
\leq
\widehat{\mathfrak R}_n(\mathcal L)
+\eta\widehat{\mathfrak R}_n(\mathcal Q).
\]
For the linear part, write \(\tilde w=(w,b)\) and
\(\tilde z_i=(Y_i z(X_i),Y_i)\). Since
\(\|\tilde w\|_2\leq\sqrt{W^2+B_0^2}\),
\begin{align*}
\widehat{\mathfrak R}_n(\mathcal L)
&=\mathbb E_\sigma\left[
\sup_{(w,b)\in\Theta_{W,B_0}}
\frac1n\sum_{i=1}^n\sigma_i\tilde w^\top\tilde z_i
\,\middle|\,\mathcal S_n\right] \\
&\leq
\frac{\sqrt{W^2+B_0^2}}{n}
\mathbb E_\sigma\left\|\sum_{i=1}^n\sigma_i\tilde z_i\right\|_2 \\
&\leq
\frac{\sqrt{W^2+B_0^2}}{n}
\left(\sum_{i=1}^n\|\tilde z_i\|_2^2\right)^{1/2} \\
&=
\frac{\sqrt{W^2+B_0^2}}{n}
\left(\sum_{i=1}^n(\|z(X_i)\|_2^2+1)\right)^{1/2}.
\end{align*}
For the sensitivity part, write $A_i=S(X_i)^{1/2}$, so that
$\sqrt{w^\top S(X_i)w}=\|A_iw\|_2$ for each $i$. Writing
$h_w(x)=S(x)^{1/2}w\in\mathbb R^d$ for the vector-valued map indexed by $w$,
the displacement function $w\mapsto\sqrt{w^\top S(X_i)w}$ is the composition
of the $1$-Lipschitz scalar map $u\mapsto\|u\|_2$ with the vector-valued
map $h_w(X_i)=A_iw$. The Hilbert-space vector contraction inequality for
Rademacher complexities \citep{maurer2016vector}, applied to this
composition, gives
\[
\widehat{\mathfrak R}_n(\mathcal Q)
\leq
\frac{C}{n}\,
\mathbb E_g\sup_{\|w\|_2\leq W}
\sum_{i=1}^n g_i^\top A_iw,
\]
where the \(g_i\)'s are independent standard Gaussian vectors in the target
space and \(C\) is universal. Therefore
\begin{align*}
\widehat{\mathfrak R}_n(\mathcal Q)
&\leq
\frac{CW}{n}\,
\mathbb E_g\left\|\sum_{i=1}^nA_i^\top g_i\right\|_2 \\
&\leq
\frac{CW}{n}
\left(\mathbb E_g\left\|\sum_{i=1}^nA_i^\top g_i\right\|_2^2\right)^{1/2} \\
&=\frac{CW}{n}
\left(\sum_{i=1}^n\operatorname{tr}(A_i^\top A_i)\right)^{1/2} \\
&=
CW\sqrt{\frac{\overline T_n}{n}}.
\end{align*}
Combining the two estimates proves \eqref{eq:rad-complexity-bound}.

It remains to pass from a bound on the Rademacher complexity of
$\mathcal M_\eta$ to a bound on the robust risk $R_\eta(\theta)$. Three
ingredients are needed.

First, the ramp loss. Let $\varphi_\rho:\mathbb R\to[0,1]$ be defined by
\[
\varphi_\rho(t)=
\begin{cases}
1, & t\leq 0,\\
1-t/\rho, & 0<t<\rho,\\
0, & t\geq \rho,
\end{cases}
\]
which is $1/\rho$-Lipschitz and takes values in $[0,1]$, and satisfies the
two-sided inequality
\[
\mathbbm{1}\{t\leq 0\}\leq\varphi_\rho(t)\leq\mathbbm{1}\{t\leq\rho\}.
\]
Let $\mathcal G=\varphi_\rho\circ\mathcal M_\eta$ be the ramp-loss class.

Second, the standard empirical Rademacher generalization bound for bounded
losses \citep{bartlett2002rademacher} gives, with probability at least
$1-\delta$, uniformly over $g\in\mathcal G$,
\[
\mathbb E g(X,Y)\leq \frac1n\sum_{i=1}^n g(X_i,Y_i)
+2\widehat{\mathfrak R}_n(\mathcal G)+C\sqrt{\frac{\log(1/\delta)}{n}}.
\]
Specialising to $g=\varphi_\rho\circ m_{\eta,\theta}$ for each
$\theta\in\Theta$ gives, uniformly over $\theta$ with the same probability,
\[
\mathbb E\varphi_\rho(m_{\eta,\theta}(X,Y))
\leq \frac1n\sum_{i=1}^n\varphi_\rho(m_{\eta,\theta}(X_i,Y_i))
+2\widehat{\mathfrak R}_n(\mathcal G)+C\sqrt{\frac{\log(1/\delta)}{n}}.
\]

Third, the Ledoux--Talagrand contraction inequality for Rademacher averages,
applied to the $1/\rho$-Lipschitz function $\varphi_\rho$ after centering
(the constant $\varphi_\rho(0)$ does not affect the Rademacher average),
gives
\[
\widehat{\mathfrak R}_n(\mathcal G)=
\widehat{\mathfrak R}_n(\varphi_\rho\circ\mathcal M_\eta)
\leq \frac{1}{\rho}\widehat{\mathfrak R}_n(\mathcal M_\eta).
\]
Substituting the bound on $\widehat{\mathfrak R}_n(\mathcal M_\eta)$ from
\eqref{eq:rad-complexity-bound} into the previous equation and absorbing
numerical constants into $C$ gives, uniformly over $\theta\in\Theta_{W,B_0}$,
\[
\mathbb E\varphi_\rho(m_{\eta,\theta}(X,Y))
\leq \frac1n\sum_{i=1}^n\varphi_\rho(m_{\eta,\theta}(X_i,Y_i))
+\frac{2}{\rho}\widehat{\mathfrak R}_n(\mathcal M_\eta)
+C\sqrt{\frac{\log(1/\delta)}{n}}.
\]
Finally, the two-sided ramp-loss inequality converts the expected ramp loss
to the population robust risk and the empirical ramp loss to the empirical
$\rho$-margin error:
\[
R_\eta(\theta)
=\mathbb P\{m_{\eta,\theta}(X,Y)\leq 0\}
\leq \mathbb E\varphi_\rho(m_{\eta,\theta}(X,Y)),
\]
\[
\frac1n\sum_{i=1}^n\varphi_\rho(m_{\eta,\theta}(X_i,Y_i))
\leq \frac1n\sum_{i=1}^n\mathbbm{1}\{m_{\eta,\theta}(X_i,Y_i)\leq\rho\}
=\widehat R_{n,\eta,\rho}(\theta).
\]
Combining these with the previous display gives
\eqref{eq:rademacher-margin-bound}.

The specialisation in equation~\eqref{eq:rademacher-margin-bound-deterministic}
follows from substituting the uniform bounds $\|z(x)\|_2\leq R$ and
$\operatorname{tr}S(x)\leq T$:
\[
\frac1n\sum_{i=1}^n(\|z(X_i)\|_2^2+1)\leq R^2+1,
\qquad
\overline T_n\leq T.
\]
For the rank-$q$ case, $S(x)\succeq0$ implies
$\operatorname{tr}S(x)\leq \operatorname{rank}(S(x))\|S(x)\|_{\mathrm{op}}$,
so $\operatorname{rank}S(x)\leq q$ and $\|S(x)\|_{\mathrm{op}}\leq\Lambda$
together give $T\leq q\Lambda$.

\subsection{Proof of Corollary~\ref{cor:quantile-margin}}

Define the event
\[
E_\beta:=\{(x,y):\|S(x)\|_{\mathrm{op}}\leq\Lambda_\beta\}.
\]
By the choice of $\Lambda_\beta$, $\mathbb P(E_\beta^c)\leq\beta$.

The population robust risk decomposes as
\[
R_\eta(\theta)
=\mathbb P\{m_{\eta,\theta}(X,Y)\leq 0\}
=\mathbb P\{m_{\eta,\theta}(X,Y)\leq 0,\ (X,Y)\in E_\beta\}
+\mathbb P\{m_{\eta,\theta}(X,Y)\leq 0,\ (X,Y)\in E_\beta^c\}.
\]
The second joint probability is bounded by $\mathbb P(E_\beta^c)\leq\beta$,
so
\[
R_\eta(\theta)
\leq \mathbb P\{m_{\eta,\theta}(X,Y)\leq 0,\ (X,Y)\in E_\beta\}+\beta.
\]

It remains to bound the first joint probability. The indicator
$\mathbbm{1}\{m_{\eta,\theta}(X,Y)\leq 0,\ (X,Y)\in E_\beta\}$ is a function of
$(X,Y)$ with values in $\{0,1\}$, and on the event $E_\beta$ the local
sensitivity matrix satisfies $\|S(X)\|_{\mathrm{op}}\leq\Lambda_\beta$ by
definition. The event $E_\beta$ is defined independently of the readout
$\theta$, so restricting the empirical-process argument to the subset
$\{(X,Y)\in E_\beta\}$ does not enlarge the covering number of the
margin class; equivalently, one may replace each $m_{\eta,\theta}$ by
the modified function $m_{\eta,\theta}\cdot\mathbbm 1_{E_\beta}+M\cdot
\mathbbm 1_{E_\beta^c}$, for $M$ a constant exceeding the supremum of
$m_{\eta,\theta}$ on $E_\beta^c$, and the covering number of the modified
class at any scale is bounded by the covering number of the original
class restricted to $E_\beta$, on which $\|S(X)\|_{\mathrm{op}}\leq
\Lambda_\beta$ holds uniformly. The covering-number margin bound used in
the proof of Corollary~\ref{cor:norm-margin} therefore applies with
$\Lambda$ replaced by $\Lambda_\beta$. This gives, uniformly over
$\theta\in\Theta_{W,B_0}$ with probability at least $1-\delta$,
\[
\mathbb P\{m_{\eta,\theta}(X,Y)\leq 0,\ (X,Y)\in E_\beta\}
\leq
\frac1n\sum_{i=1}^n
\mathbbm{1}\{m_{\eta,\theta}(X_i,Y_i)\leq\rho,\ (X_i,Y_i)\in E_\beta\}
+\Delta_{n,\rho,\beta},
\]
where
\[
\Delta_{n,\rho,\beta}
:=C\sqrt{\frac{
 d\log\left(1+\frac{C W(R+\eta\sqrt{\Lambda_\beta})}{\rho}\right)
 +\log\left(1+\frac{C B_0}{\rho}\right)
 +\log(1/\delta)}{n}}.
\]

The empirical term with the additional condition $(X_i,Y_i)\in E_\beta$ is
no larger than the empirical $\rho$-margin error:
\[
\frac1n\sum_{i=1}^n\mathbbm{1}\{m_{\eta,\theta}(X_i,Y_i)\leq\rho,\ (X_i,Y_i)\in E_\beta\}
\leq
\frac1n\sum_{i=1}^n\mathbbm{1}\{m_{\eta,\theta}(X_i,Y_i)\leq\rho\}
=\widehat R_{n,\eta,\rho}(\theta).
\]

Combining the three preceding inequalities gives
\[
R_\eta(\theta)
\leq \widehat R_{n,\eta,\rho}(\theta)+\Delta_{n,\rho,\beta}+\beta
\]
uniformly over $\theta\in\Theta_{W,B_0}$ with probability at least $1-\delta$,
which is \eqref{eq:quantile-margin-bound}.

  \subsection{Proof of Proposition~\ref{prop:finite-search-cover}}

If $\Sigma_w(x)=0$, both sides of the desired bound are zero and the claim
is immediate. Assume henceforth that $\Sigma_w(x)>0$.

The upper bound is immediate from inclusion. The candidates contributing to
$D^{\mathrm{lin}}_{K,\eta}(x;w,b)$ satisfy $\|u_k\|_{B(x)}\leq\eta$, i.e.,
$u_k\in\mathcal B_x(\eta)$, and the linearised worst-case readout
displacement over $\mathcal B_x(\eta)$ is $\eta\sqrt{\Sigma_w(x)}$ by the
closed-form attacker's solution of Theorem~\ref{thm:flip}, so
$D^{\mathrm{lin}}_{K,\eta}(x;w,b)\leq\eta\sqrt{\Sigma_w(x)}$.

For the lower bound, set
\[
u^\star:=-\sigma_x(\eta-\delta)\,B(x)^{-1}J_M(x)^\top w/\sqrt{\Sigma_w(x)}.
\]
The $B(x)$-norm of $u^\star$ satisfies 
\[
\|u^\star\|_{B(x)}^2
=u^{\star\top}B(x)u^\star
=\frac{(\eta-\delta)^2}{\Sigma_w(x)}\,
w^\top J_M(x)B(x)^{-1}B(x)B(x)^{-1}J_M(x)^\top w
=(\eta-\delta)^2,
\]
using $\sigma_x^2=1$, the symmetry of $B(x)^{-1}$, and
$\Sigma_w(x)=w^\top J_M(x)B(x)^{-1}J_M(x)^\top w$. Hence
$\|u^\star\|_{B(x)}=\eta-\delta$ and $u^\star\in\mathcal B_x(\eta-\delta)$.
A parallel computation, using the same identity for $\Sigma_w(x)$, gives
$-\sigma_x w^\top J_M(x)u^\star=(\eta-\delta)\sqrt{\Sigma_w(x)}$.

By the covering assumption, there exists $u_k\in\{u_1,\ldots,u_K\}$ with
$\|u_k-u^\star\|_{B(x)}\leq\delta$. The triangle inequality gives
$\|u_k\|_{B(x)}\leq\|u^\star\|_{B(x)}+\|u_k-u^\star\|_{B(x)}\leq(\eta-\delta)+\delta=\eta$,
so $u_k\in\mathcal B_x(\eta)$ and $u_k$ contributes to
$D^{\mathrm{lin}}_{K,\eta}(x;w,b)$.

The readout displacement at $u_k$ decomposes as
\[
-\sigma_x w^\top J_M(x)u_k
=(\eta-\delta)\sqrt{\Sigma_w(x)}-\sigma_x w^\top J_M(x)(u_k-u^\star).
\]

To bound the second term, we apply Cauchy--Schwarz in the $B(x)$-inner
product $\langle u,v\rangle_{B(x)}:=u^\top B(x)v$, which is a genuine inner
product because $B(x)\succ 0$, with induced norm $\|u\|_{B(x)}$. The
covering condition is stated in this norm, so the argument has to be
phrased in the same geometry. Set $v:=B(x)^{-1}J_M(x)^\top w$. Then, using
the symmetry of $B(x)^{-1}$,
\[
\langle v,u_k-u^\star\rangle_{B(x)}
=v^\top B(x)(u_k-u^\star)
=w^\top J_M(x)(u_k-u^\star),
\]
so the quantity to be bounded is exactly $|\langle v,u_k-u^\star\rangle_{B(x)}|$.
Cauchy--Schwarz in the $B(x)$-inner product gives
\[
|w^\top J_M(x)(u_k-u^\star)|
\leq \|v\|_{B(x)}\cdot\|u_k-u^\star\|_{B(x)}.
\]
The $B(x)$-norm of $v$ is
\[
\|v\|_{B(x)}^2
=v^\top B(x)v
=w^\top J_M(x)B(x)^{-1}J_M(x)^\top w
=\Sigma_w(x),
\]
again using the same identity for $\Sigma_w(x)$ as above. Combining with
$\|u_k-u^\star\|_{B(x)}\leq\delta$ from the covering hypothesis,
\[
|w^\top J_M(x)(u_k-u^\star)|
\leq \sqrt{\Sigma_w(x)}\cdot\|u_k-u^\star\|_{B(x)}
\leq \delta\sqrt{\Sigma_w(x)}.
\]

Therefore
\[
D^{\mathrm{lin}}_{K,\eta}(x;w,b)
\geq -\sigma_x w^\top J_M(x)u_k
\geq (\eta-\delta)\sqrt{\Sigma_w(x)}-\delta\sqrt{\Sigma_w(x)}
=(\eta-2\delta)\sqrt{\Sigma_w(x)}.
\]
Setting $\delta\leq\eta\epsilon$ for $\epsilon\in(0,1/2)$ gives
$D^{\mathrm{lin}}_{K,\eta}(x;w,b)\geq(1-2\epsilon)\eta\sqrt{\Sigma_w(x)}$.

\subsection{Proof of Corollary~\ref{cor:finite-search-robustness}}

By Proposition~\ref{prop:finite-search-cover}, the covering assumption gives
$D^{\mathrm{lin}}_{K,\eta}(x;w,b)\geq(1-2\epsilon)\eta\sqrt{\Sigma_w(x)}$.
The hypothesis $D^{\mathrm{lin}}_{K,\eta}(x;w,b)<\gamma_w(x)$ then
implies
\[
(1-2\epsilon)\eta\sqrt{\Sigma_w(x)}\leq D^{\mathrm{lin}}_{K,\eta}(x;w,b)<\gamma_w(x),
\]
which rearranges to $\eta\sqrt{\Sigma_w(x)}<\gamma_w(x)/(1-2\epsilon)$.

\subsection{Proof of Proposition~\ref{prop:prob-covering}}

The argument has four steps: reduce the covering condition on the
continuous ball $\mathcal B_x(\eta-\delta)$ to a covering condition on a
finite discretisation $\mathcal Z$; bound the probability that a single
random candidate $u_1$ lands near any fixed point $z_i\in\mathcal Z$;
amplify this per-point bound to the joint event over all $K$ candidates
using independence; and take a union bound over the $N$ points of
$\mathcal Z$.

\emph{Step 1: discretisation.} Let
$\mathcal Z=\{z_1,\ldots,z_N\}\subseteq\mathcal B_x(\eta-\delta)$ be a
$\delta/2$-cover of $\mathcal B_x(\eta-\delta)$ in the proxy metric, of
cardinality $N:=N(\delta/2,\mathcal B_x(\eta-\delta))$. Such a cover
exists by the definition of the covering number. The point of
discretising is that we want to reduce the continuous covering condition
to a statement about a finite set of points $z_i$, each of which we can
analyse separately.

Suppose every $z_i$ has at least one candidate within proxy-distance
$\delta/2$, that is, for every $i$ there is some index $k(i)$ with
$\|z_i-u_{k(i)}\|_{B(x)}\leq\delta/2$. Then for any
$u\in\mathcal B_x(\eta-\delta)$, choose $z_i$ with
$\|u-z_i\|_{B(x)}\leq\delta/2$ (possible because $\mathcal Z$ is a
$\delta/2$-cover) and use the candidate $u_{k(i)}$. The triangle
inequality gives
\[
\|u-u_{k(i)}\|_{B(x)}
\leq\|u-z_i\|_{B(x)}+\|z_i-u_{k(i)}\|_{B(x)}
\leq\delta/2+\delta/2=\delta,
\]
so the candidate set $\delta$-covers $\mathcal B_x(\eta-\delta)$. It
therefore suffices to upper-bound the probability of the failure event
\[
F:=\{\exists\, z_i\in\mathcal Z:\text{ no candidate lies within }\delta/2\text{ of }z_i\}.
\]

\emph{Step 2: hit probability for a single candidate.} Fix
$z_i\in\mathcal Z$ and let $\mathcal N_{\delta/2}(z_i):=\{u:\|u-z_i\|_{B(x)}\leq\delta/2\}$
be the proxy ball of radius $\delta/2$ centred at $z_i$. This ball lies
inside $\mathcal B_x(\eta)$, where the density floor applies: for any
$u\in\mathcal N_{\delta/2}(z_i)$, the triangle inequality gives
\[
\|u\|_{B(x)}
\leq\|z_i\|_{B(x)}+\|u-z_i\|_{B(x)}
\leq(\eta-\delta)+\delta/2<\eta.
\]
The probability that a single candidate $u_1$ lands in this ball is
therefore
\[
\mathbb P\{u_1\in\mathcal N_{\delta/2}(z_i)\}
=\int_{\mathcal N_{\delta/2}(z_i)} f(u)\,du
\geq c\cdot\mathrm{vol}\bigl(\mathcal N_{\delta/2}(z_i)\bigr),
\]
using $f\geq c$ throughout the integration domain.

The Lebesgue volume of $\mathcal N_{\delta/2}(z_i)$ is
$v_q(\delta/2)^q/\sqrt{\det B(x)}$. To see this, write
$\mathcal N_{\delta/2}(z_i)=z_i+B(x)^{-1/2}\bigl(v_q\text{-ball of radius }\delta/2\bigr)$:
the proxy ball at $z_i$ is the image of the Euclidean ball of radius
$\delta/2$ under the linear map $B(x)^{-1/2}$, translated by $z_i$.
Translations preserve Lebesgue volume and $B(x)^{-1/2}$ scales it by
$|\det B(x)^{-1/2}|=1/\sqrt{\det B(x)}$. Combining,
\begin{equation}\label{eq:hit-prob}
\mathbb P\{u_1\in\mathcal N_{\delta/2}(z_i)\}
\geq c\cdot v_q(\delta/2)^q/\sqrt{\det B(x)}
=\kappa\, v_q(\delta/2)^q,
\end{equation}
where $\kappa=c/\sqrt{\det B(x)}$.

\emph{Step 3: independence across candidates.} The events
$\{u_k\in\mathcal N_{\delta/2}(z_i)\}_{k=1}^K$ are independent (the $u_k$
are i.i.d.), and each has marginal probability at least
$\kappa v_q(\delta/2)^q$ by~\eqref{eq:hit-prob}. The probability that
none of them lands in $\mathcal N_{\delta/2}(z_i)$ is therefore
\[
\mathbb P\{\text{no candidate lies within }\delta/2\text{ of }z_i\}
=\prod_{k=1}^K \mathbb P\{u_k\notin\mathcal N_{\delta/2}(z_i)\}
\leq\bigl(1-\kappa v_q(\delta/2)^q\bigr)^K.
\]

\emph{Step 4: union bound over discretisation points.} The failure
event $F$ is the union of the $N$ events from Step 3, one per
$z_i\in\mathcal Z$. The union bound gives
\[
\mathbb P(F)
\leq\sum_{i=1}^N \mathbb P\{\text{no candidate within }\delta/2\text{ of }z_i\}
\leq N\bigl(1-\kappa v_q(\delta/2)^q\bigr)^K.
\]
Taking complements and applying the reduction of Step 1 yields the
stated inequality.

\emph{The covering-number rate.} The covering number
$N(\delta/2,\mathcal B_x(\eta-\delta))$ in the proxy metric is of order
$((\eta-\delta)/(\delta/2))^q=O((\eta/\delta)^q)$ for $\delta\leq\eta/2$,
by the standard volumetric bound (the ball of radius $\eta-\delta$
contains at most $((\eta-\delta)/(\delta/2))^q$ disjoint balls of radius
$\delta/4$, each of volume comparable to the candidate ball). Setting
$K=\Theta(\delta^{-q}\log N)$ then drives the failure probability below
any prescribed level, recovering the $K=\Theta(\delta^{-q}\log(1/\delta))$
rate quoted after the proposition.

\subsection{Proof of Proposition~\ref{prop:gradient-search}}

The direction $r_w(x)=-\sigma_x B(x)^{-1}J_M(x)^\top w/\sqrt{\Sigma_w(x)}$
satisfies
\[
\|r_w(x)\|_{B(x)}^2
=\frac{w^\top J_M(x)B(x)^{-1}J_M(x)^\top w}{\Sigma_w(x)}=1,
\]
so each candidate $\alpha_j r_w(x)$ with $\alpha_j\in[0,\eta]$ lies in
$\mathcal B_x(\eta)$ and contributes to $D^{\mathrm{lin}}_{K,\eta}(x;w,b)$.
The readout displacement at $\alpha_j r_w(x)$ is
\[
-\sigma_x w^\top J_M(x)\alpha_j r_w(x)
=\alpha_j\cdot
\frac{w^\top J_M(x)B(x)^{-1}J_M(x)^\top w}{\sqrt{\Sigma_w(x)}}
=\alpha_j\sqrt{\Sigma_w(x)}.
\]
By the covering assumption, the $\alpha_j$ $\delta'$-cover $[0,\eta]$, so
$\max_j\alpha_j\geq\eta-\delta'$. Therefore
\[
D^{\mathrm{lin}}_{K,\eta}(x;w,b)
\geq\max_j\alpha_j\sqrt{\Sigma_w(x)}
\geq(\eta-\delta')\sqrt{\Sigma_w(x)}.
\]

\end{document}